\title{Clustering is semidefinitely not that hard: Nonnegative SDP for manifold disentangling}
\author{\name Mariano Tepper \email mtepper@flatironinstitute.org\\ 
\addr Flatiron Institute, Simons Foundation, NY, USA\\ 
\AND 
\name Anirvan M. Sengupta \email anirvans@physics.rutgers.edu\\
\addr Dept.~of Physics and Astronomy, Rutgers University, NJ, USA\\ 
\addr Flatiron Institute, Simons Foundation, NY, USA\\
\AND 
\name Dmitri Chklovskii \email dchklovskii@flatironinstitute.org\\ 
\addr Flatiron Institute, Simons Foundation, NY, USA\\
\addr NYU Langone Medical Center, New York, NY}
\begin{document} 

\maketitle

\begin{abstract}%   <- trailing '%' for backward compatibility of .sty file
In solving hard computational problems, semidefinite program (SDP) relaxations often play an important role because they come with a guarantee of optimality. Here, we focus on a popular semidefinite relaxation of K-means clustering which yields the same solution as the non-convex original formulation for well segregated datasets. We report an unexpected finding: when data contains (greater than zero-dimensional) manifolds, the SDP solution captures such geometrical structures.
Unlike traditional manifold embedding techniques, our approach does not rely on manually defining a kernel but rather enforces locality via a nonnegativity constraint. We thus call our approach NOnnegative MAnifold Disentangling, or NOMAD. To build an intuitive understanding of its manifold learning capabilities, we develop a theoretical analysis of NOMAD on idealized datasets. While NOMAD is convex and the globally optimal solution can be found by generic SDP solvers with polynomial time complexity, they are too slow for modern datasets. To address this problem, we analyze a non-convex heuristic and present a new, convex and yet efficient, algorithm, based on the conditional gradient method. Our results render NOMAD a versatile, understandable, and powerful tool for manifold learning.
\end{abstract}

\begin{keywords}
  K-means, semidefinite programming, manifolds, conditional gradient method
\end{keywords}

\section{Introduction}

In the quest for an algorithmic theory of biological neural networks, some of the authors have recently proposed a soft $K$-means clustering network that may model insect olfactory processing and other computations \citep{Pehlevan2017olfaction}. This network was derived by performing online optimization on the non-convex $K$-means objective function. Whereas the network dynamics and learning rules are biologically plausible, the nonconvexity of the objective makes it difficult to analyze the solutions and algorithm convergence. 

Here, to understand the solutions computed by the clustering neural network, we consider a convex SDP relaxation of  $K$-means  \citep{Kulis2007,Peng2007_sdk-kmeans,Awasthi2015}. Given data points $\{ \vect{x}_i \}_{i=1}^{n}$ we define the Gramian matrix, $\mat{D}$, such that $(\mat{D})_{ij} =  \transpose{\vect{x}_i} \vect{x}_j$. Then, we search for a cluster co-association matrix $\mat{Q}$, such that $(\mat{Q})_{ij} = 1$ if points $i$ and $j$ belong to the same cluster and  $(\mat{Q})_{ij} = 0$ if they do not. The optimum $\mat{Q}_*$ can be found by solving the following optimization problem (the acronym will be explained below): 
\begin{equation}
	\mat{Q}_* = \argmax_{\mat{Q} \in \Real^{n \times n}}
	\traceone{\mat{D} \mat{Q}}
	\quad\text{s.t.}\quad
	\mat{Q} \vect{1} = \vect{1} ,\
	\traceone{\mat{Q}} = K ,\
	\mat{Q} \succeq 0 ,\
	\mat{Q} \geq \mat{0} .
	\tag{NOMAD}
	\label[problem]{eq:sdp_kmeans}
\end{equation}
Its link with the original $K$-means clustering formulation is explained in \cref{sec:kmeans}.%
\footnote{
\noindent\textbf{Notation.}
$(\mat{X})_{ij}$, $(\mat{X})_{:j}$, $(\mat{X})_{i:}$ denote the $(i,j)$-th entry of matrix $\mat{X}$, the $j$-th column of $\mat{X}$, and the $i$-th row of $\mat{X}$, respectively. For vectors, we employ lowercase and we use a similar notation but with a single index.
We write $\mat{X} \geq \mat{0}$ if a matrix $\mat{X}$ is entry-wise nonnegative and $\mat{X} \succeq \mat{0}$ if it is positive semidefinite.
}

First, we focus on the question: what does NOMAD compute?
Until now, theoretical efforts have concentrated on showing that NOMAD is a good surrogate for $K$-means. \citet{Awasthi2015} study its solutions on datasets consisting of linearly separable clusters and demonstrate that they reproduce hard-clustering assignments of $K$-means. Moreover, the solution to NOMAD achieves hard clustering even for some datasets on which Lloyd's algorithm \citep{Lloyd1982} fails (i.e., \citet{Iguchi2015,Mixon2016}).
Related problems have been studied by \citet{Amini2014,Javanmard2016,Yu2012regularizers}.

\begin{figure}
	\centering
	\begin{subfigure}{.37\columnwidth}
		\centering
		\includegraphics[width=.95\columnwidth]{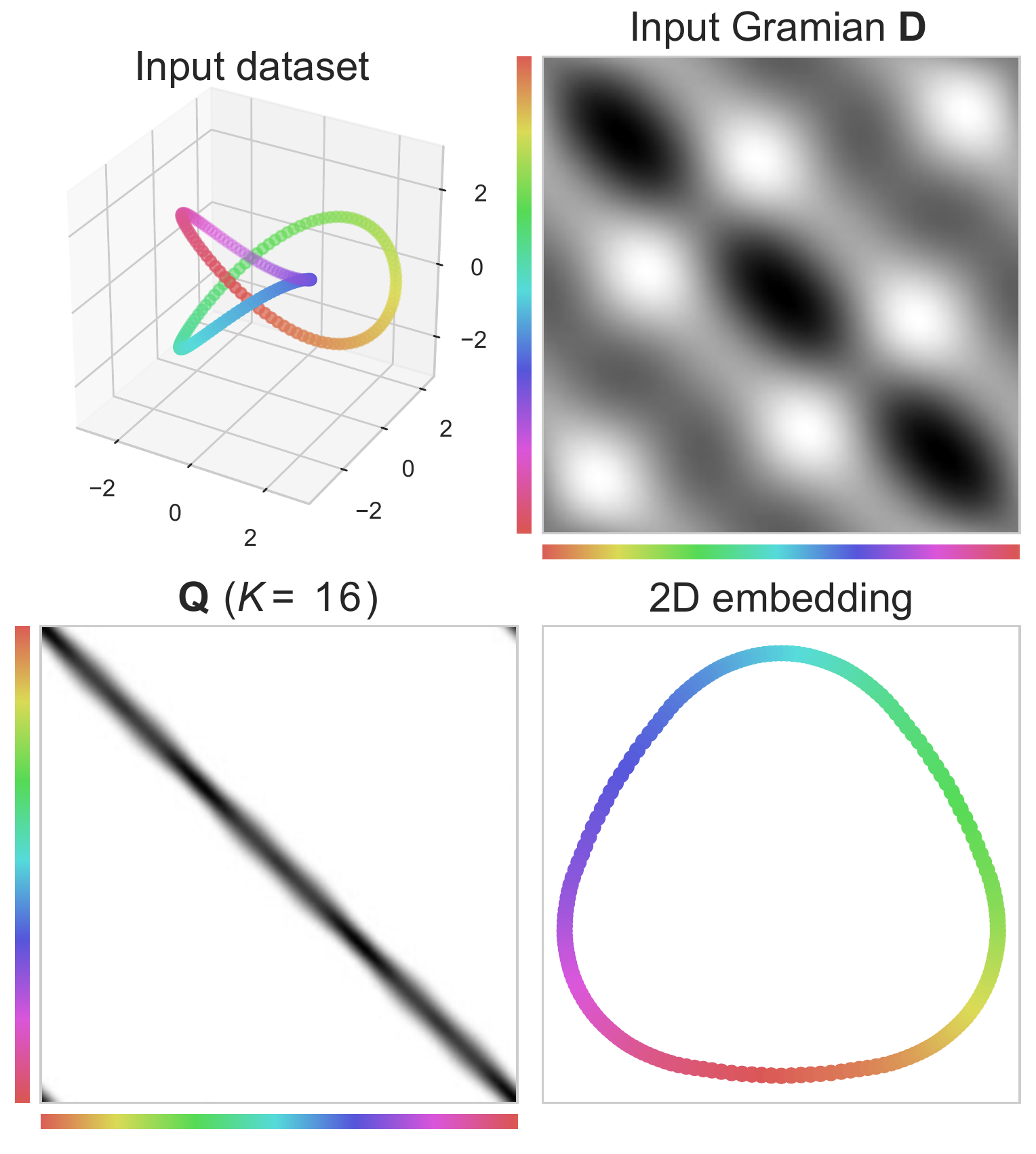}
		\caption{}
		\label{fig:trefoil}
	\end{subfigure}%
	\begin{subfigure}{.37\columnwidth}
		\centering
		\includegraphics[width=.95\columnwidth]{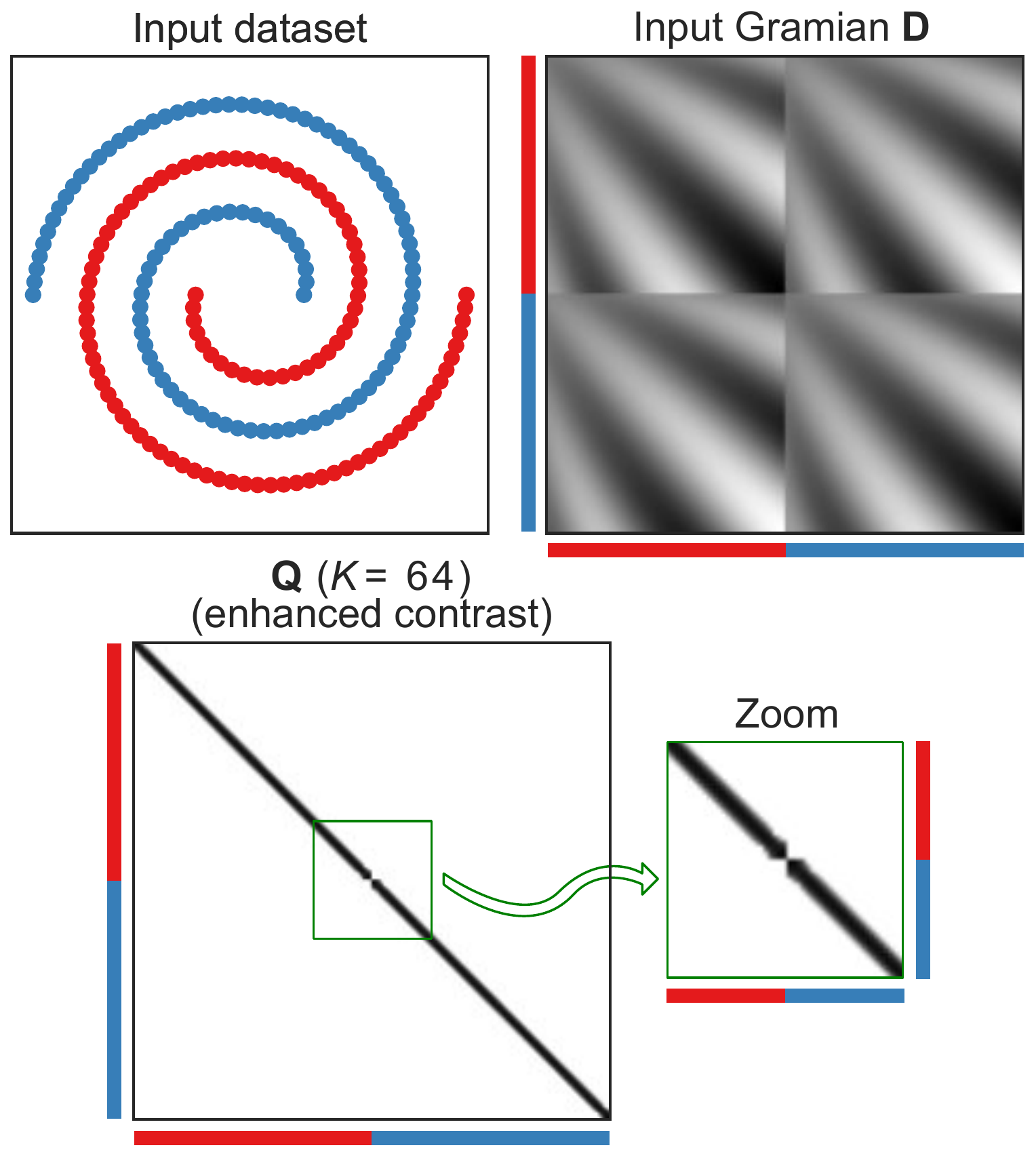}
		\caption{}
		\label{fig:double_swiss_roll}
	\end{subfigure}

	\caption{NOMAD, originally introduced as a convex relaxation of $K$-means clustering, surprisingly learns manifold structures in the data.
	\protect\subref{fig:trefoil} Learning the manifold of a trefoil knot, which cannot be ``untied'' in 3D without cutting it. NOMAD understands that this is a closed manifold, yielding a circulant matrix $\mat{Q}$, which can be ``unfolded'' in 2D.
	\protect\subref{fig:double_swiss_roll} Learning multiple manifolds with NOMAD. Although they are linearly non-separable, NOMAD correctly finds two submatrices, one for each manifold (for visual clarity, we enhance the contrast of $\mat{Q}$).
	}
	\label{fig:opening}
\end{figure}

In this work, we analyze NOMAD in a different regime than previous studies. Instead of focusing on cases and parameter settings where it approximates the original $K$-means formulation, we concentrate on alternative settings and discover that NOMAD is not merely a convex $K$-means imitator.
NOMAD finds the manifold structure in the data, even discriminating different manifolds. \cref{fig:opening} shows two examples of this unexpected behavior where NOMAD dissects the geometry of the data. Because of this and of the central role played by the nonnegativity constraint in the SDP we call it a NOnnegative MAnifold Disentangling (NOMAD).

The next question is: how can we compute these solutions?
Despite the theoretical advantages of convex optimization, in practice, the use of SDPs for clustering has remained limited. This is mainly due to the lack of efficient algorithms to solve the convex optimization problem.
We address this issue by presenting an efficient convex solver for NOMAD, based on the conditional gradient method. The new algorithm can handle large datasets, extending the applicability of NOMAD to more interesting and challenging scenarios.

\paragraph{Organization.} 
We first study the behavior of NOMAD theoretically by analyzing its solution for a simple synthetic example of a regular manifold with symmetry (\cref{sec:theory}). In this context, we demonstrate how NOMAD departs from standard $K$-means.
Building on this analysis, we suggest that NOMAD has non-trivial manifold learning capabilities (\cref{sec:manifold}) and demonstrate numerically NOMAD's good performance in non-trivial examples, including synthetic and real datasets.
Then, motivated by the relatively slow performance of standard SDP solvers, we focus on scaling NOMAD to large modern datasets. In \cref{sec:BM}, we study both theoretically and experimentally an heuristic non-convex Burer-Monteiro-style algorithm \citep{Kulis2007}.
Finally, we present a new convex and yet efficient algorithm for NOMAD. This algorithm allows us, for the first time, to study provable solutions of NOMAD on large datasets.
Our software is publicly available at \url{https://github.com/simonsfoundation/sdp_kmeans}.

\section{Theoretical analysis of manifold learning capabilities of NOMAD}
\label{sec:theory}

Starting with the appearance of Isomap~\citep{Tenenbaum2000} and locally-linear embedding (LLE) \citep{Roweis2000}, there has been outstanding progress in the area of manifold learning \citep[e.g.,][]{Belkin2003,Hadsell2006,Weinberger2006,Weiss2008}.
For a data matrix $\mat{X} = [ \vect{x}_i ]_{i=1}^{n}$ of column-vectors/points $\vect{x}_i \in \Real^d$, the majority of these modern methods have three steps:
\begin{enumerate}[nosep]
	\item Determine the neighbors of each point. This can be done in two ways: (1) keep all point within some fixed radius $\rho$ or (2) compute $\kappa$ nearest neighbors.
	\item Construct a weighting matrix $\mat{W}$, where $(\mat{W})_{ij}$ = 0 if points $i$ and $j$ are not neighbors, and $(\mat{W})_{ij}$ is inversely proportional to the distance between points $i$ and $j$ otherwise.
	\item Compute an embedding from $\mat{W}$ that is locally isometric to $\mat{X}$.
\end{enumerate}
For the third step, many different and powerful approaches have been proposed, from computing shortest paths on a graph~\citep{Tenenbaum2000}, to using graph spectral methods \citep{Belkin2003}, to using neural networks \citep{Hadsell2006}.

However, the success of these techniques depends critically on the ability to capture the data structure in the first two steps.
Correctly setting either $\rho$ or $\kappa$ is a non-trivial task that is left to the user of these techniques.
Furthermore, a kernel (most commonly an RBF) is often involved in the second step, adding an additional parameter (the kernel width/scale) to the user to determine.
Expectedly, the optimal selection of these parameters plays a critical role in the overall success of the manifold learning process.

NOMAD departs drastically from this setup as no kernel selection nor nearest neighbor search are involved. Yet, the solution $\mat{Q}_*$ is effectively a kernel which is automatically learned from the data. Because $\mat{Q}_*$ is positive semidefinite it can be factorized as $\mat{Q}_* = \transpose{\mat{Y}} \mat{Y}$, defining a feature map from $\mat{X}$ to $\mat{Y}$.

To illustrate intuitively the differences and similarities with prior work on manifold learning we use LLE \citep{Roweis2000} as an example.
LLE optimizes the cost function
\begin{equation}
	\Phi(\mat{Y}) = \traceone{\transpose{\left( \mat{I} - \mat{W} \right)} \left( \mat{I} - \mat{W} \right) \left( \transpose{\mat{Y}} \mat{Y} \right)},
\end{equation}
where $\mat{W}$ is the adjacency matrix of a weighted nearest-neighbors graph. The key to finding a matrix $\mat{Y}$ that is locally isometric to $\mat{X}$, while unwrapping the data manifold, is to remove from $\mat{W}$ the connections between distant points $(\mat{X})_{:i}$ and  $(\mat{X})_{:j}$. This is done with some technique to find nearest neighbors. 

NOMAD also tries to align the output Gramian, $\mat{Q}$, to the input Gramian, $\mat{D}$, but discards distant data points differently. As negative entries in $\mat{D}$ cannot be matched because $\mat{Q}$ is nonnegative, the best option would be to set the corresponding element of $\mat{Q}$ to zero. This effectively discards pairs of input data points whose inner product is negative thus enforcing locality in the angular space \citep{Cho2009}, see \cref{fig:distances2gramian}. In fact, this argument can be taken further by noting that the constraint $\mat{Q} \vect{1} = \vect{1}$ allows us to replace the Gramian $\mat{D}$ with the negative squared distance matrix,
\begin{equation}
	-\tfrac{1}{2} \sum_{ij} \norm{\vect{x}_{i} - \vect{x}_{j}}{2}^2 (\mat{Q})_{ij} = -\sum_{i} (\mat{D})_{ii} \sum_{j} (\mat{Q})_{ij} + \traceone{\mat{D} \mat{Q}} = -\traceone{\mat{D}} + \traceone{\mat{D} \mat{Q}} .
\end{equation}

Finally, the constraint $\traceone{\mat{Q}} = K$ allows further control of the neighborhood size of NOMAD (modulating the actual width of its kernel function, see \cref{fig:distances2gramian}).
Next, we develop further intuition about the manifold-learning capabilities of NOMAD by analyzing theoretically the dataset in \cref{fig:distances2gramian}.

\begin{figure}[t]
	\centering
	\includegraphics[width=.85\textwidth]{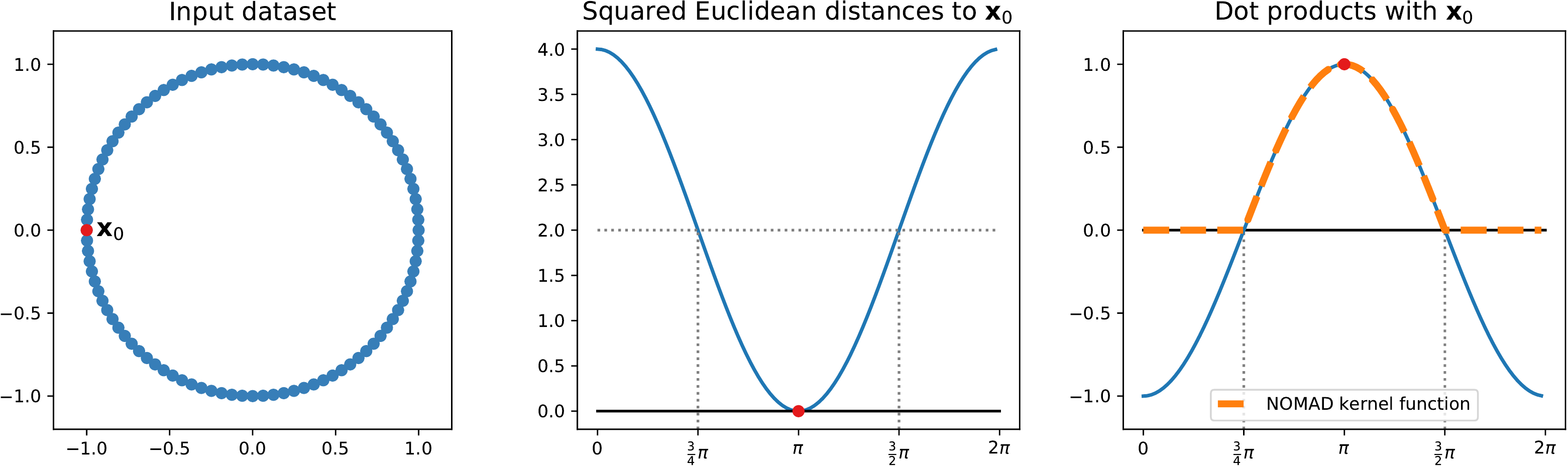}
	
	\caption{Correspondence between using a kernel/threshold in distance-space and nonnegativity of Gramian-based representations. In this toy example, the constraint $\mat{Q} \geq 0$ in NOMAD is equivalent to setting to zero distances that are greater than $\smash{\sqrt{2}}$ (squared distances greater than 2). We use $\vect{x}_0$ as a reference but rotational symmetry makes this argument valid for all points in the dataset.}
	\label{fig:distances2gramian}
\end{figure}

As we mentioned before, the SDP formulation of \citet{Peng2007_sdk-kmeans} was developed as a clustering algorithm. Whether this method actually delivers a clustered solution depends on the geometry of the dataset. When the dataset consists of well-segregated  clusters, the resulting $\mat{Q}_*$ has block diagonal structure. We empirically observe that, when the dataset is sampled from a regular manifold, the solution $\mat{Q}_*$ does not break down the dataset into artificial clusters and actually preserves the manifold structure (see \cref{sec:manifold}). In a simple example, where the manifold exhibits a high degree of symmetry, we demonstrate analytically that this behavior occurs. The following sections are devoted to this task.

\subsection{Analysis of NOMAD on a 2D ring dataset}

We analyze the case in which  the input data to NOMAD possess rotational symmetry, i.e., data are arranged uniformly on a ring, see \cref{fig:distances2gramian}.
In this case, we can write the SDP as a linear program (LP) in the circular Fourier basis. This new representation allows to visualize that NOMAD lifts the data into a high-dimensional space, with $K$ controlling its dimensionality.

In the example in \cref{fig:distances2gramian}, the entries of $\mat{D}$ can be described by
$(\mat{D})_{ij} = \transpose{\vect{x}_i} \vect{x}_j = \cos(\alpha_i - \alpha_j)$,
where $\alpha_i, \alpha_j$ are the angles of points $\vect{x}_i, \vect{x}_j$, respectively (\cref{fig:distances2gramian}). Since the points are uniformly distributed over the ring, $\mat{D}$ is a circulant matrix, i.e.,  $\cos(\alpha_i - \alpha_j) = \cos(\alpha_{i+k} - \alpha_{k+k})$.
The solution $\mat{Q}_*$ to NOMAD is circulant too \citep{Bachoc2012}.
Being circulant matrices, $\mat{D}$ and $\mat{Q}_*$ are diagonalized by the discrete Fourier transform (DFT), i.e.,
\begin{align}
	\mat{D} = \mat{F} \diag{\vect{d}} \conjugatetranspose{\mat{F}} ,
	&&
	\mat{Q}_* = \mat{F} \diag{\vect{q}} \conjugatetranspose{\mat{F}} ,	
%	\mat{D} &= \mat{F} \diag{\vect{d}} \conjugatetranspose{\mat{F}} , \\
%	\mat{Q}_* &= \mat{F} \diag{\vect{q}} \conjugatetranspose{\mat{F}} ,	
	\label{eq:fourier_expansion}
\end{align}
where $\vect{q} \geq \vect{0}, \vect{d} \geq \vect{0}$ respectively are vectors containing the eigenvalues of $\mat{D}$ and $\mat{Q}_*$, $\conjugatetranspose{\mat{F}}$ is a Hermitian conjugate of $\mat{F}$, and $\mat{F} \in \Complex^{n \times n}$ is the unitary DFT matrix, with entries ($p,k = 0, \dots, n-1$)
\begin{equation}
	(\mat{F})_{pk} = \tfrac{1}{\sqrt{n}} \exp \left( -i 2\pi p \tfrac{k}{n} \right) .
	\label{eq:fourier_matrix}
\end{equation}
Hence, and in accord with the constraint $\mat{Q}_* \vect{1} = \vect{1}$, we have that $(\mat{F})_{0:} = \tfrac{1}{\sqrt{n}} \vect{1}$ and $(\vect{q})_0 = 1$.

% \begin{figure}[t]
% 	\centering
% 	\begin{tabu} to \linewidth {@{\hspace{0pt}} X[1,c,m] @{\hspace{8pt}} X[2,c,m] @{\hspace{8pt}} X[2,c,m] @{\hspace{8pt}} X[6,c,m] @{\hspace{0pt}}}
% 		\includegraphics[width=\linewidth]{} &
% 		\includegraphics[width=\linewidth]{} &
% 		\includegraphics[width=\linewidth]{} &
% 		\includegraphics[width=\linewidth]{} \\
% 	\end{tabu}

% 	\caption{%
% 		Comparison of SDP and LP solutions for the 2D ring dataset consisting of regularly sampled points (100 in this case) along the unit ring. The LP formulation in \cref{eq:lp_kmeans} yields solutions that are practically indistinguishable from the ones given by NOMAD; the relative error between the matrices $\mat{Q}_*$ obtained with SDP and LP is $2.41 \cdot 10^{-3}$ and the relative error between their eigenvalues is $2.66 \cdot 10^{-3}$.}
% 	\label{fig:ring_sdp_lp}
% \end{figure}

\subsection{A linear program on the data manifold}

We express the objective function and the constraints of NOMAD
in terms of $\vect{d}$ and $\vect{q}$, i.e.,
\begin{align}
	\traceone{\mat{D} \mat{Q}_*} &= \transpose{\vect{d}} \vect{q} , \\
	\traceone{\mat{Q}} &= \transpose{\vect{1}} \vect{q} = K , \\
	(\mat {Q})_{kk'}
	&= (\mat{F})_{k:} \diag{\vect{q}} (\conjugatetranspose{\mat{F}})_{:k'}
%	&= \sum_{p=0}^{n-1} \tfrac{(\vect{q})_p}{n} \exp \left( -i 2\pi p \tfrac{k}{n} \right) \exp \left( i 2\pi p \tfrac{k'}{n} \right) \nonumber \\
	= \sum_{p=0}^{n-1} \tfrac{(\vect{q})_p}{n} \cos \left( 2\pi p \tfrac{k' - k}{n} \right) \geq 0 .
	\label{eq:lp_nneg}
\end{align}
This reformulation allows us to rewrite NOMAD as a linear program
\begin{equation}
	\max_{\vect{q}}
	\transpose{\vect{d}} \vect{q}
	\quad\text{s.t.}\quad
	(\forall \tau)\ \transpose{\vect{c}_\tau} \vect{q} \geq 0 ,
	\quad \transpose{\vect{1}} \vect{q} = K ,
	\quad \vect{q} \geq \vect{0},
	\quad (\vect{q})_0 = 1 ,
	\label[problem]{eq:lp_kmeans}
\end{equation}
where $(\vect{c}_\tau)_p = \tfrac{1}{n} \cos \left( 2 \pi p \tfrac{\tau}{n} \right)$.
% As a sanity check, we implemented \cref{eq:lp_kmeans} and, effectively, it provides the same result than NOMAD, see \cref{fig:ring_sdp_lp}.

\cref{eq:lp_kmeans} sheds light on the inner workings of NOMAD. First, the constraint $\transpose{\vect{1}} \vect{q} = K$ ensures that $\vect{q}$ does not grow to infinity and acts as a budget constraint. Let us assume for a moment that we remove the constraint $\transpose{\vect{c}_\tau} \vect{q} \geq 0$ (the equivalent of $\mat {Q} \geq \mat{0}$). Then, the program will try to set to $K$ the entry of $\vect{q}$ corresponding to the largest eigenvalue of $\vect{d}$; this $\vect{q}$ will violate as $K$ gets bigger the removed constraint (since $(\vect{c}_\tau)_p$ is a sinusoid). Then the effect of this constraint is to spread the allocated budget among several eigenvalues (instead of just the largest). The experiment in \cref{fig:ring_k-evolution} confirms this: the number of active eigenvalues of $\mat{Q}_*$ grows with $K$. We can interpret this as increasing the intrinsic dimensionality of the problem in such a way that only local interactions are considered.

\noindent\textbf{Interpretation of $K$.}
The circulant property of $\mat{Q}_*$ for the 2D ring sheds further light on the meaning of $K$. In \cref{fig:ring_k-evolution_diags}, we observe that the number of significant elements in each of $\mat{Q}_*$ is $\lceil n / K \rceil$. Thus, we can interpret $K$ as a parameter that effectively sets the size of the local neighborhood on the manifold. In standard manifold learning methods this size is set by a combination of the number of nearest neighbors and the shape and scale of the kernel function. In NOMAD, all these variables are incorporated into a single parameter and balanced with the help of the remaining problem constraints.

In general, for non-symmetric and irregularly sampled manifolds, $K$ is chosen to capture the manifold underlying the dataset: the neigborhood size needs to be small enough to capture the desired manifold features, but big enough to avoid capturing unwanted structure (e.g., noise). If sampling density differs in different areas, the size will adjust locally as needed.

\begin{figure}[t]
	\centerline{%
	\hfill
	\begin{subfigure}{.18\linewidth}
		\begin{tabu} to \linewidth {@{\hspace{0pt}} X[c,m] @{\hspace{2pt}} X[c,m] @{\hspace{0pt}}}
			\begin{overpic}[width=\linewidth]{circle/circle_k1}
				\put (5,5) {\scriptsize$K=1$}
			\end{overpic} &
			\begin{overpic}[width=\linewidth]{circle/circle_k12}
				\put (5,5) {\scriptsize$K=12$}
			\end{overpic} \\
			\\[-9pt]
			\begin{overpic}[width=\linewidth]{circle/circle_k25}
				\put (5,5) {\scriptsize$K=25$}
			\end{overpic} &
			\begin{overpic}[width=\linewidth]{circle/circle_k100}
				\put (5,5) {\scriptsize$K=100$}
			\end{overpic}
		\end{tabu}
		\caption{}
		\label{fig:ring_k-evolution_solutions}
	\end{subfigure}%
	\hfill
	\begin{subfigure}{.26\linewidth}
		\includegraphics[width=\linewidth]{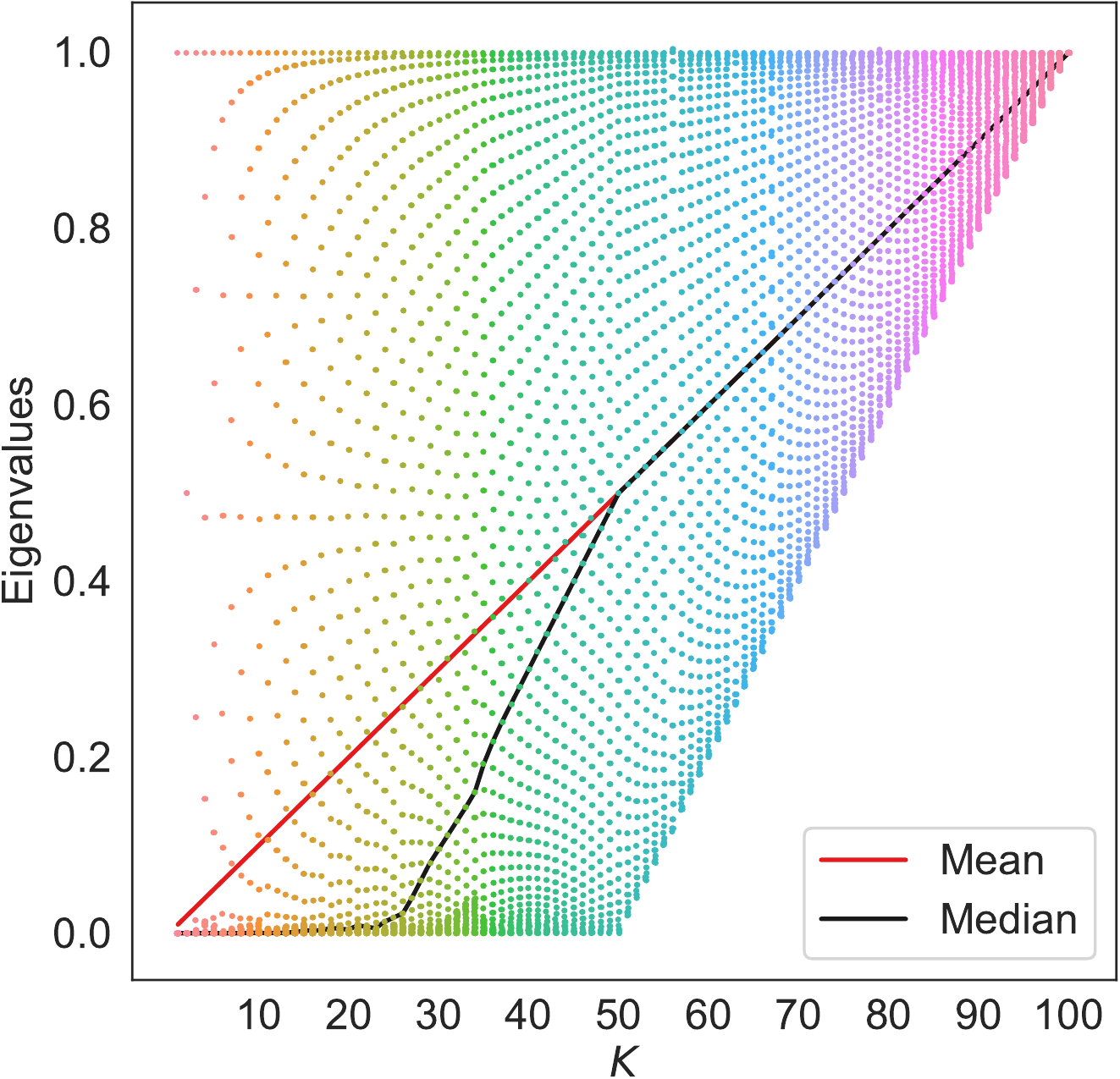}
		\caption{}
		\label{fig:ring_k-evolution_eigs}
	\end{subfigure}%
	\hfill
	\begin{subfigure}{.34\linewidth}
		\includegraphics[width=\linewidth]{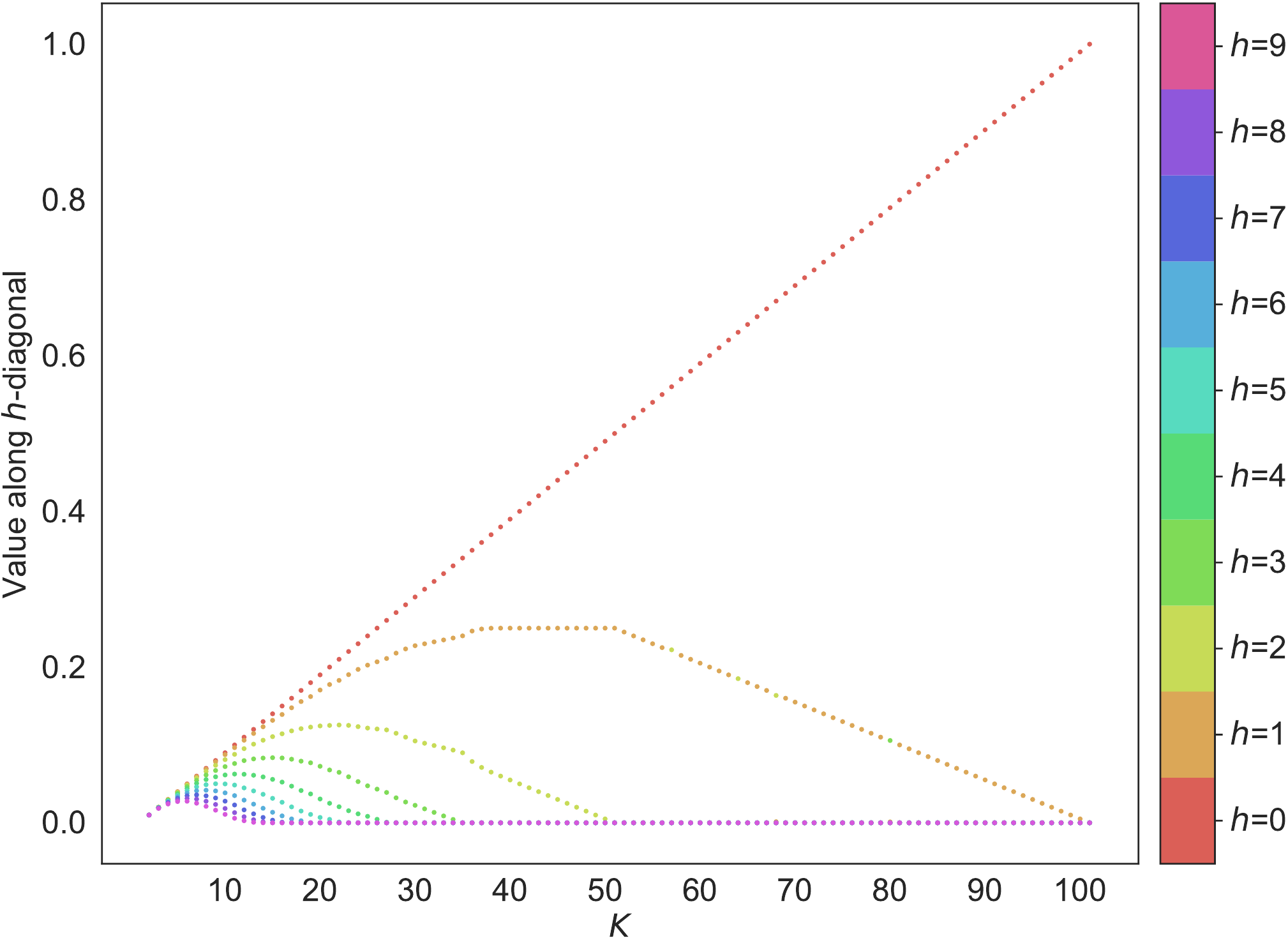}
		\caption{}
		\label{fig:ring_k-evolution_diags}
	\end{subfigure}%
	\hfill
	\begin{subfigure}{.18\linewidth}
		\centering
		\includegraphics[width=\linewidth]{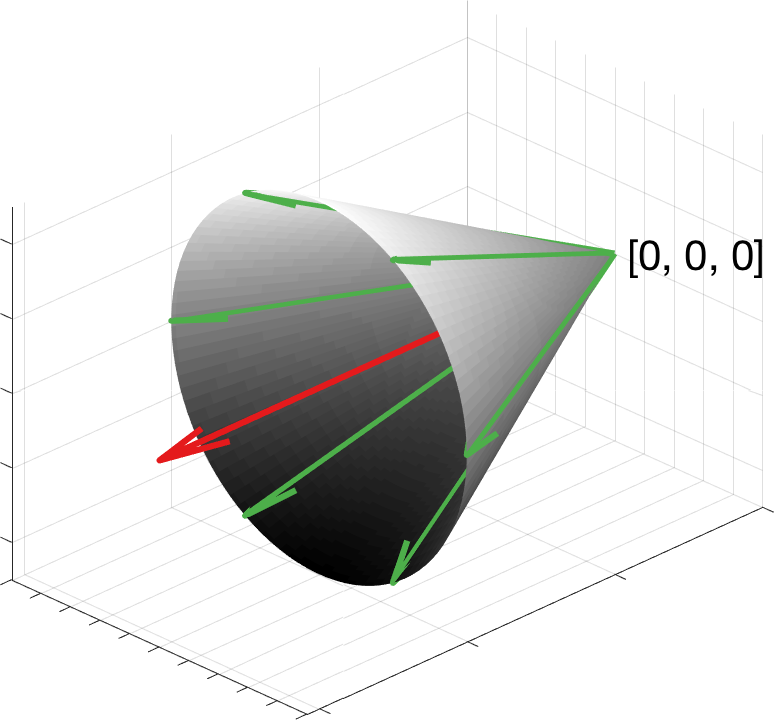}
		\caption{}
		\label{fig:cone_structure}
	\end{subfigure}	
	\hfill
	}

	\caption{Evolution of the NOMAD solution for the 2D ring dataset (with 100 points, see \cref{fig:distances2gramian}) with increasing parameter $K$. 
		\protect\subref{fig:ring_k-evolution_solutions} As $K$ increases, the solution, $\mat{Q}_*$, concentrates more and more towards the diagonal.
		\protect\subref{fig:ring_k-evolution_eigs} As $K$ increases, the number of active eigenvalues in the solution, $\mat{Q}_*$, grows resulting in the more uniform distribution of eigenvalues and greater mean/median (notice that the mean being linear comes from the trace constraint).
		\protect\subref{fig:ring_k-evolution_diags} 
		We define the $h$-diagonal of $\mat{Q}_*$ as the entries $(i, j)$ for which $i - j = h$.
		As $\mat{Q}_*$ is a circulant matrix, each $h$-diagonal contains a single repeated value. We plot these values, assigning a different color to each $h$. The effect of the scaling constraint $\traceone{\mat{Q}} = K$ becomes evident: when one $h$-diagonal becomes inactive, all remaining $h'$-diagonals need to be upscaled.
		\protect\subref{fig:cone_structure} The eigenvectors of $\mat{Q}_*$ form a high-dimensional cone (cartoon representation, cone axis in red and eigenvectors in green).
		}
	\label{fig:ring_k-evolution}
\end{figure}

\subsection{Lifting the ring to a high-dimensional cone}

Here, we show that NOMAD effectively embeds the data manifold into a space where its structure, i.e., rotational symmetry, is preserved.
We now make use of the half-wave symmetry in $\mat{Q}_*$, noting that they can be fully represented with only one half of the Fourier basis. We can then decompose it with the real Fourier basis
\begin{equation}
	\mat{Q}_* = \mat{\tilde{F}} \diag{\vect{\tilde{q}}} \transpose{\mat{\tilde{F}}} ,
	\label{eq:fourier_expansion_real}
\end{equation}
where
$\vect{\tilde{q}} = \transpose{ [ (\vect{q})_0, (\vect{q})_1, (\vect{q})_1, \dots, (\vect{q})_{n-1}, (\vect{q})_{n-1} ] }$
and
$\mat{\tilde{F}} \in \Real^{n \times n}$ has entries ($p,k = 0, \dots, n-1$)
\begin{equation}
	(\mat{\tilde{F}})_{pk} =
	\begin{cases}
		\tfrac{2}{\sqrt{n}} \cos \left( 2\pi p \tfrac{k}{n} \right) & \text{if $k$ is even,} \\
		\tfrac{2}{\sqrt{n}} \sin \left( 2\pi p \tfrac{k-1}{n} \right) & \text{if $k$ is odd.} \\
	\end{cases}
	\label{eq:fourier_matrix_real}
\end{equation}
Let $\mat{\tilde{Y}} = \diag{\vect{q}}^{1/2} \transpose{\mat{\tilde{F}}}$.
Notice that
$
	\langle \mat{\tilde{Y}}_{:i} / \norm{\mat{\tilde{Y}}_{:i}}{F}, \mat{\tilde{F}}_{:0} \rangle = \langle\mat{\tilde{F}}_{:i}, \mat{\tilde{F}}_{:0} \rangle = \tfrac{4}{n}
$,
meaning that the vectors $\mat{\tilde{Y}}_{:i}$ are the extreme rays of a right circular cone with the eigenvector $\mat{\tilde{F}}_{:0} = \tfrac{2}{\sqrt{n}} \transpose{[ 1, 0, \dots, 0 ]}$ as its symmetry axis, see \cref{fig:cone_structure}. Thus, we can interpret the solution to NOMAD as lifting the 2D ring structure into a cone. As mentioned before, this cone is high-dimensional, with as many directions as needed to preserve the nonnegativity of $\mat{Q}$.

We identify the rank of the solution $\mat{Q}$ with the number of active eigenvalues. 
The bigger the $K$, the higher the rank. The constraint $\mat{Q} \vect{1} = \vect{1}$ in NOMAD leads to a fanning-out effect in the data representation.
Intuitively, this fan-out effect is key to the disentanglement of datasets with complex topologies. Spin-model-inspired SDPs for community detection~\citep{Javanmard2016} achieve a similar fanning-out by dropping the constraint $\mat{Q} \vect{1} = \vect{1}$ and adding the related term $-\gamma \transpose{\vect{1}} \mat{Q} \vect{1}$ to the objective function.

With the LP framework and the geometric picture in place, we can begin to understand how the solution evolves as the parameter $K$ increases from $1$ to $n$. At $K=1$, only the eigenvalue $(\vect{q})_0$ is active and every vector $\smash{(\mat{\tilde{Y}})_{:i}}$ is the same with each entry equal to $1/n$. When $K$ slightly above 1, the eigenvalue $(\vect{q})_1$ becomes active (nonzero), introducing the first  nontrivial Fourier component. Geometrically, the vectors $\smash{\{ (\mat{\tilde{Y}})_{:i} \}}$ now open up into a narrow cone. As $K$ increases, the cone widens and, at some point, the angle between two of the  vectors reaches $\pi/2$ (this activates the nonnegativity constraint in \cref{eq:lp_nneg}). Further increase of $K$ necessitates use of a larger number of Fourier modes. Finally, at $K=n$ all modes are active and all vectors $\smash{\{ (\mat{\tilde{Y}})_{:i} \}}$ become orthogonal to each other. \cref{fig:ring_k-evolution_eigs} depicts the progression with $K$ of the number of active modes.

\noindent\textbf{Summary.}
Previous studies \citep{Kulis2007,Peng2007_sdk-kmeans,Awasthi2015}, focus solely on cases where NOMAD exhibits $K$-means-like solutions (i.e., hard-clustering). \cref{sec:theory} provides a characterization of the NOMAD solutions on a simple example with a high degree of symmetry, showing that they are drastically different from $K$-means. These solutions connect neighboring points, with the neighborhood size determined by $K$. These neighborhoods overlap, as they would in soft-clustering, in a way that preserves global features of the manifold, including its symmetry. This is a feature sought after by manifold learning methods and help place NOMAD among reliable manifold analysis techniques.

\section{Analyzing data manifolds with NOMAD: Experimental results}
\label{sec:manifold}

In the previous section, we showed that NOMAD recovers the data manifold in an idealized 2D ring dataset. Here, we extend this observation numerically to more complex datasets for which analytical form of the transformation that diagonalizes $\mat{Q}_*$ (nor $\mat{D}$) is not known, see \cref{fig:opening,fig:moons,fig:joint}. 
We visualize the solution $\mat{Q}_*$ by embedding it in a low-dimensional space. While our goal is not dimensionality reduction, we learn the data manifold with NOMAD, and use standard spectral dimensionality reduction to visualize the results.

%\begin{figure}[t]
%	\begin{minipage}[b]{.49\textwidth}
%		\centering
%		\includegraphics[width=.7\columnwidth]{}
%		
%		\captionof{figure}{Structure of the eigenvectors of the input Gramian and the solution $\mat{Q}$ for a dataset of points lying in two rings. \textbf{(Top right)} The eigenvectors of $\mat{D}$ ($\mat{D}$ has rank 2) do not separate the rings. \textbf{(Bottom)} NOMAD produces two sets of eigenvectors with disjoint support: one set describing the points in each ring (we show all eigenvectors and a detail on the first 3 within each set).}
%		\label{fig:circles_eigendecomposition}
%	\end{minipage}%
%	\hfill
%	\begin{minipage}[b]{.49\textwidth}
%		\centering
%		\includegraphics[width=\linewidth]{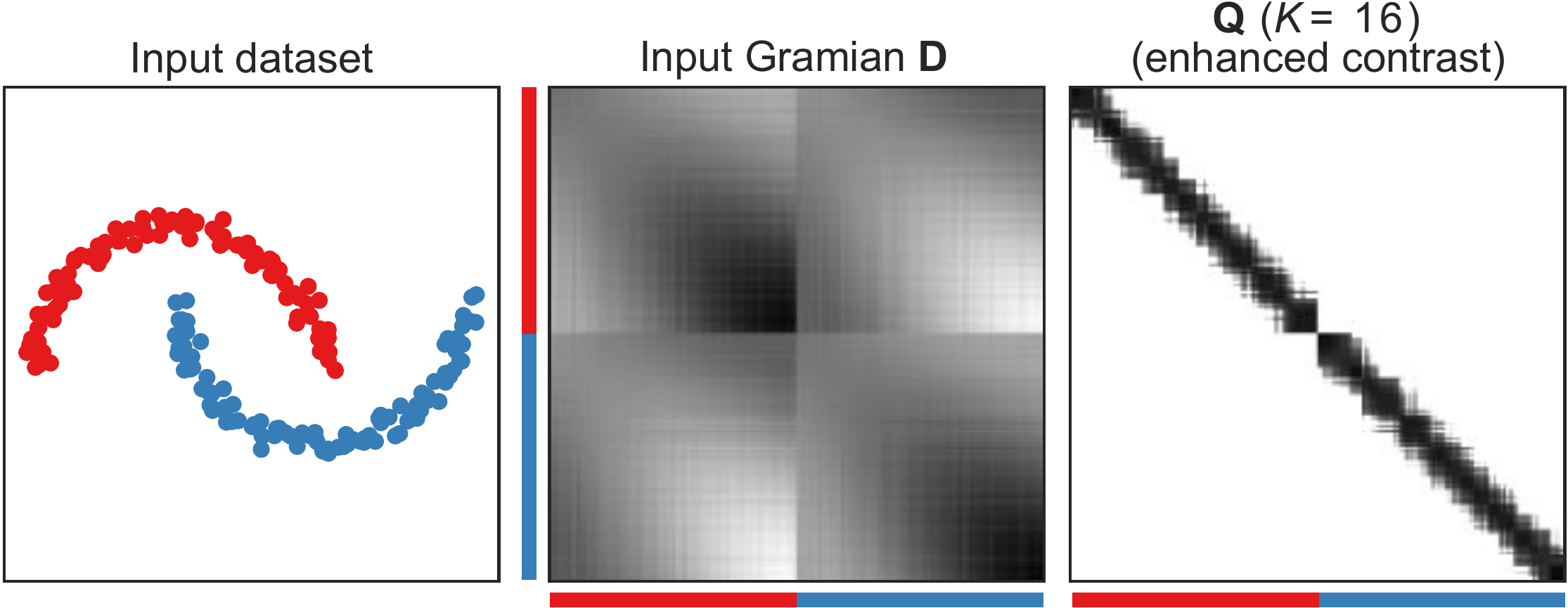}
%		
%		\captionof{figure}{Learning multiple manifolds with NOMAD for points arranged in two semicircular manifolds and contaminated with noise.}
%		\label{fig:moons}
%	\end{minipage}%
%\end{figure}

\begin{figure}[t]
	\centering
    
%     \hfill
	\begin{subfigure}{.1203\linewidth}
		\includegraphics[width=\linewidth]{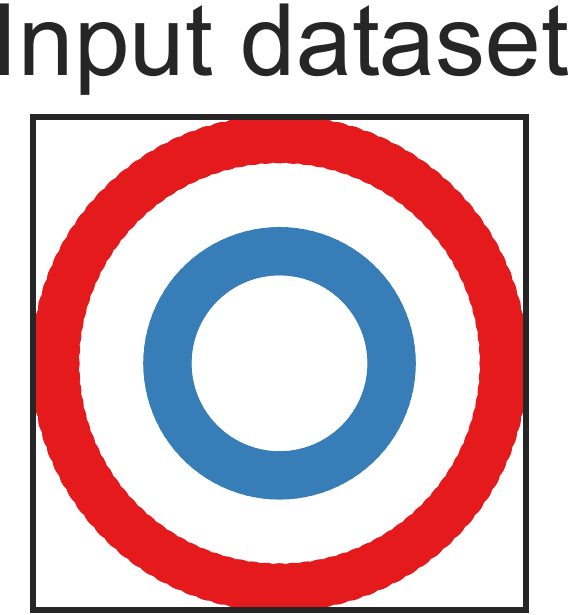}
		\caption{}
		\label{fig:2circles}
	\end{subfigure}%
	\hspace{2em}
    \begin{minipage}{.4\linewidth}
	\begin{subfigure}{.643\linewidth}
		\includegraphics[width=\linewidth]{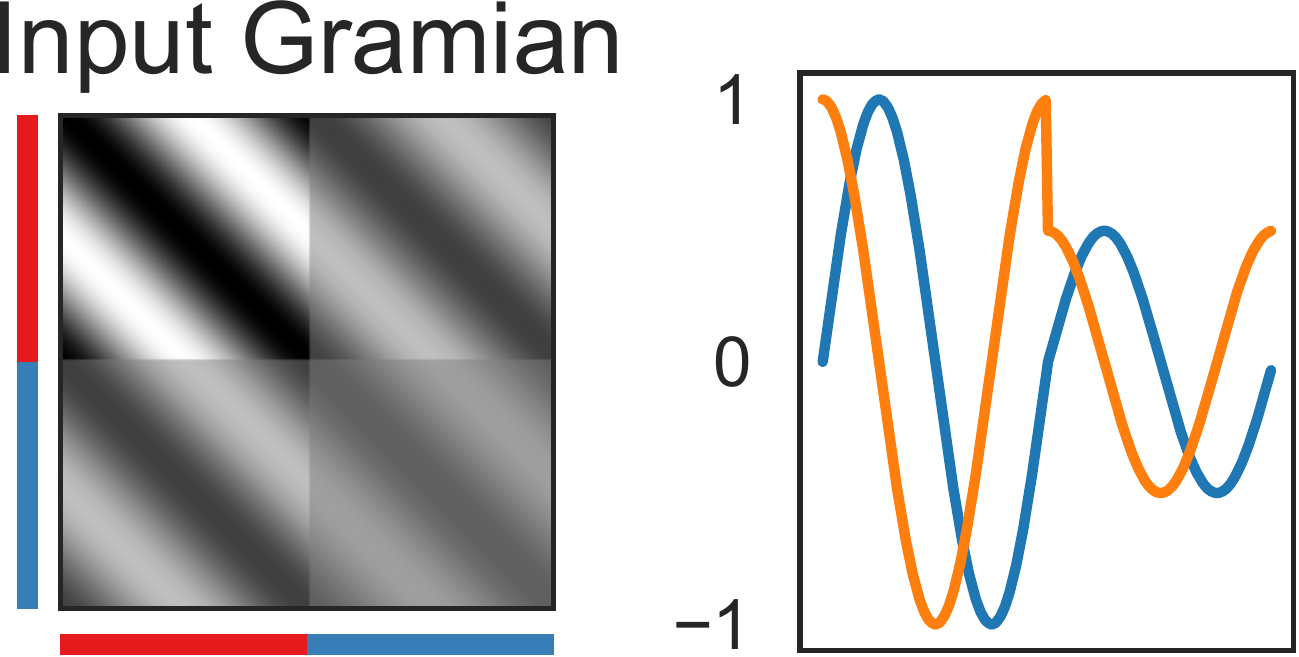}
		\caption{}
		\label{fig:2circlesGramian}
	\end{subfigure}%
	\\
	\begin{subfigure}{\linewidth}
		\includegraphics[width=\linewidth]{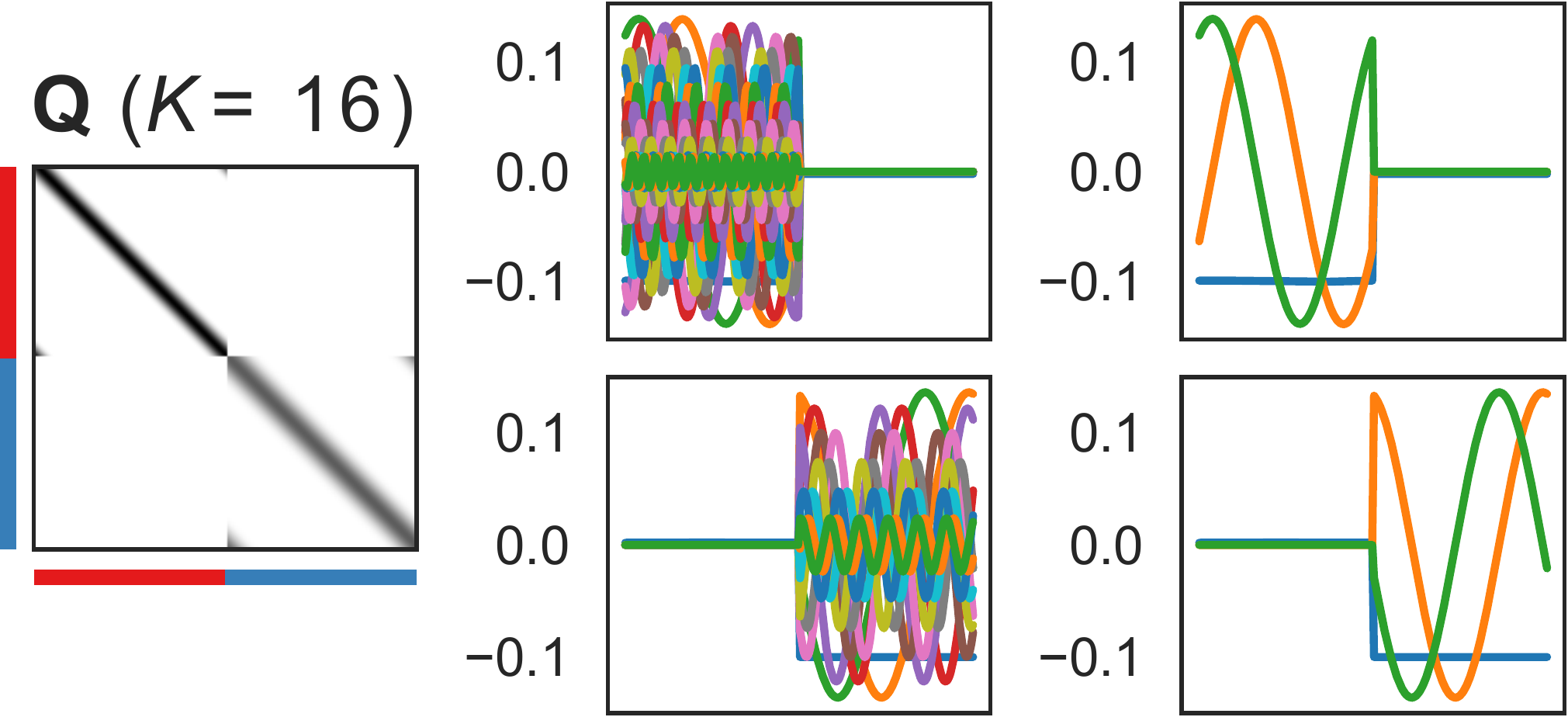}
		\caption{}
		\label{fig:circles_eigendecomposition3}
	\end{subfigure}%
    \end{minipage}
	\hspace{2em}
	\begin{subfigure}{.25\linewidth}
		\centering
		\includegraphics[width=\linewidth]{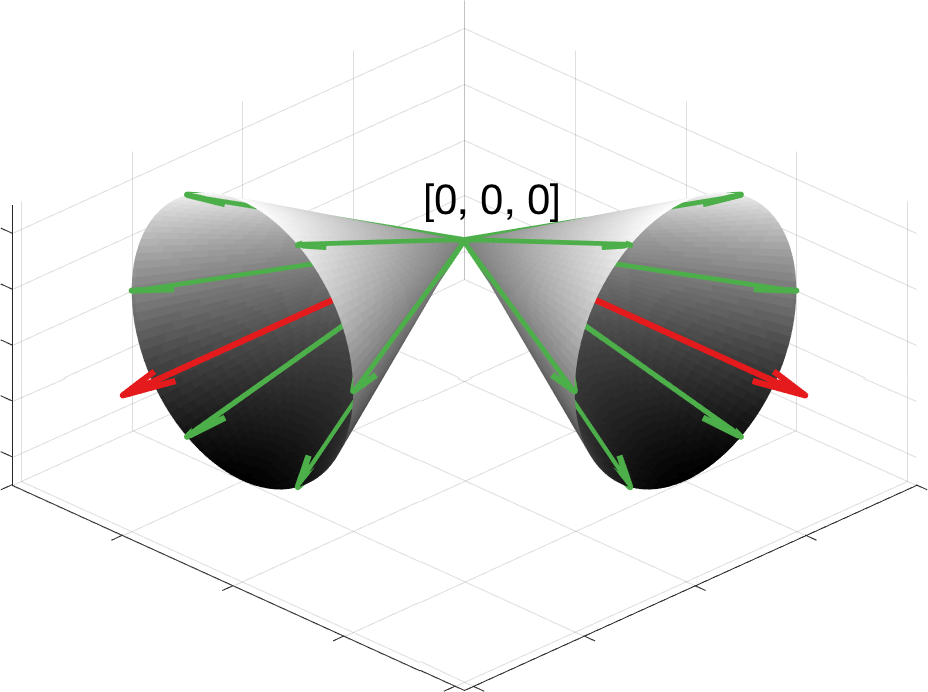}
		\caption{}
		\label{fig:two_cones_structure}
	\end{subfigure}
%     \hfill
	
	\caption{Solution of NOMAD on the dataset consisting of two 2D rings.
    \protect\subref{fig:2circles} Two-ring dataset.
    \protect\subref{fig:2circlesGramian} Input Gramian, $\mat{D}$, and its two eigenvectors ($\mat{D}$ has rank 2). Note that the eigenvectors of  $\mat{D}$ do not segregate the rings.
	\protect\subref{fig:circles_eigendecomposition3} The solution, $\mat{Q}$, of NOMAD contains two sets of eigenvectors with disjoint support: one set describing the points in each ring (we show all eigenvectors and a detail on the first 3 within each set).
	\protect\subref{fig:two_cones_structure} The eigenvectors of $\mat{Q}$ form two orthogonal high-dimensional cones: one cone for each ring (cartoon representation, cone axis in red and eigenvectors in green). Notice how these cones become linearly-separable.}
	\label{fig:circles_eigendecomposition}
\end{figure}

\begin{figure}[p]
	\centering
	\includegraphics[width=.45\linewidth]{}
	
	\caption{Learning multiple manifolds with NOMAD. The points are arranged in two semicircular manifolds and contaminated with Gaussian noise. Although the manifolds are linearly non-separable, NOMAD correctly finds two submatrices, one for each manifold (for visual clarity, we enhance the contrast of $\mat{Q}$).}
	\label{fig:moons}
\end{figure}

\begin{figure*}[p]
	\centering
	\begin{subfigure}[t]{.252\textwidth}
		\centering
		\includegraphics[width=\linewidth]{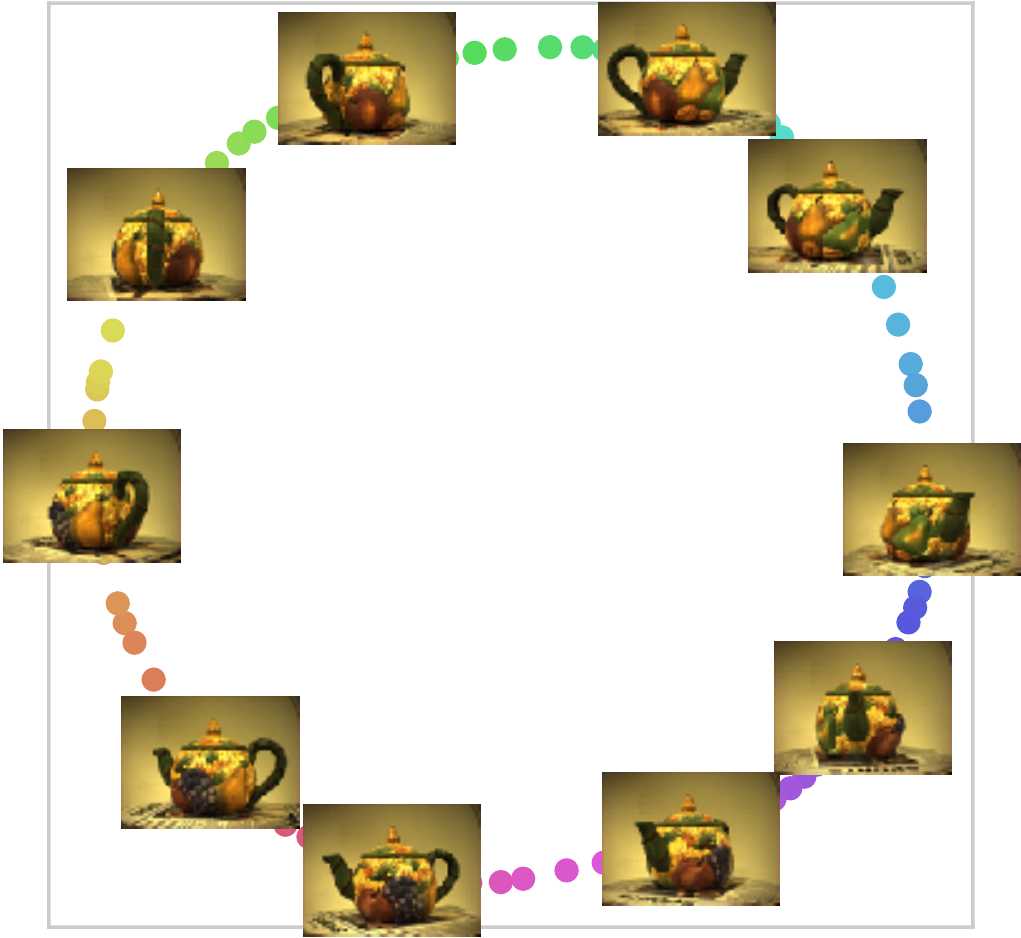}
		
		\caption{Teapots ($K = 20$)}
		\label{fig:embedding_real_teapot}
	\end{subfigure}%
	\hfill%
	\begin{subfigure}[t]{.24\textwidth}
		\centering
		\includegraphics[width=\linewidth]{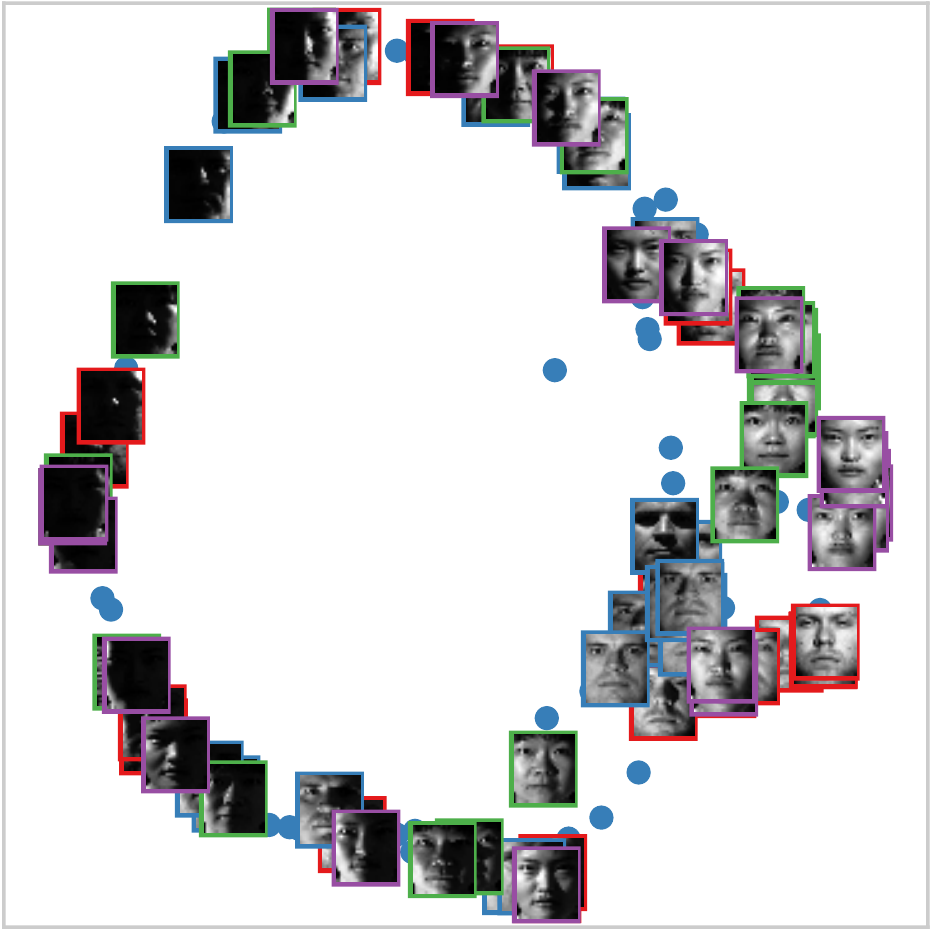}
		
		\caption{Yale Faces ($K = 16$)}
		\label{fig:embedding_real_faces}
	\end{subfigure}%
	\hfill%
	\begin{subfigure}[t]{.48\textwidth}
		\centering
		\includegraphics[width=.5\linewidth]{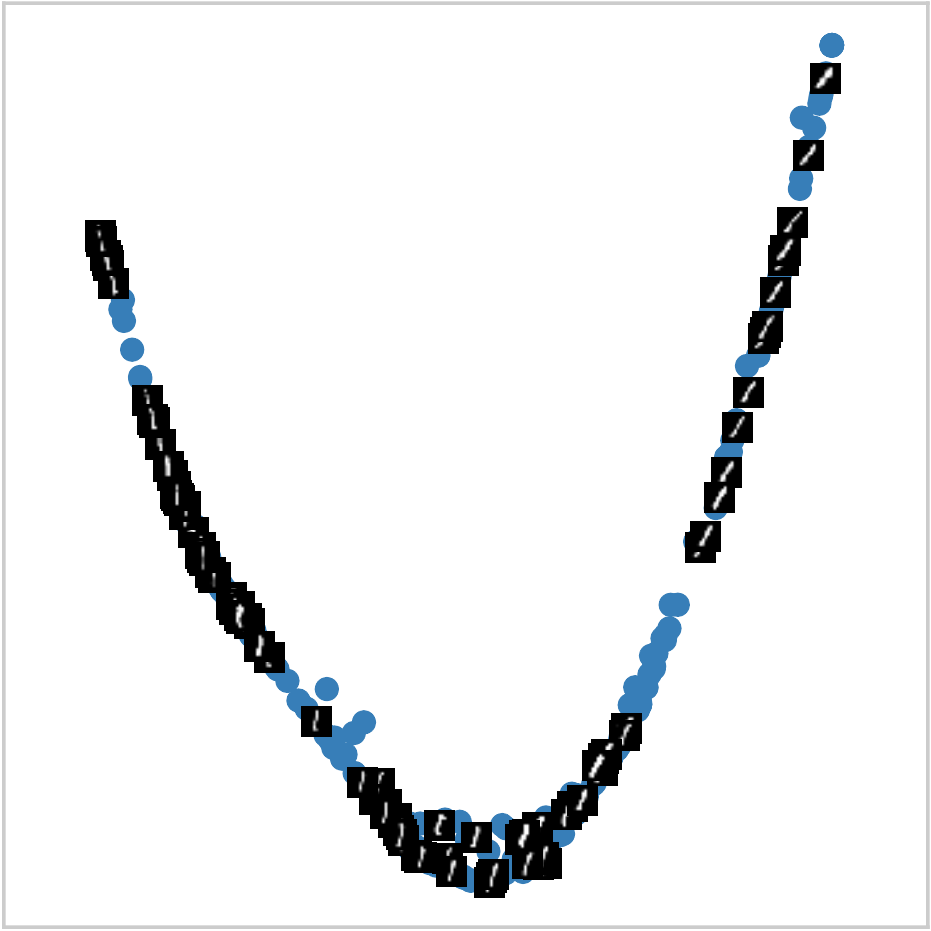}%
		\includegraphics[width=.5\linewidth]{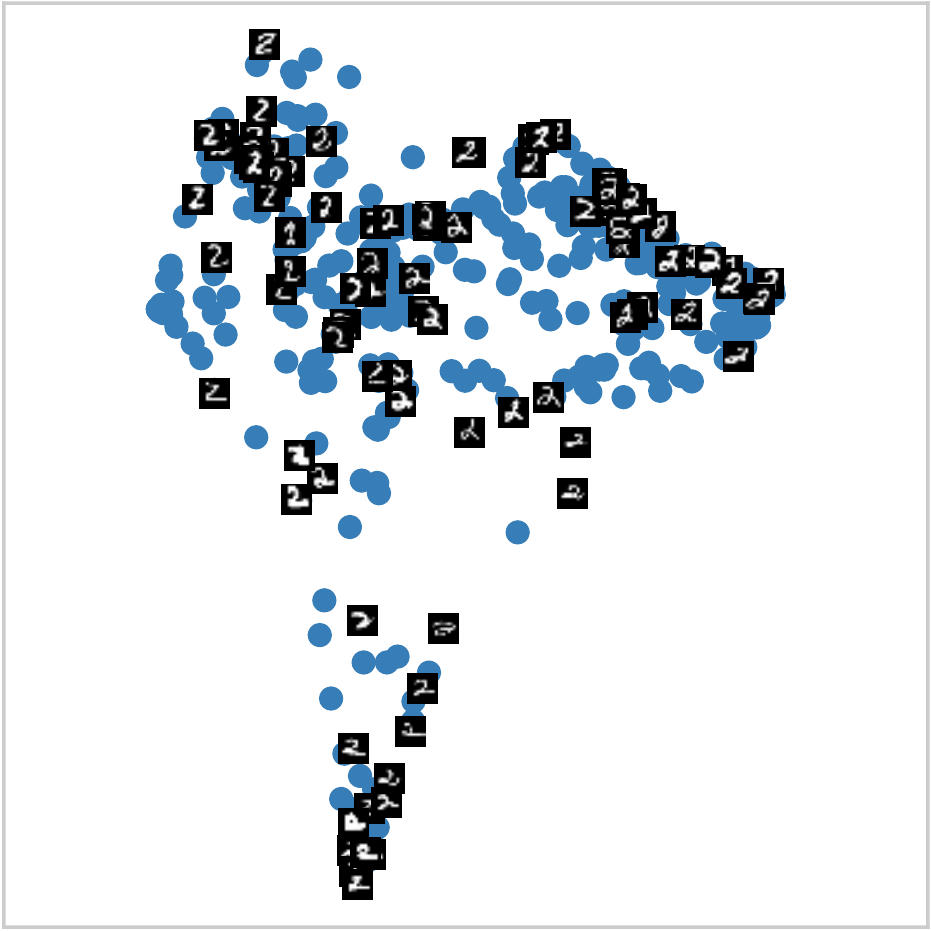}
		
		\caption{MNIST digits  ($K = 16$)}
		\label{fig:embedding_real_mnist}
	\end{subfigure}
	
	\caption{Finding two-dimensional embeddings with NOMAD.
		\protect\subref{fig:embedding_real_teapot} 100 images obtained by viewing a teapot from different angles in a plane. The input vectors size is 23028 ($76 \times 101$ pixels, 3 color channels). The manifold uncovers the change in orientation.
		\protect\subref{fig:embedding_real_faces} 256 images from 4 different subjects (each subject is marked with a different color in the figure), obtained by changing the position of the illumination source.  The input vectors size is 32256 ($192 \times 168$ pixels).  The manifold uncovers the change in illumination (from frontal, to half-illuminated, to dark faces, and back).
		\protect\subref{fig:embedding_real_mnist} 500 images handwritten instances of the same digit.  The input vectors size is 784 ($28 \times 28$ pixels). On the left and on the right, images of the digits 1 and 2, respectively. The manifold of 1s uncovers their orientation, while the manifold of 2s parameterizes features like size, slant, and line thickness.
		Details are better perceived by zooming on the plots.
	}
	\label{fig:embedding_real}
\end{figure*}

\begin{figure*}[p]
	\centering
	\begin{minipage}[t]{.59\textwidth}
    	\vspace{0pt}
		\centering
		\includegraphics[width=\linewidth]{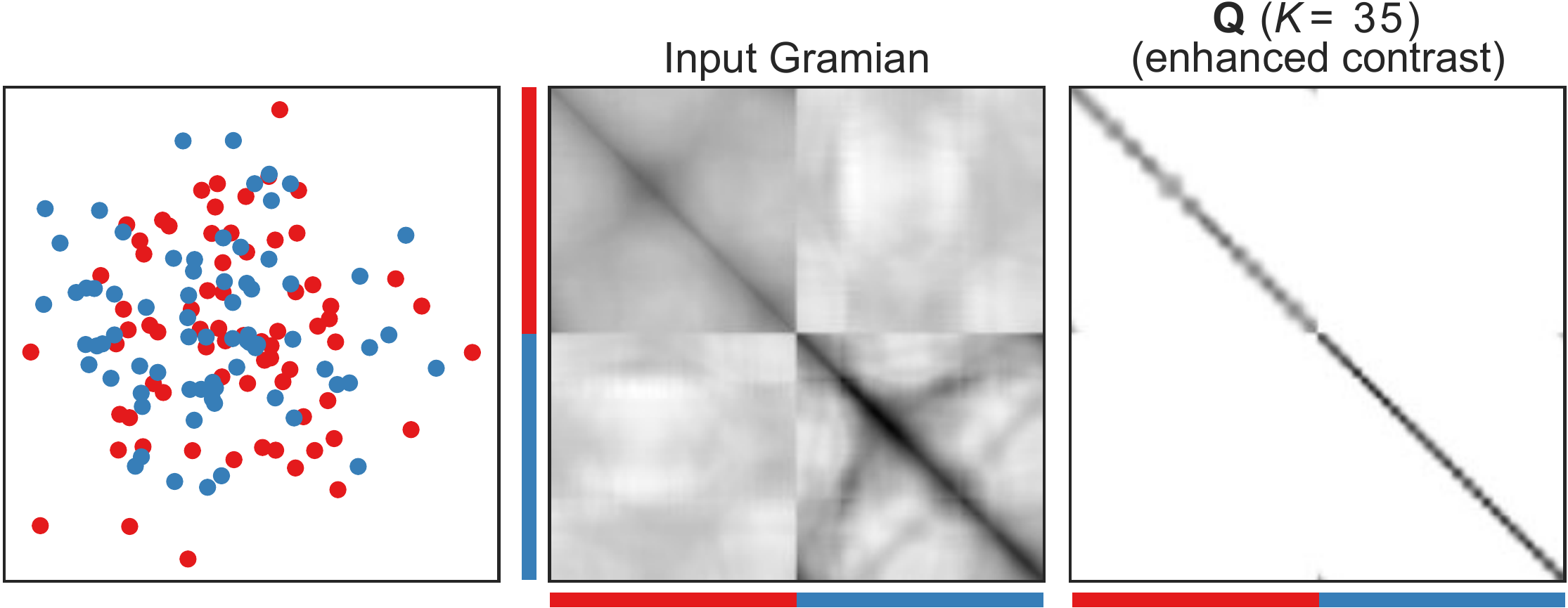}
	\end{minipage}%
	\hfill
	\begin{minipage}[t]{.39\textwidth}
    	\vspace{0pt}
		\centering
        \begin{tabu} to \linewidth {@{\hspace{0pt}} X[c,m] @{\hspace{3pt}} X[c,m] @{\hspace{0pt}}}
			{\scriptsize `Horse' manifold} &
            {\scriptsize `Lamp' manifold} \\	
			\includegraphics[width=\linewidth]{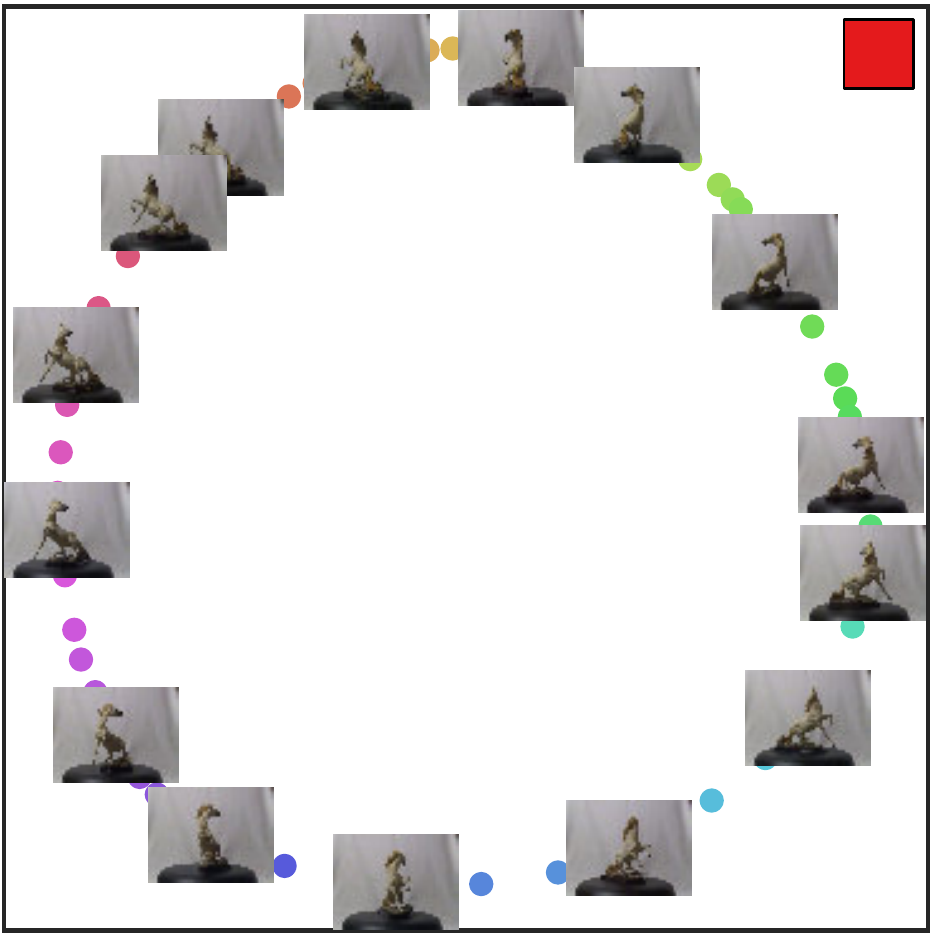} &
			\includegraphics[width=\linewidth]{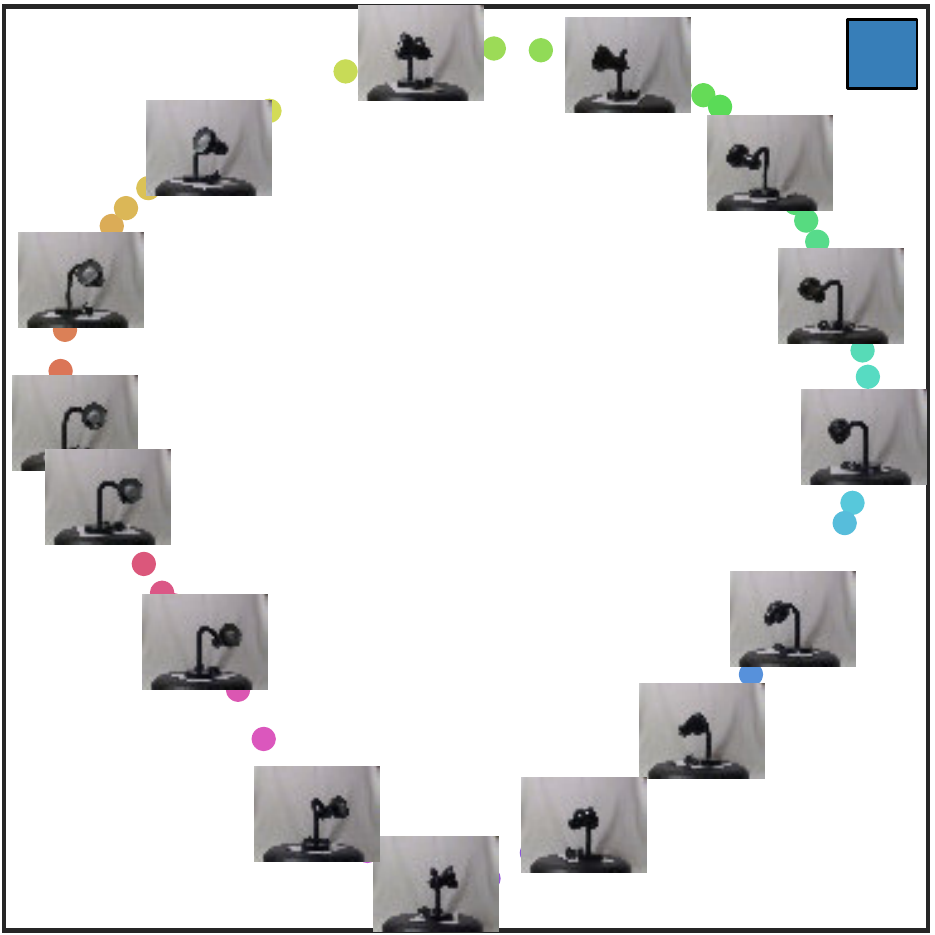}%
		\end{tabu}
	\end{minipage}
	
	\caption{144 images obtained by viewing a lamp and a horse figurine from different angles in a plane. The input vectors size is 589824 ($384 \times 512$ pixels, 3 color channels).
	We plot the input data using a 2D spectral embedding (the points corresponding to each object are colored differently).
	NOMAD correctly finds two submatrices, one for each manifold (for visual clarity, we enhance the contrast of $\mat{Q}$); furthermore, NOMAD recovers closed manifolds.
	}
\label{fig:joint}
\end{figure*}

\begin{figure}[t]
	\centering
	\begin{subfigure}[b]{.8\textwidth}
		\centering
		\includegraphics[width=\linewidth]{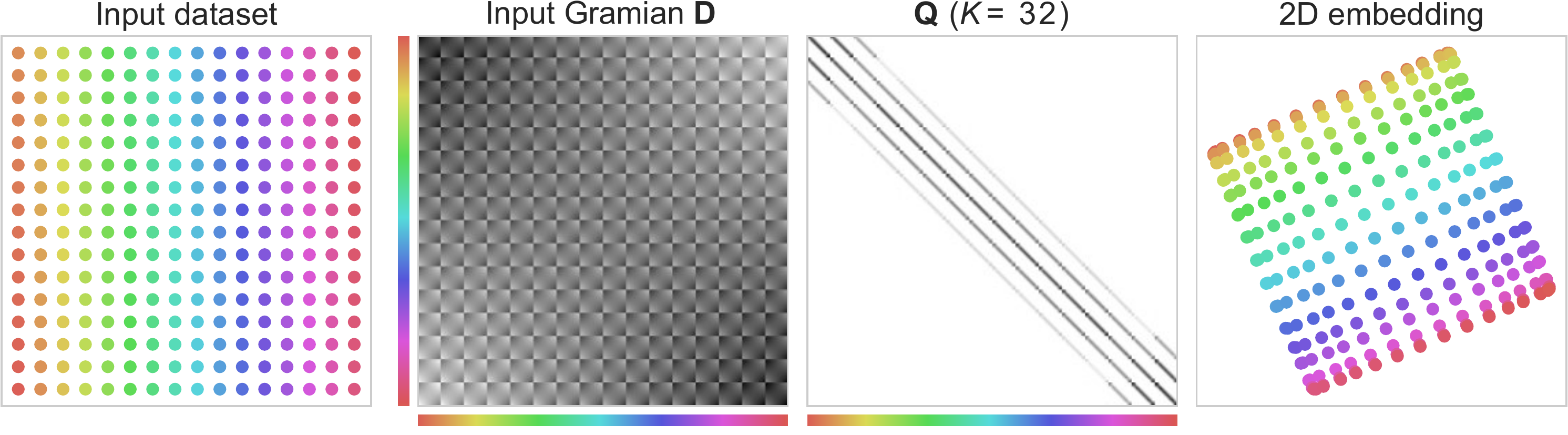}
		
		\caption{}
		\label{fig:square_embedding}
	\end{subfigure}
	\begin{subfigure}[b]{.8\textwidth}
		\centering
        \includegraphics[width=\linewidth]{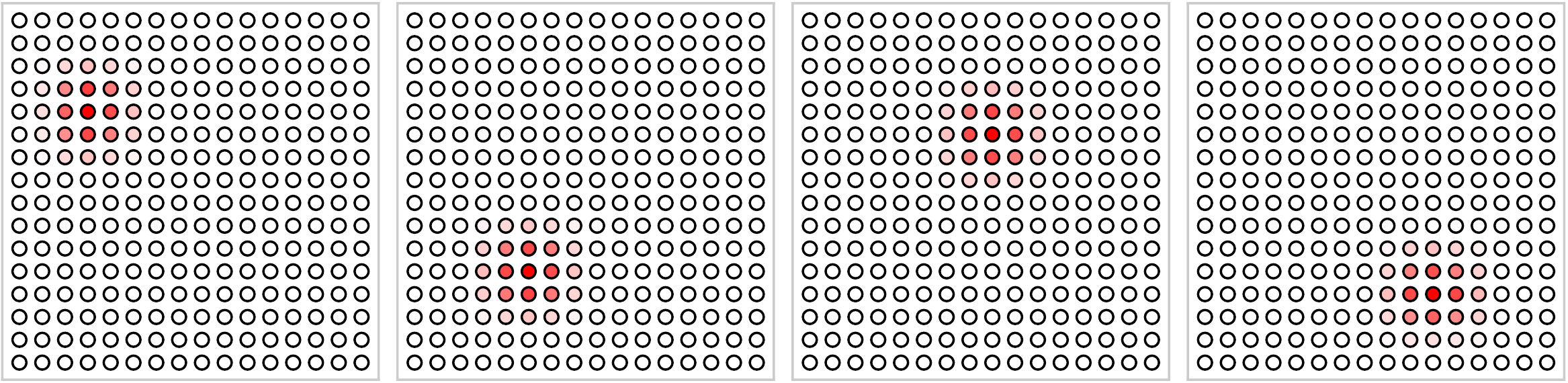}
		
		\caption{}
		\label{fig:square_Q_on_data}
	\end{subfigure}
	
	\caption{Learning a 2D manifold embedded in a 10-dimensional ambient space; the first two dimensions are regular samples on a 2D grid (shown on the left) and the remaining ones are Gaussian noise. \protect\subref{fig:square_embedding} NOMAD recovers the right 2D structure (of course, some distortion is expected along the edges).
	\protect\subref{fig:square_Q_on_data} We show a few columns of $\mat{Q}$ on top of the data itself: the red level indicates the value of the corresponding entry in $\mat{Q}$ (red maps to high values, white maps to a zero).
	NOMAD effectively tiles the dataset with a collections of overlapping local neighborhoods centered at each data point. These patches contain all the information necessary to reconstruct the intrinsic manifold geometry. 
	}
	\label{fig:square}
\end{figure}

\noindent\textbf{Recovering multiple manifolds.}
$K$-means cannot effectively recover multiple distinct manifolds (although in some very particular cases, with well separated and linearly separable manifolds, it may group the points correctly). Interestingly, NOMAD does not inherit this limitation. Of course, if we set the NOMAD parameter $K$ to the number of manifolds that we want to recover, there is no hope in the general case to obtain a result substantially better than the one obtained with Lloyd's algorithm \citep{Lloyd1982}. However, setting the NOMAD parameter $K$ to be higher than the number of manifolds leads to a correct identification and characterization of their structures.  Note that setting a similarly large $K$ would not help $K$-means, as it is designed to partition the data, thus breaking each manifold into several pieces.

An example with two rings is presented in \cref{fig:circles_eigendecomposition}. We can expect that, as the single ring in \cref{sec:theory} is described by Fourier modes, NOMAD describes two rings with two sets of Fourier modes with disjoint support; the solution is now arranged as two orthogonal high-dimensional cones, see \cref{fig:two_cones_structure}. In a sense, the manifold learning problem is already solved, as there are two circulant submatrices, one for each manifold, with no interactions between them.
If the user desires a hard assignment of points to manifolds, we can simply consider $\mat{Q}_*$ as the adjacency matrix of a weighted graph and compute its connected components.

\paragraph{Discussion of the experimental results.}
To demonstrate the manifold-learning capabilities of the NOMAD, we present several examples, both synthetic and real-world. The trefoil knot in \cref{fig:trefoil} is a 1D manifold in 3D; it is the simplest example of a nontrivial knot, meaning that it is not possible to ``untie'' it in three dimensions without cutting it. However, the manifold learning procedure in \cref{sec:manifold} learns a closed 1D manifold.
We also present examples using real-world high-dimensional datasets, recovering in every case structures of interest, see \cref{fig:embedding_real}. In \cref{fig:embedding_real_teapot,fig:embedding_real_faces,fig:embedding_real_mnist}, NOMAD respectively uncovers the camera rotation, the orientation of the lighting source, and specific handwriting features.

To demonstrate the multi-manifold learning and manifold-disentangling capabilities of NOMAD, we use several standard synthetic datasets, see \cref{fig:double_swiss_roll,fig:circles_eigendecomposition,fig:moons}.  In all of these examples, NOMAD is able to disentangle clusters that are not linearly separable. We also present results for a real-world dataset (\cref{fig:joint}) which is similar to the one in \cref{fig:embedding_real_teapot} but with two objects. NOMAD recovers two closed manifolds, each of which containing the viewpoints of one object. The structure of the solution is similar to the one in \cref{fig:two_cones_structure}.

Finally, we include an example in which NOMAD captures the structure of a 2D manifold living in a 10-dimensional space, see \cref{fig:square}. NOMAD assigns a local patch to each data point (non-zero values for neighboring points, zeros elsewhere). These  local patches tile the manifold with overlap (as in soft-clustering), allowing to recover its grid structure. Such tiling takes place in all of the examples included in the paper.

\subsection{Manifold disentangling with multi-layer NOMAD}

The recursive application of NOMAD, with successively decreasing values of $K$, enhances its manifold-disentangling capabilities. The pseudocode is as follows:

\newlength{\interspacetitleruledtemp}%
\newlength{\algotitleheightruletemp}%
\setlength{\interspacetitleruledtemp}{\interspacetitleruled}%
\setlength{\algotitleheightruletemp}{\algotitleheightrule}%
\setlength{\interspacetitleruled}{0pt}%
\setlength{\algotitleheightrule}{0pt}%
\begin{algorithm2e}[H]

	$\mat{D}_1 \gets \transpose{\mat{X}} \mat{X}$\;
	\For{$l = 1, 2, \dots$}{
		Choose $K_l$ (for all $l>1$ we require $K_l \leq K_{l-1}$)\;
		Find the solution $\mat{Q}_l$ of NOMAD with input matrix $\mat{D}_l$ and parameter $K_l$\;
		$\mat{D}_{l+1} \gets \mat{Q}_l$\;
	}
	\Return $\{ \mat{Q}_{l} \}_{l=1,2,\dots}$
\end{algorithm2e}
\setlength{\interspacetitleruled}{\interspacetitleruledtemp}%
\setlength{\algotitleheightrule}{\algotitleheightruletemp}%

In \cref{fig:multilayer}, we present the evolution of successive matrices $\mat{Q}_{l}$. In all of these examples, multi-layer NOMAD is able to correctly identify clusters that are not linearly separable, something unattainable with single-layer NOMAD or with $K$-means clustering. Interestingly, we find that the manifolds are already segregated after one application of NOMAD in the direction of the leading eigenvectors of $\mat{Q}$ (see \cref{fig:circles_eigendecomposition}). The rest of the NOMAD layers little-by-little sieve out the (unwanted) smaller eigenvalues in an unsupervised fashion.

To turn this algorithm into a general data-analysis tool, we need an automated selection of the values $\{ K_l\}$ which is a non-trivial task in general.
Additional results, using different sequences $\{ K_l\}$, can be found in \cref{sec:additional_multilayer_results}. Further research is needed to develop such algorithm and fully understand multi-layer NOMAD's interesting behavior.

\begin{figure}[t]
    \includegraphics[width=.744\linewidth]{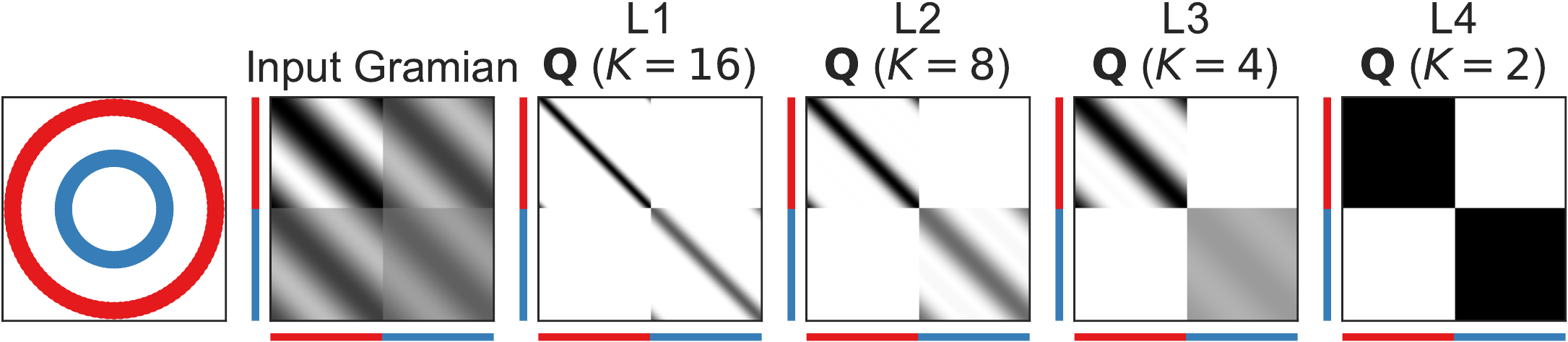}%
    \\[2pt]
    \includegraphics[width=.744\linewidth]{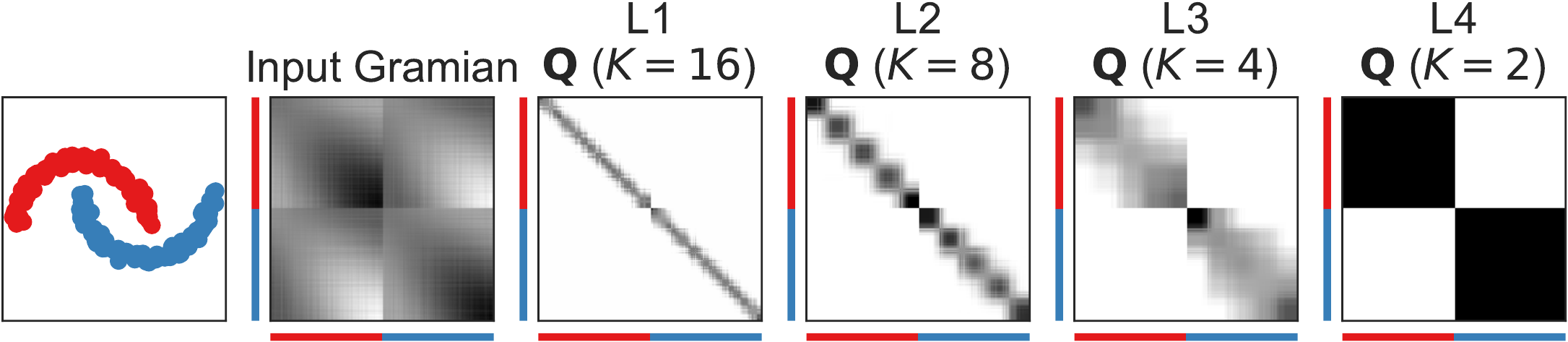}%
    \\[2pt]
	\includegraphics[width=\linewidth]{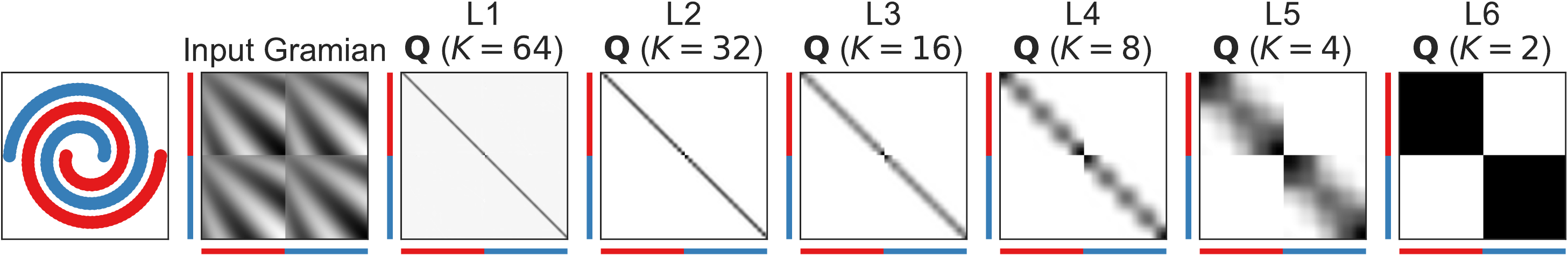}%
    
	\caption{Results of recursive NOMAD application (multi-layer NOMAD).
    For each example, we show matrices $\mat{Q}_*$ computed by the successive application of the algorithm. Multi-layer NOMAD untangles these linearly non-separable manifolds and, in the final layer, assigns each manifold to one cluster.}
	\label{fig:multilayer}
\end{figure}

\begin{figure}
	\centering   
    \begin{subfigure}{\textwidth}
    \begin{footnotesize}
    \begin{tabu} to \textwidth {X[1,c,m] *{3}{X[15,c,m]}}
        &
        No noise &
        Noise 0.05 &
        Noise 0.10 \\
        
        \begin{sideways}
        	One ring (\cref{fig:distances2gramian})
        \end{sideways} &
		\includegraphics[width=\linewidth]{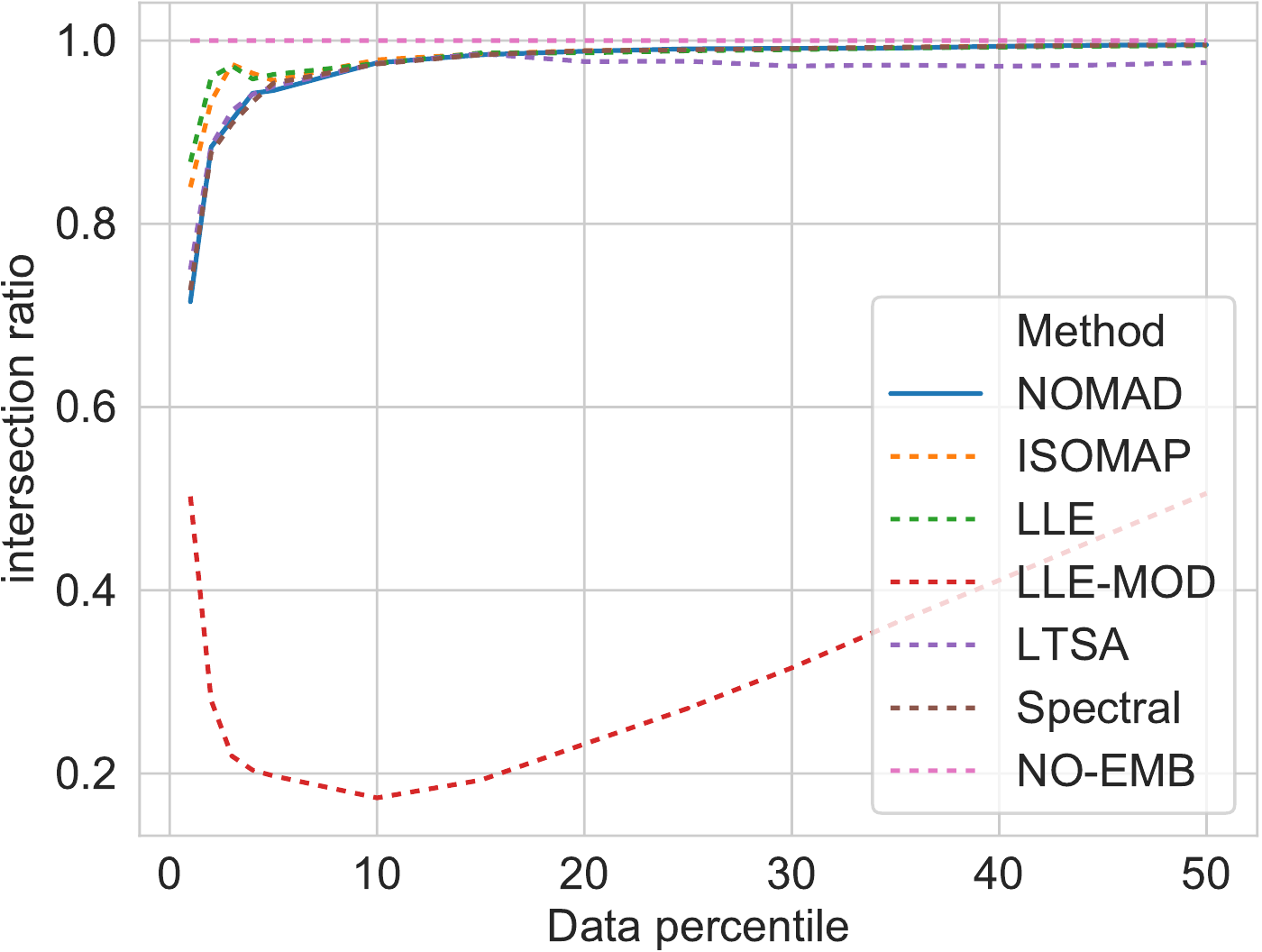} &
		\includegraphics[width=\linewidth]{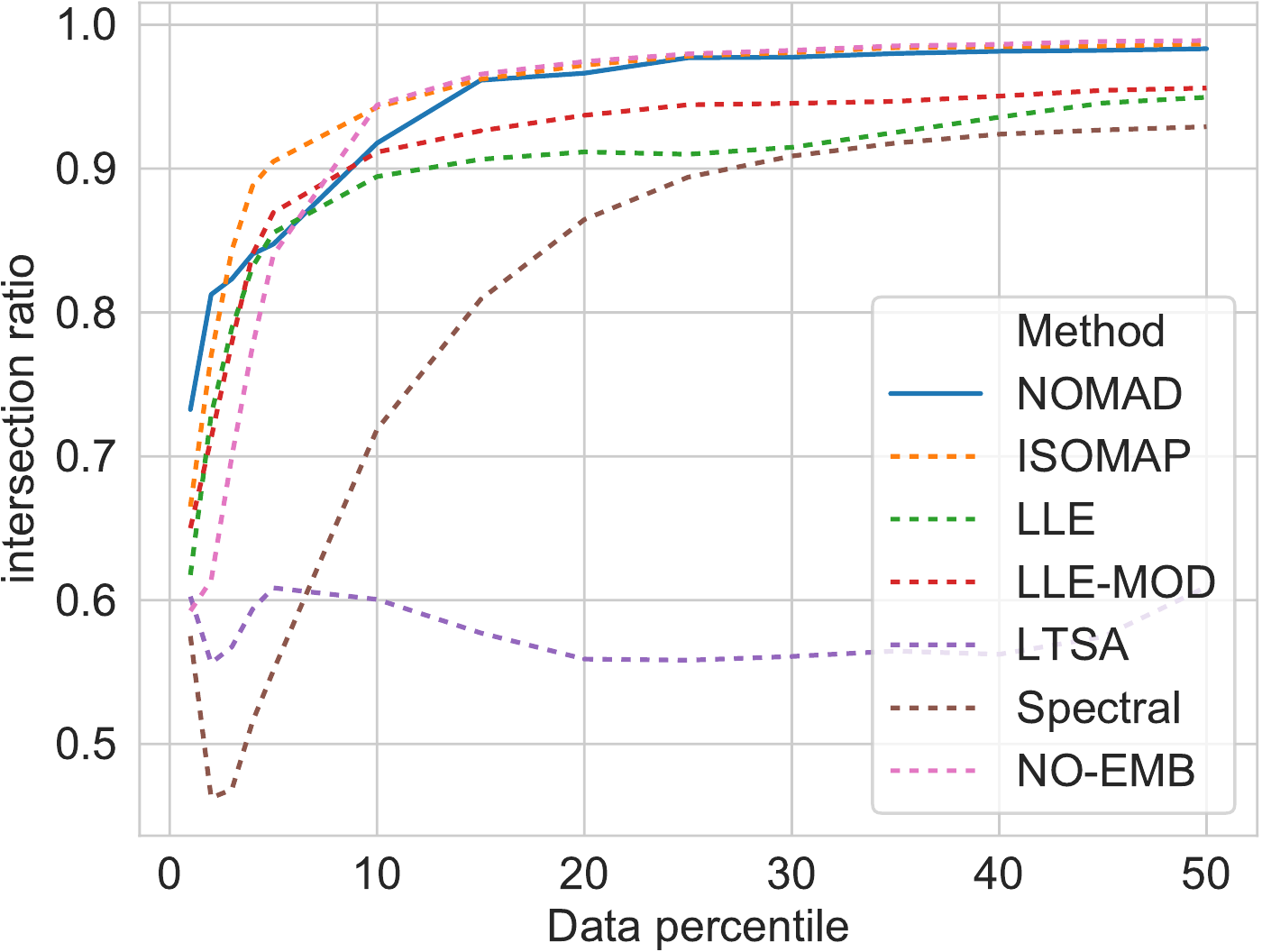} &
		\includegraphics[width=\linewidth]{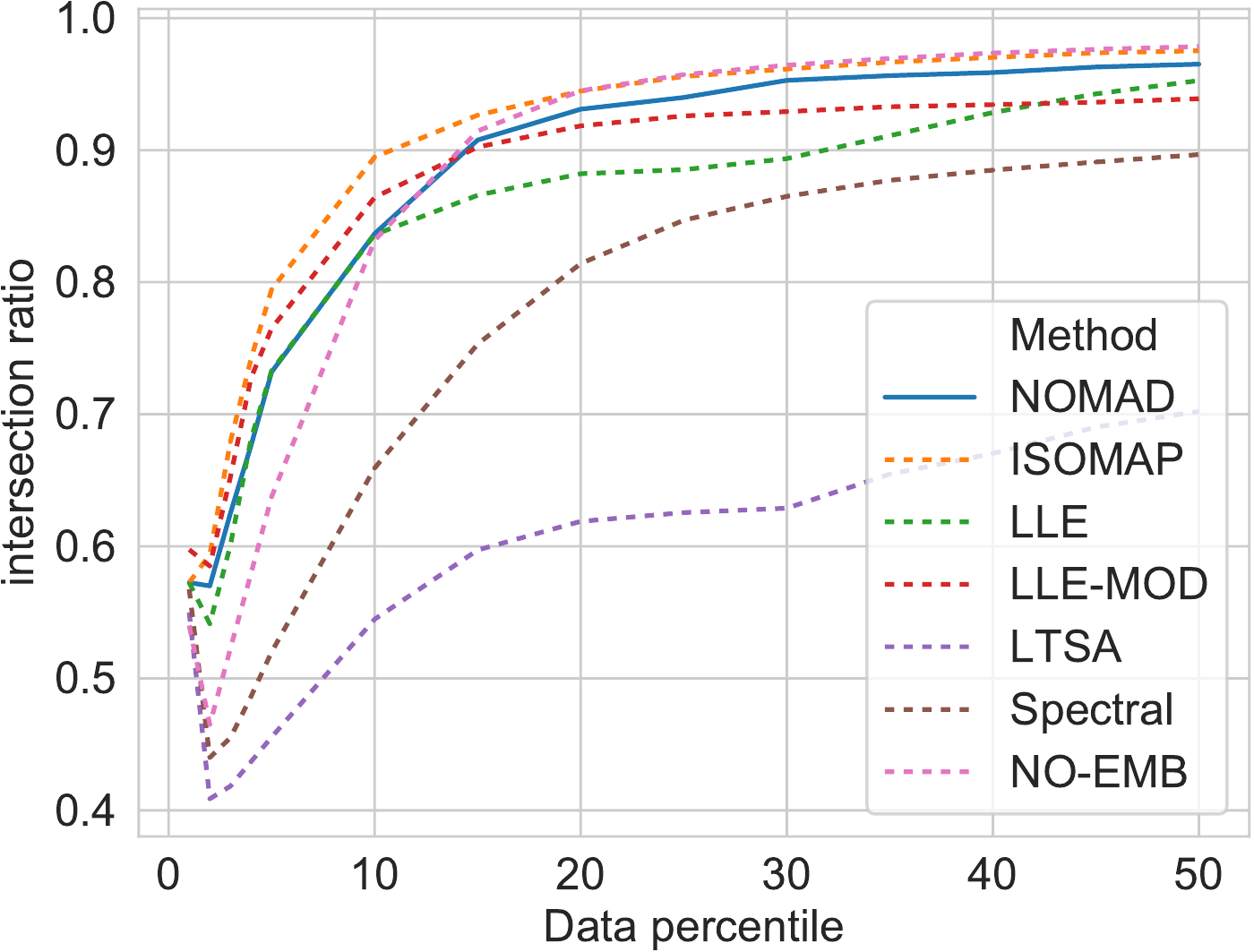} \\

    	\begin{sideways}
        	One half-ring
        \end{sideways} &
		\includegraphics[width=\linewidth]{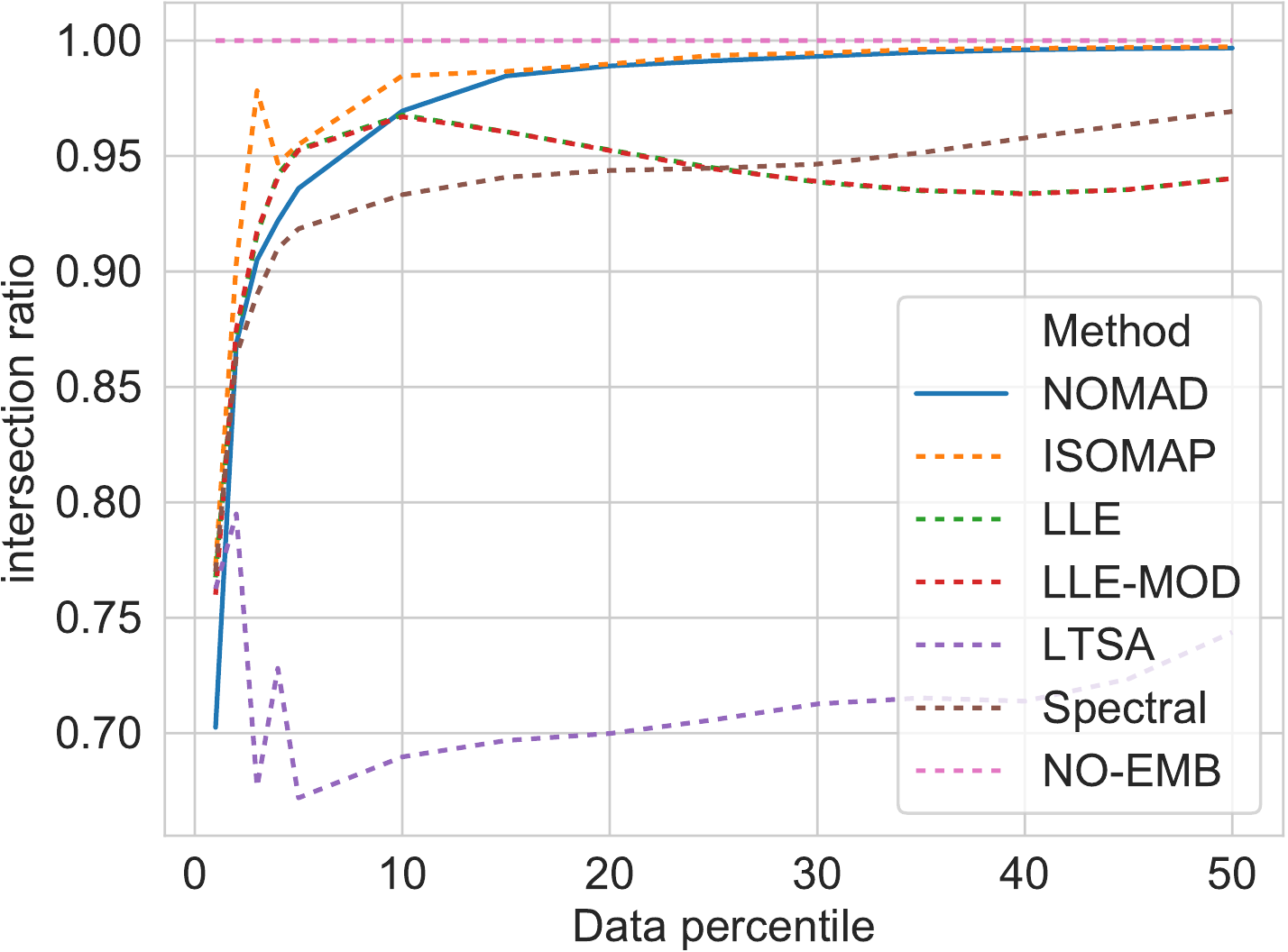} &
		\includegraphics[width=\linewidth]{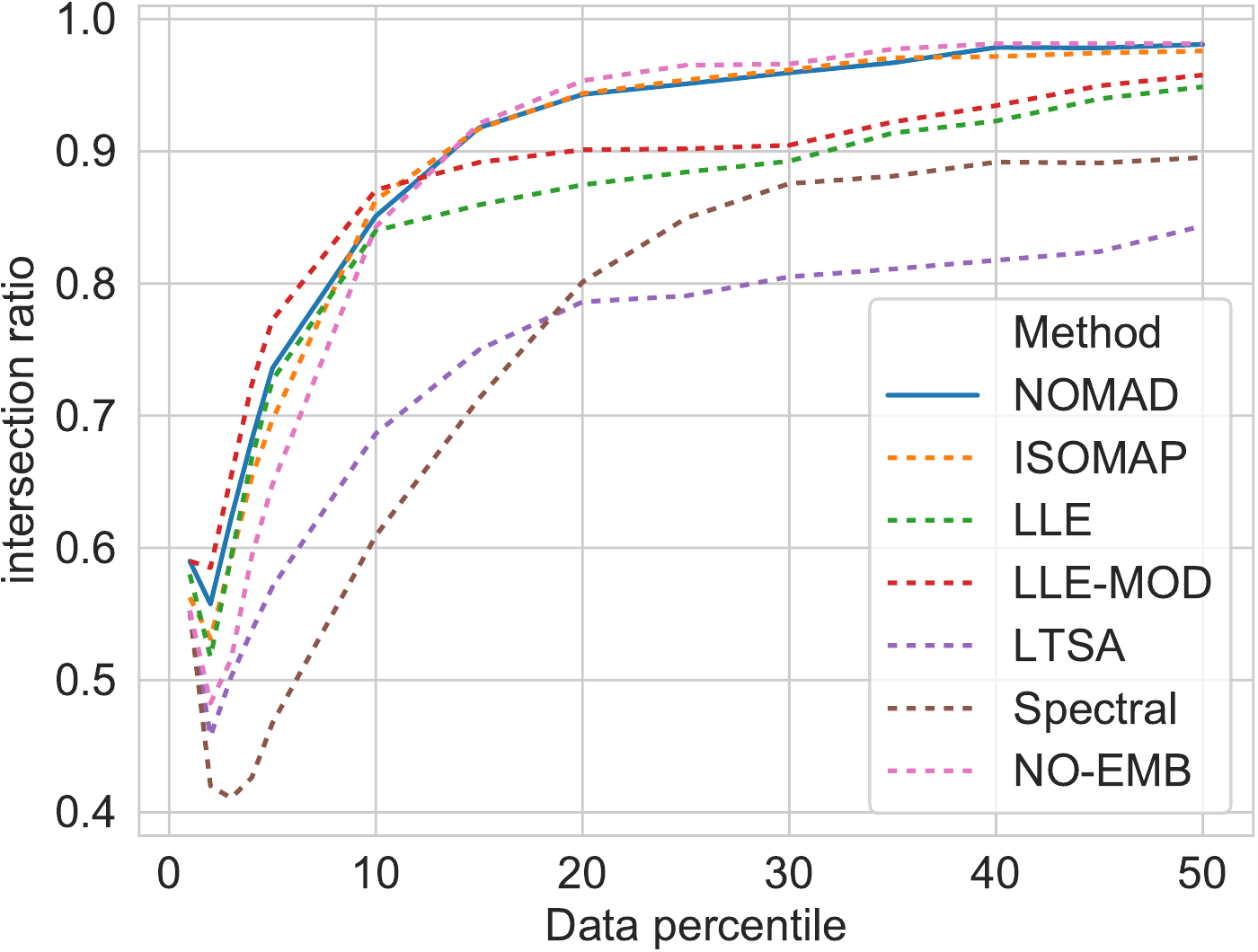} &
		\includegraphics[width=\linewidth]{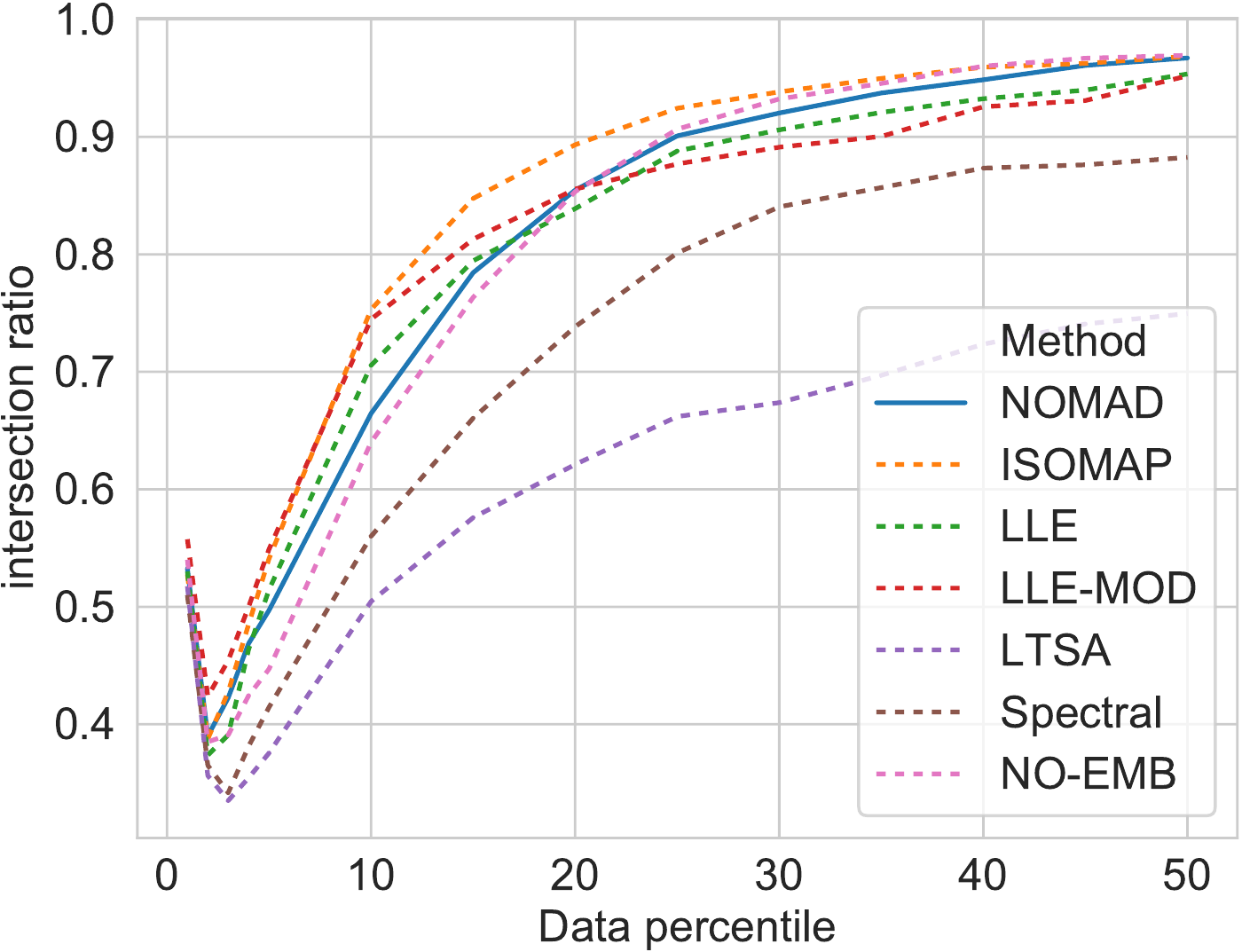} \\

    	\begin{sideways}
        	Teapots (\cref{fig:embedding_real_teapot})
        \end{sideways} &
		\includegraphics[width=\linewidth]{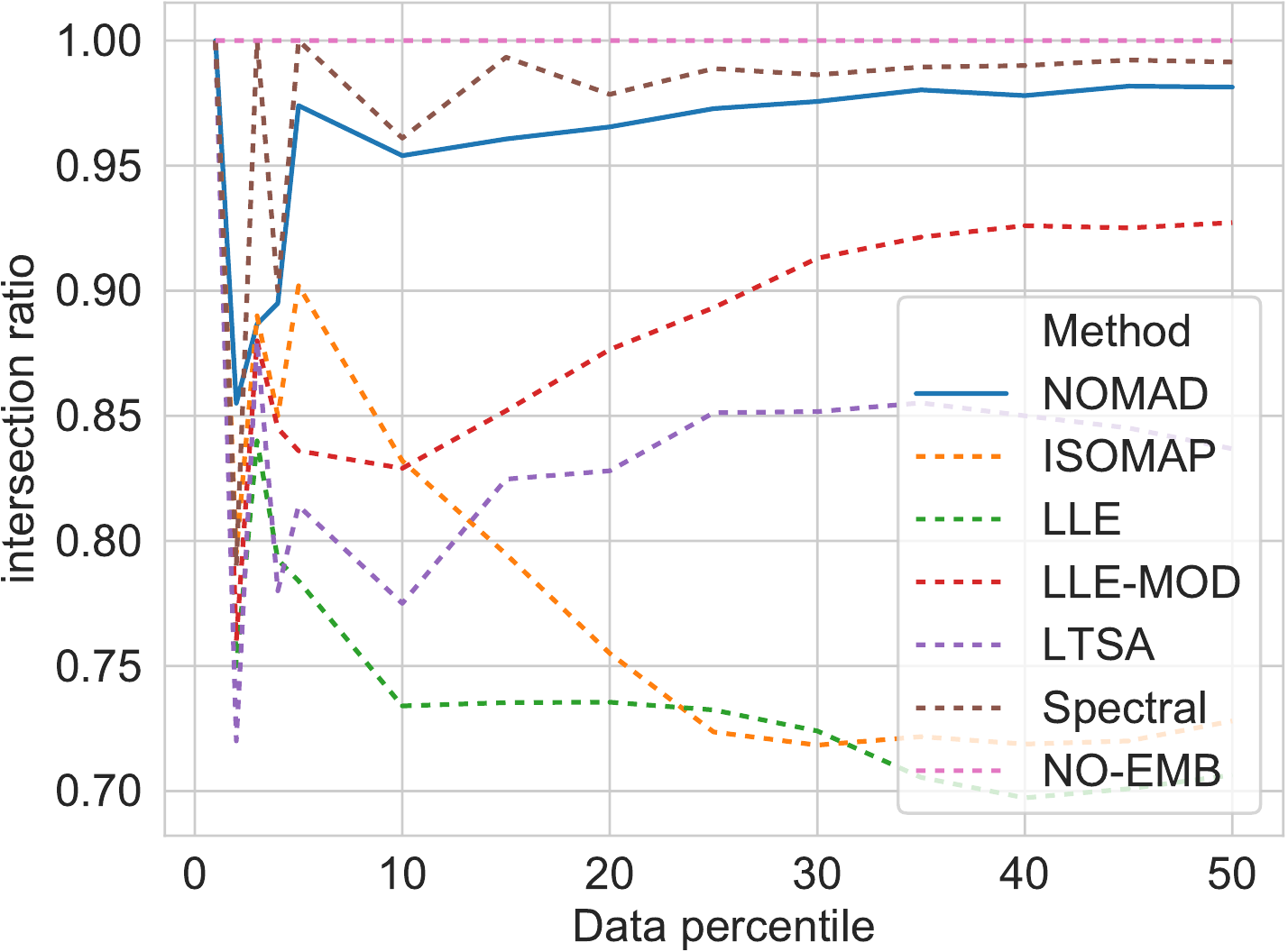} &
		\includegraphics[width=\linewidth]{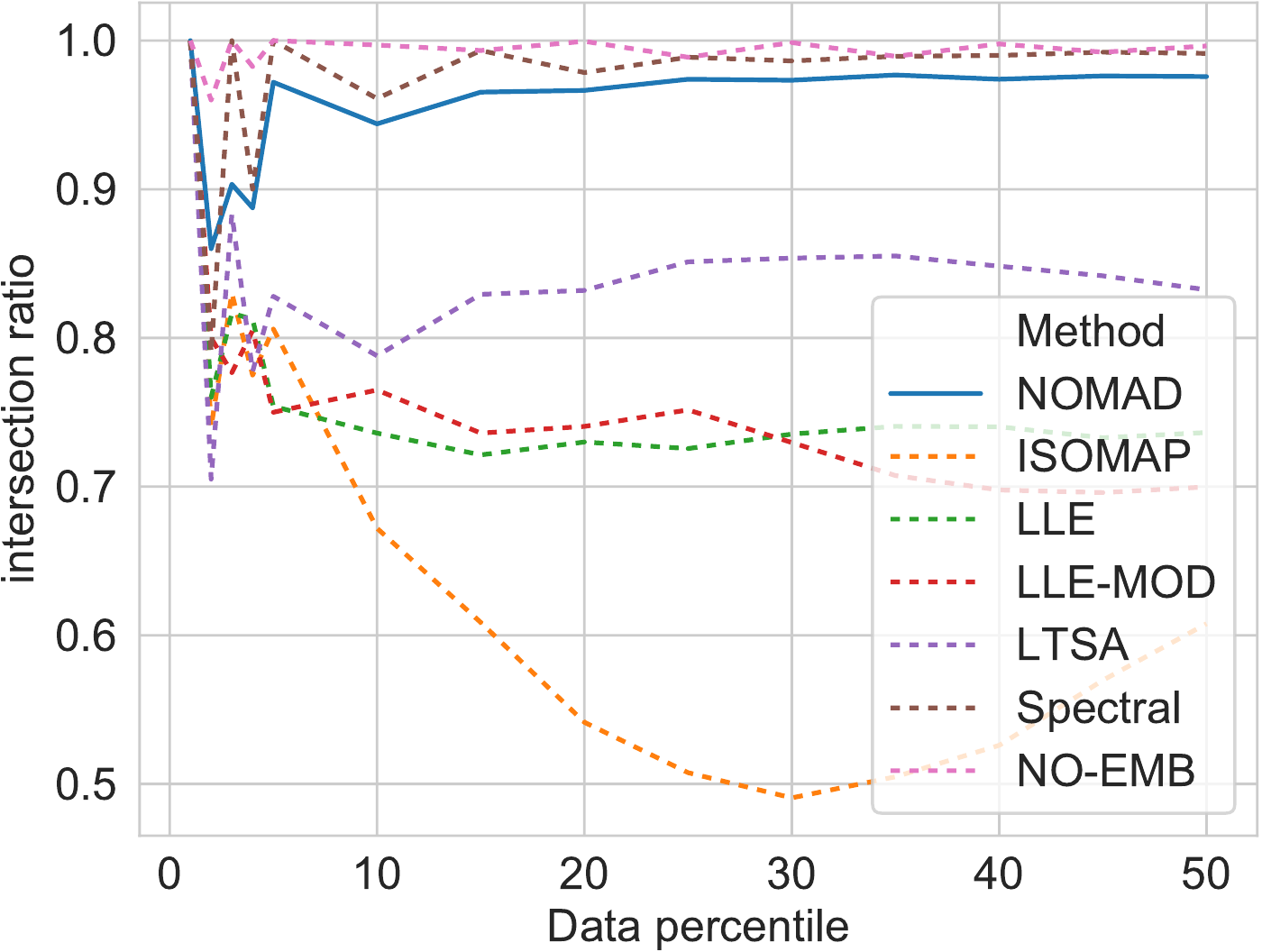} &
		\includegraphics[width=\linewidth]{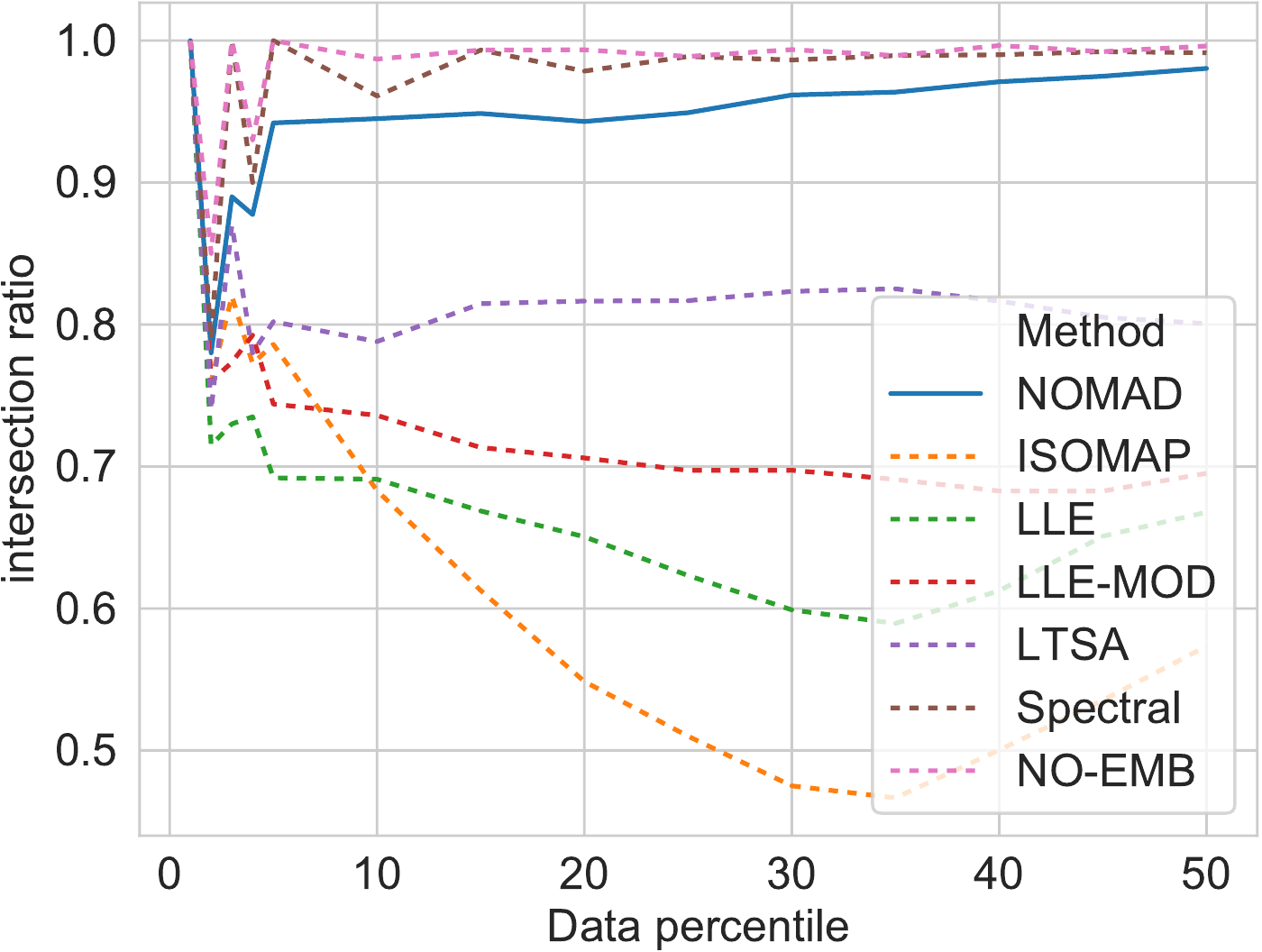} \\
    \end{tabu}
    \end{footnotesize}
    \caption{One manifold: NOMAD is among the best-in-class methods.}
    \end{subfigure}
    \\
    \begin{subfigure}{\textwidth}
    \begin{footnotesize}
    \begin{tabu} to \textwidth {X[1,c,m] *{3}{X[15,c,m]}}
    	&
        No noise &
        Noise 0.05 &
        Noise 0.10 \\
        
        \begin{sideways}
        	Two rings (\cref{fig:2circles})
        \end{sideways} &
		\includegraphics[width=\linewidth]{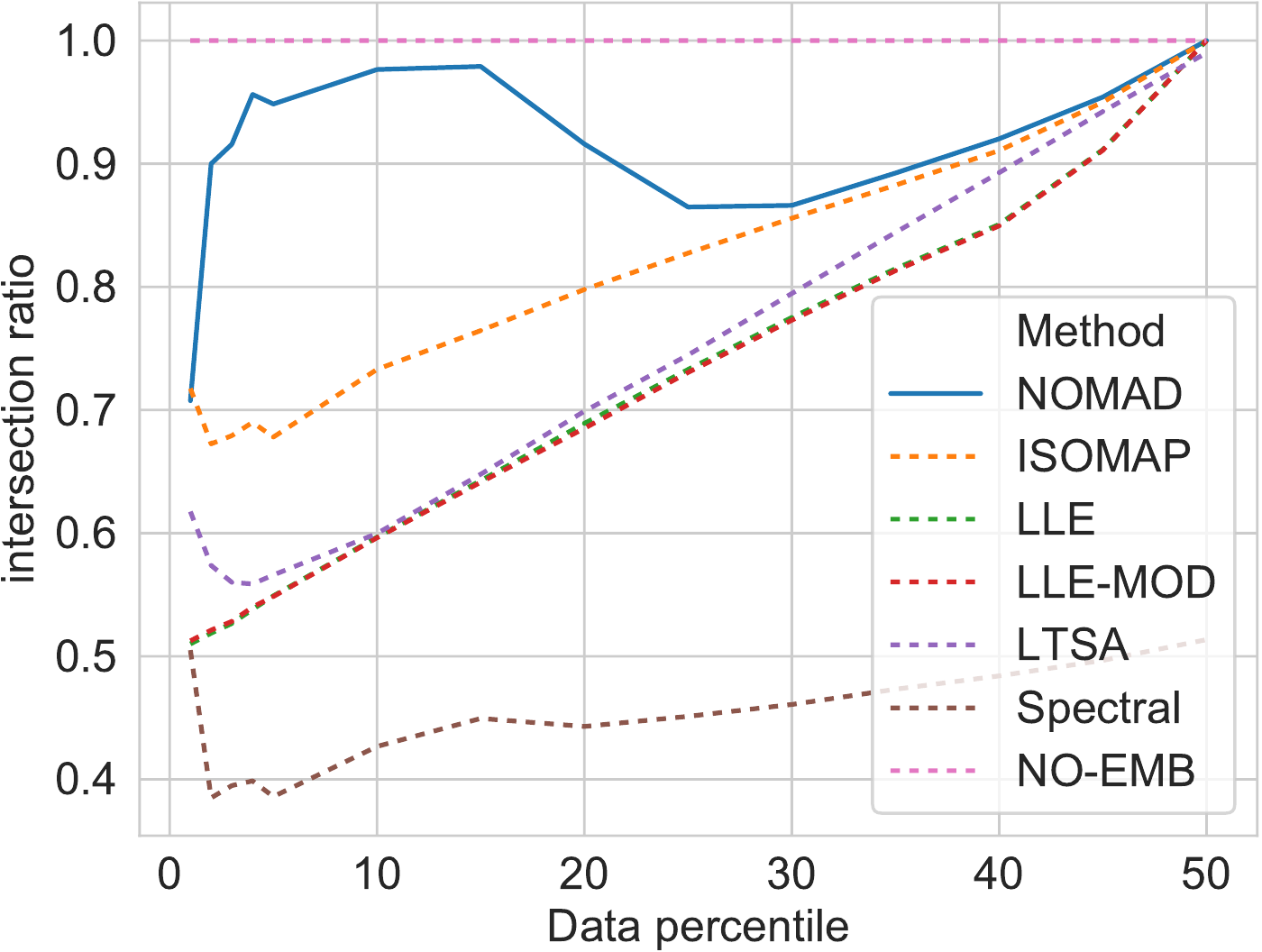} &
		\includegraphics[width=\linewidth]{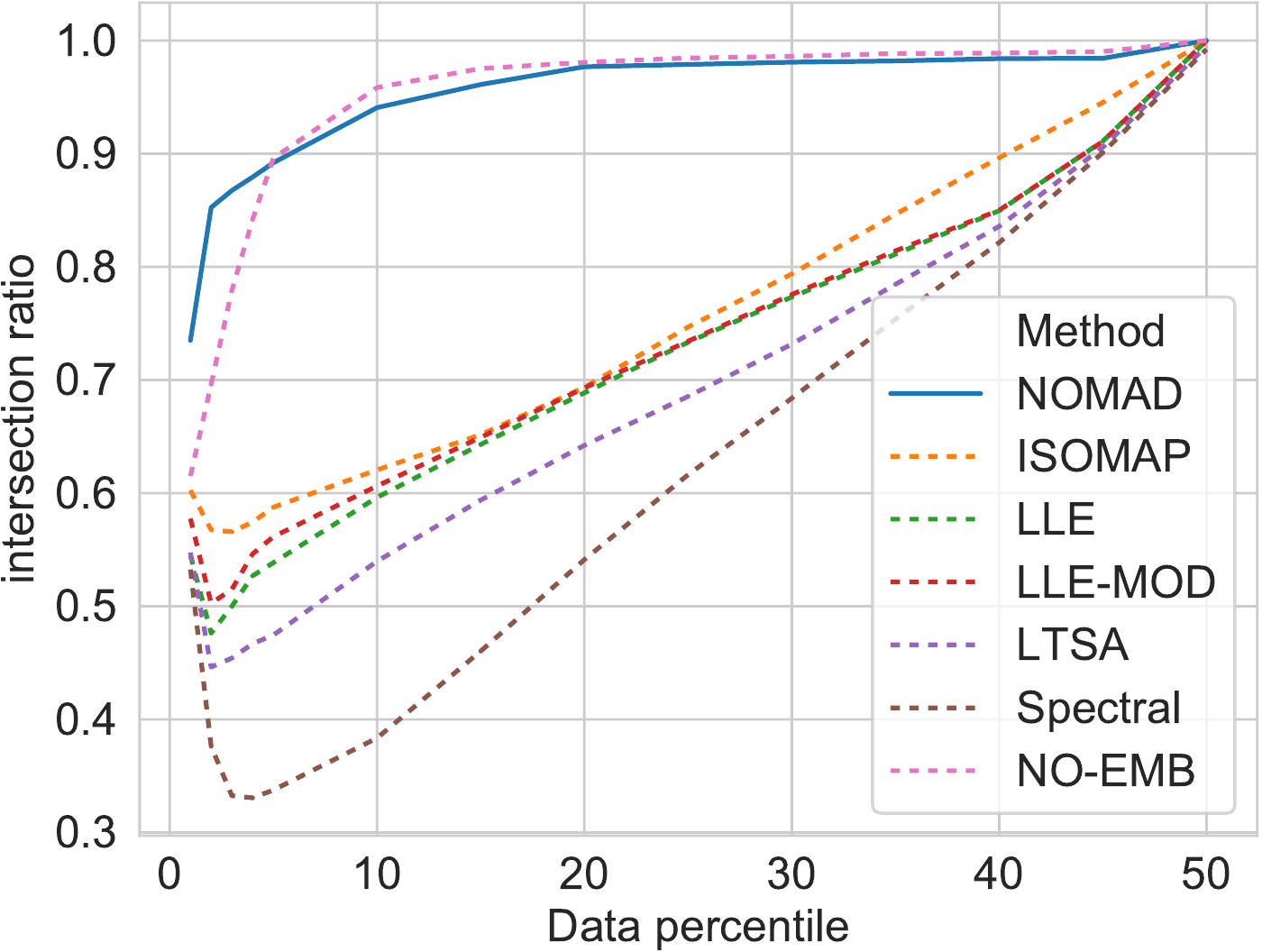} &
		\includegraphics[width=\linewidth]{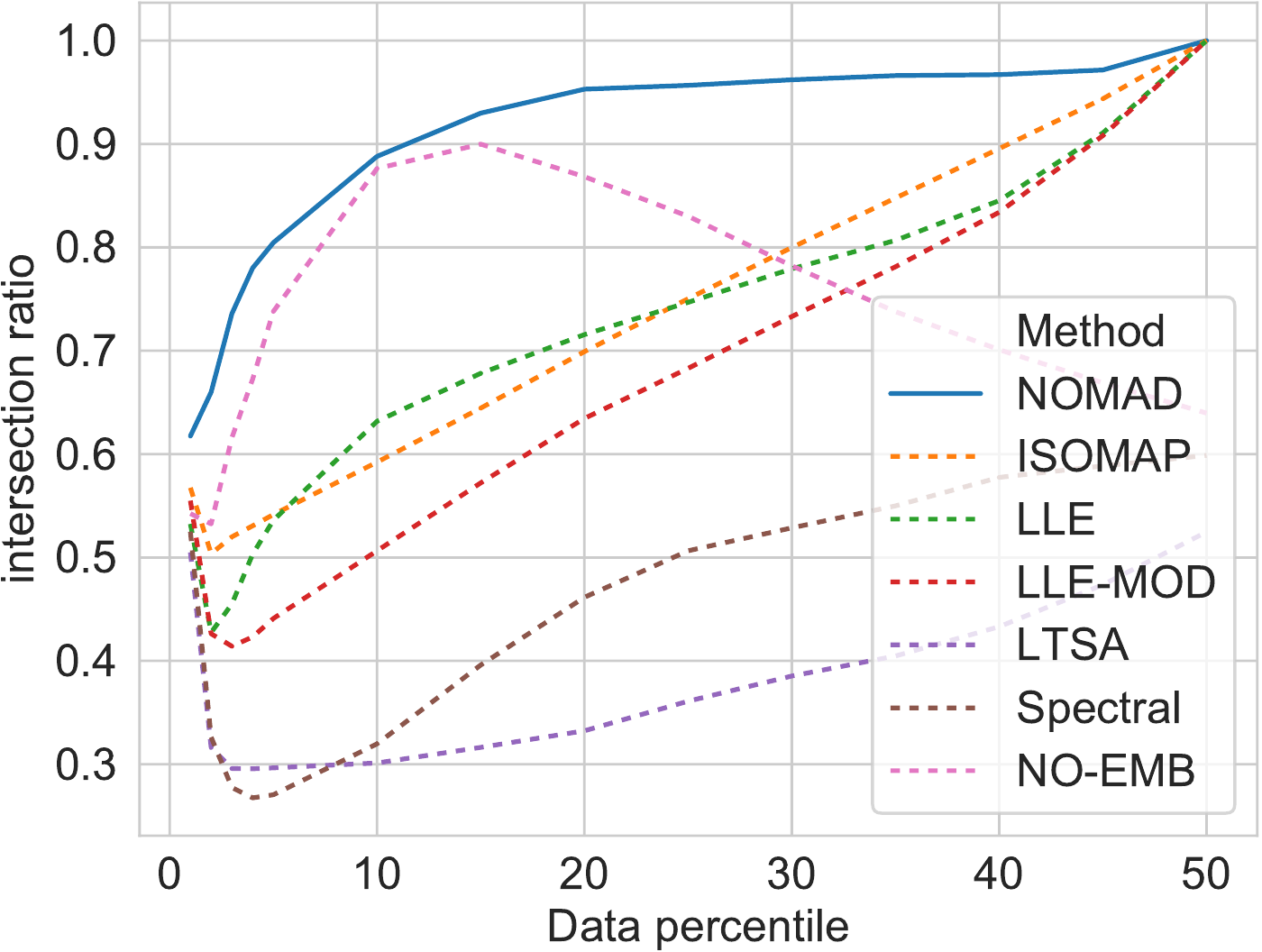} \\

		\begin{sideways}
        	Two half-rings (\cref{fig:moons})
        \end{sideways} &
		\includegraphics[width=\linewidth]{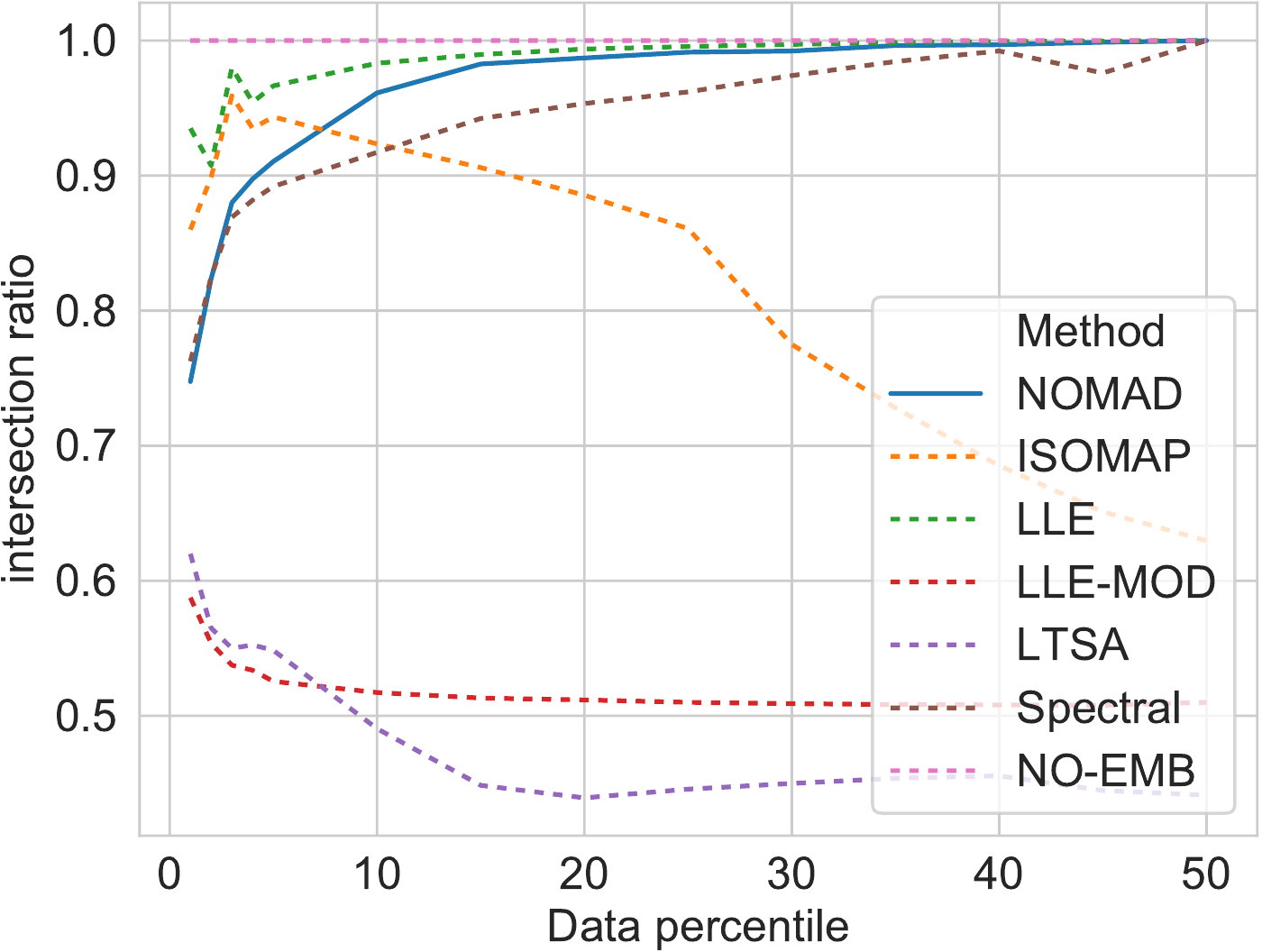} &
		\includegraphics[width=\linewidth]{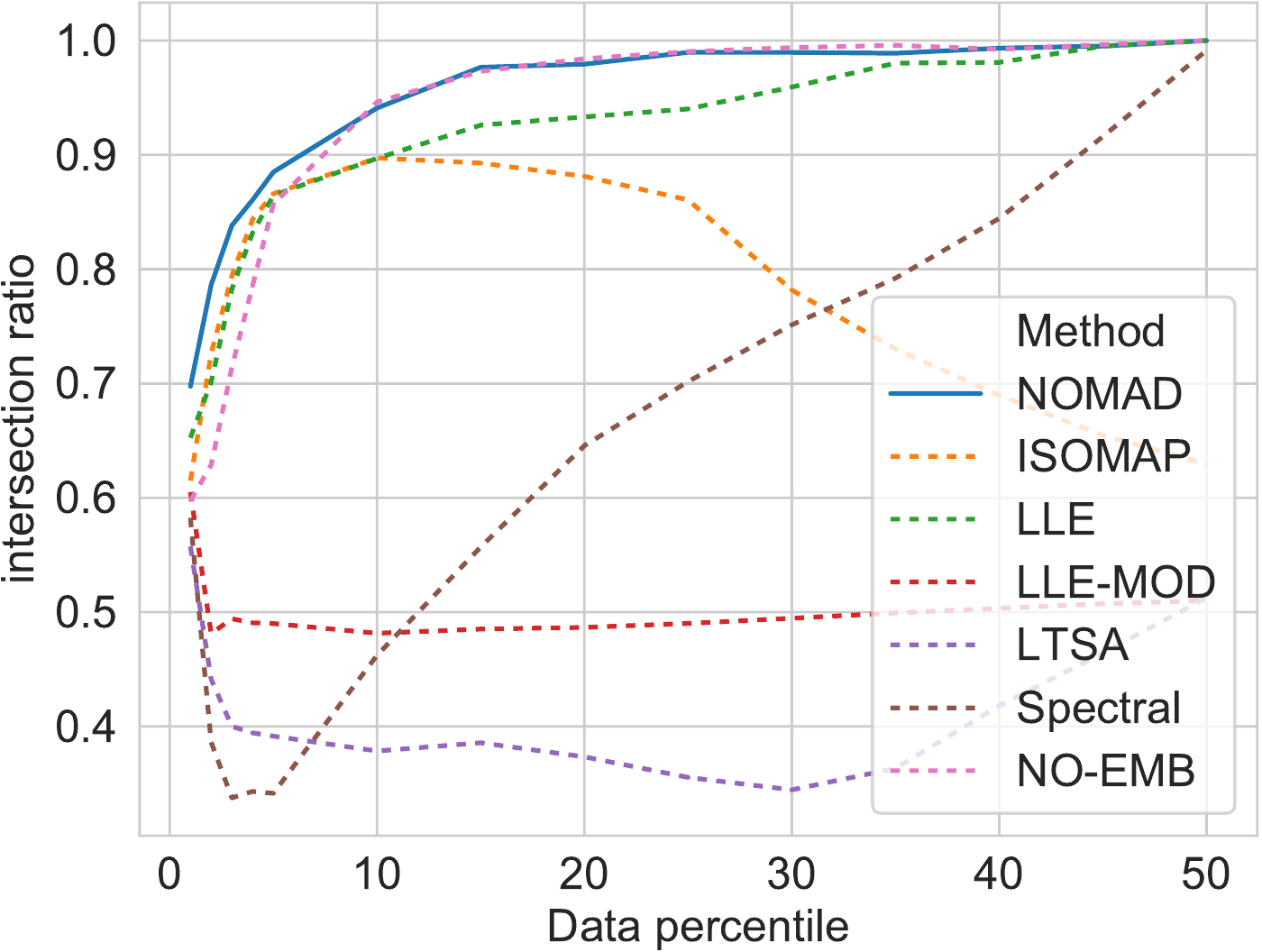} &
		\includegraphics[width=\linewidth]{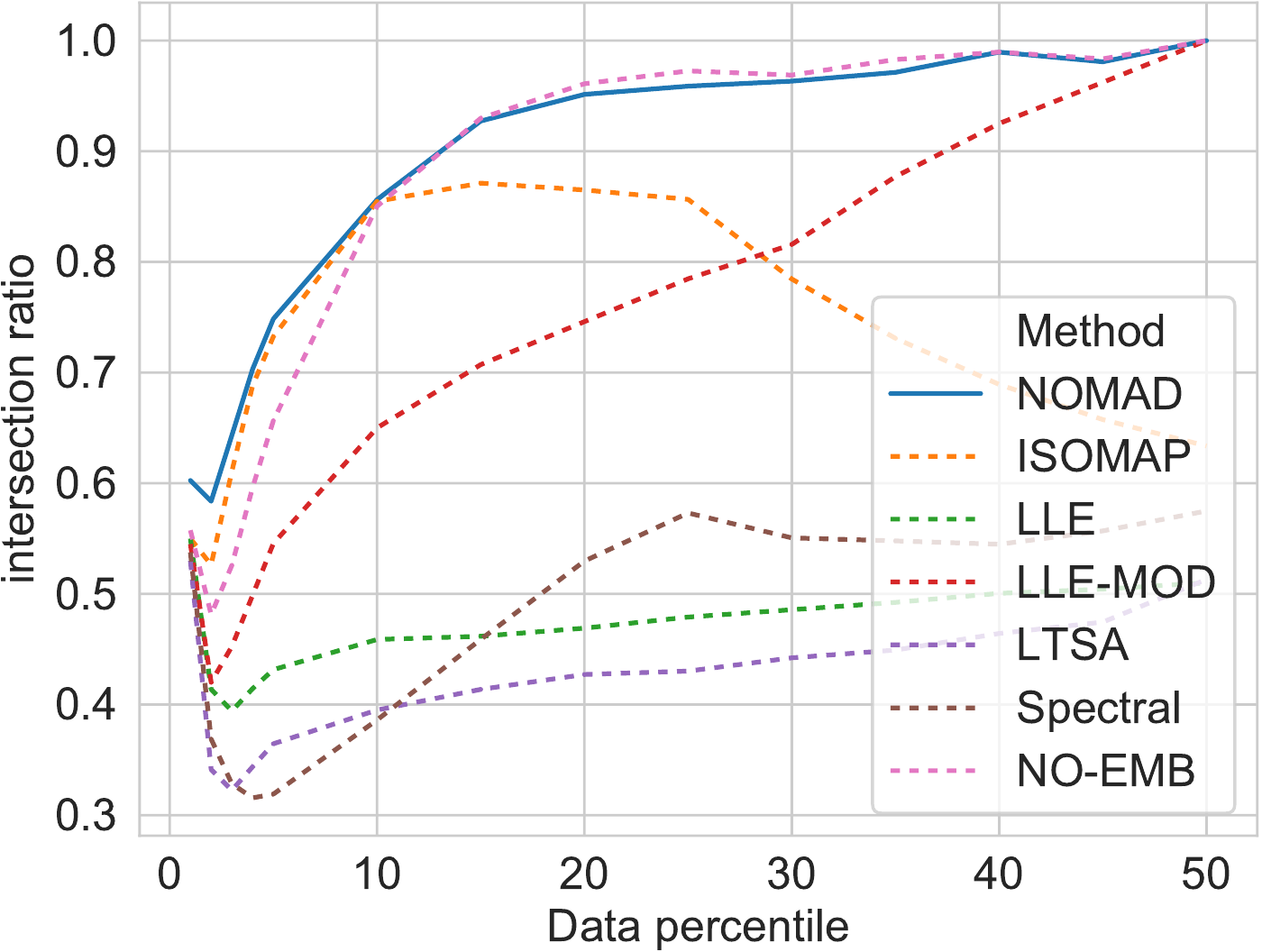} \\
    \end{tabu}
    \end{footnotesize}
    \caption{Two manifolds: NOMAD outperforms all considered methods.}
    \end{subfigure}
    
    \caption{Comparison of the robustness of geodesic distances to the addition of noise for different manifold learning methods. The description of the experimental protocol is in \cref{sec:geodesics}. The method termed NO-EMB computed distances on the noisy data directly.}
    \label{fig:distance_preservation}
\end{figure}

\subsection{Geodesic-distance preservation: NOMAD versus existing manifold learning techniques}
\label{sec:geodesics}

In order to compare different methods for manifold discovery, we need to agree upon appropriate metrics, which itself is an active area of research \citep[e.g.,][]{Zhang2012}. In particular, we want a metric that allows fair comparison among outputs of methods with very different objectives. Since our method is not explicitly geared towards dimension reduction, or towards variance maximization, we prefer metrics that emphasize the preservation of intrinsic structure of the manifold. In particular, we hope to preserve the ordering of intrinsic distances along the manifold, something that guarantees that the neighborhood structure remain similar. 

Concretely, (1) we compute $N$ nearest neighbors for each dataset point and build a weighted graph with these distances as edges; (2) we use this graph to compute geodesic distances using Dijkstra's algorithm; (3) we finally sort the distances in increasing order. We consider this ordering our ground truth.

We then add noise to the each point in the dataset. Noise was added by creating $5d$ additional dimensions that contain Gaussian noise with standard deviation $0.05$ and $0.10$.
Using the noisy dataset, we repeat the geodesic-distance-sorting process using the embeddings produced by different manifold learning algorithms using $N$ nearest neighbors. For NOMAD, instead of resorting to nearest neighbors, we set $K=n/N$ and use the non-zero entries in $\mat{Q}$ to determine the graph connectivity. As NOMAD yields a similarity matrix $\mat{Q}$, we derive distances from it with the formula $(\mat{Q})_{ii} + (\mat{Q})_{jj} - 2 (\mat{Q})_{ij}$. Using this weighted graph, we compute and sort the geodesic distances.

For our distance-preservation measure, we use a bullseye score: for each method, we count  the fraction of points in the top $p$ percentile of distances that are also present in the top $p$ percentile of ground truth distances.

As seen in \cref{fig:distance_preservation}, when the data is sampled from a single manifold, NOMAD performs very well, on par with the best algorithms included in our comparison. However, in the two-manifolds case, NOMAD clearly outperforms all other methods, nearly matching the performance of direct distance computations on the noisy data.

\section{Heuristic non-convex solvers for large-scale NOMAD}
\label{sec:BM}

Standard SDPs involve $O(n^2)$ variables and their resulting time complexity is often $O(n^3)$. Consequently, standard solvers \citep{ODonoghue2016} will struggle with large datasets.
NOMAD lends itself to a fast and big-data-friendly implementation \citep{Kulis2007}. This is done by posing a related problem
\begin{equation}
	\max_{\mat{Y} \in \Real^{r \times n}}
	\traceone{\mat{D} \transpose{\mat{Y}} \mat{Y}}
	\enskip\text{s.t.}\enskip
	\traceone{\transpose{\mat{Y}} \mat{Y}} = K ,\
	\transpose{\mat{Y}} \mat{Y} \vect{1} = \vect{1} ,\
	\mat{Y} \geq \mat{0} .
	\label[problem]{eq:sdp_kmeans_lowrank_fast}
\end{equation}
In this new problem, we have forgone convexity in exchange of reducing the number of unknowns from $O(n^2)$ to $rn$. For example, \citet{Kulis2007} set $r = K$.
The problematic constraint $\mat{\transpose{\mat{Y}} \mat{Y}} \geq \mat{0}$, involving $O(n^2)$ terms, has been replaced by the much stronger but easier to enforce $\mat{Y} \geq \mat{0}$. 
The speed gain is shown in \cref{fig:mnist_timing}. See \cref{sec:burer-monteiro} for a description of the algorithm.

However, strictly speaking, the new constraint is equivalent to the old one only if $\mat{Q}$ is completely positive.
An $n \times n$ matrix $\mat{A}$  is called completely positive (CP) if there exists $\mat{B} \geq \mat{0}$ such that $\mat{A} = \transpose{\mat{B}} \mat{B}$. The least possible number of rows of $\mat{B}$ is called the cp-rank of $\mat{A}$. Whereas matrix $\mat{A}$ is doubly nonnegative (DN), i.e. $\mat{A} \geq \mat{0}$ and $\mat{A} \succeq 0$, not every DN matrix (with $n  > 4$) is CP \citep{Maxfield1962}.

We are thus interested in two questions.
First, is the solution $\mat{Q}_*$ to NOMAD completely positive? Answering this question in the affirmative would allow for theoretically sound and fast implementations of NOMAD. Whereas the set of CP matrices forms a convex cone, the problems of determining whether a matrix is inside the set and of projecting a matrix into the set are NP-hard leading us to the second question: What is the cp-rank of $\mat{Q}_*$? This issue is critical because it determines the number of unknowns. For example, if $\operatorname{cp-rank}(\mat{Q}_*) \leq K$, \eqref{eq:sdp_kmeans_lowrank_fast} would be easier to solve. These questions are difficult only when NOMAD produces a soft-clustering $\mat{Q}_*$, as in all of the examples in this paper. Indeed, it is not hard to prove that, whenever NOMAD produces a hard-clustering $\mat{Q}_*$, $\mat{Q}_*$ is CP \citep[see][for such conditions]{Awasthi2015}.

Let us now go back to the example in \cref{sec:theory} (points arranged regularly on a ring). For this example, we can establish a simple sufficient condition on $K$, for $\mat{Q}_*$ to be CP. Recall that if $\mat{D}$ is circulant, $\mat{Q}_*$ is circulant \citep{Bachoc2012}.
In \cref{prop:circulantCP} of \cref{sec:proofs}, we prove that if the solution $\mat{Q}_*$ to NOMAD is a circulant matrix, then it is CP for every $K \leq 3/2$ or $K \geq \tfrac{n}{2}$.
Naturally, more theory is needed to shed light onto this problem in general scenarios as it is unclear whether similar results exist.

Complementarily, we have studied the questions raised in this section from an experimental viewpoint. We use the symmetric nonnegative matrix factorization (SNMF) of $\mat{Q}_*$, see \cref{sec:snmf}, as a proxy for checking whether $\mat{Q}_*$ is CP. The rationale is that if the approximation with SNMF is very tight, it is highly likely that $\mat{Q}_*$ is CP. These experiments are presented in \cref{fig:completely_positive_snmf}. We found that, with a properly chosen rank $r$, SNMF can  indeed accurately approximate $\mat{Q}_*$. However, setting $r=K$ is in general not enough and leads to a poor reconstruction. These two facts support the idea that $\mat{Q}_*$ is CP, but has a cp-rank much higher than $K$.

\begin{figure}
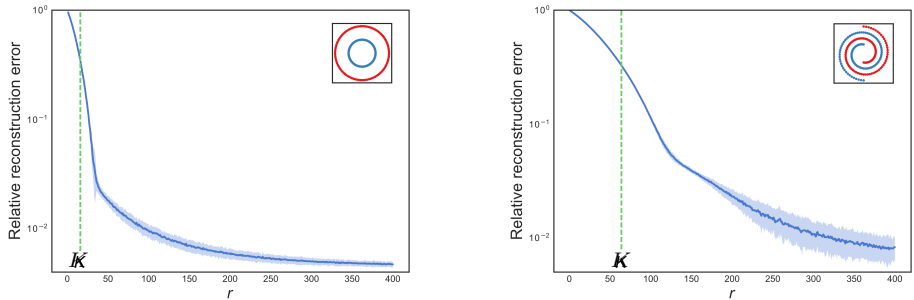

	
	\centerline{
		\hfill
		\begin{overpic}[width=.35\linewidth]{reconstruction/circles_curve}
			\put (15, 8) {\scriptsize$K$}
		\end{overpic}
		\hspace{3em}
		\begin{overpic}[width=.35\linewidth]{reconstruction/double-swiss-roll_curve}
			\put (25, 8) {\scriptsize$K$}
		\end{overpic}
		\hfill
	}
	
	\caption{We empirically study the cp-rank of $\mat{Q}_*$. As a proxy of the exact nonnegative decomposition, we compute the rank-$r$ symmetric NMF $\mat{Q}_* \approx \transpose{\mat{Y}_+} \mat{Y}_+$ for different values of $r$.
		We show the mean plus/minus two standard deviations of the relative error $\norm{\mat{Q}_* - \transpose{\mat{Y}_+} \mat{Y}_+}{F} / \norm{\mat{Q}_*}{F}$ computed from 50 different SNMFs for each $r$ (their differences stem from the random initialization). Both datasets have 200 points. Clearly, setting $r=K$ is not enough to properly reconstruct $\mat{Q}_*$.
	%	\subref{fig:circles_reconstruction_Ys_sorted} Two different matrices from the set computed in~\protect\subref{fig:circles_curve}; as expected,
	}
	\label{fig:completely_positive_snmf}
\end{figure}

Our  experiments with the non-convex algorithm in \cref{sec:burer-monteiro} lead to similar conclusions as those with SNMF, see \cref{fig:completely_positive_burer-monteiro}. Setting $r=K$, leads to a poor approximation of $\mat{Q}_*$ and, as observed by \citet{Kulis2007}, to hard-clustering. Setting $r \gg K $ leads to much improved reconstructions, at the expense of speed.

\begin{figure*}
	\centering
	\includegraphics[width=.85\linewidth]{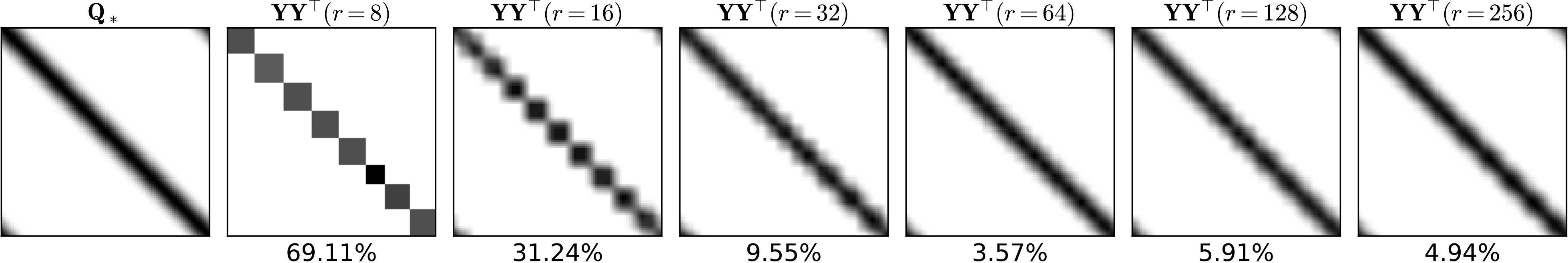}

	\includegraphics[width=.85\linewidth]{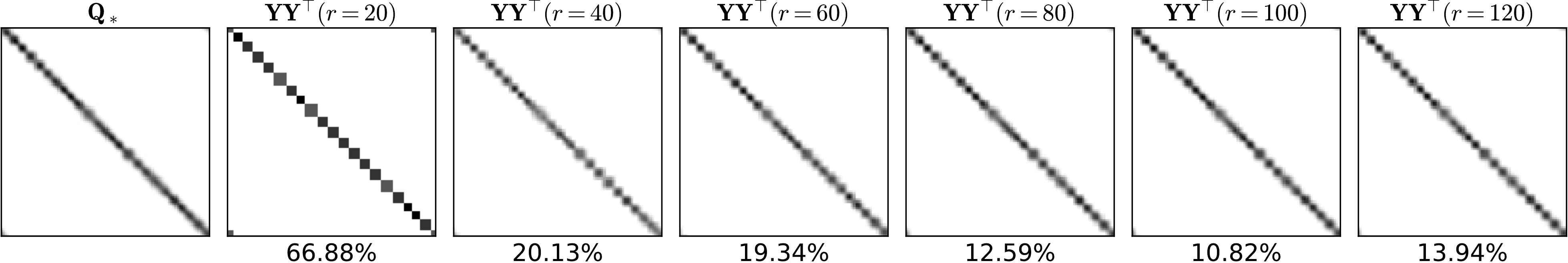}

	\caption{Comparison of the results obtained with a standard SDP solver (first column) and with a low-rank non-convex approach (remaining columns, $r$ denotes the rank of the obtained solution). \textbf{(Top)} Dataset in \cref{fig:circles_eigendecomposition}; we set $K=8$ in all cases. \textbf{(Bottom)} Dataset in \cref{fig:embedding_real_teapot}; we set $K=20$ in all cases. In each case, we also display the relative error between the matrix $\transpose{\mat{Y}} \mat{Y}$ and $\mat{Q}_*$. Interestingly, setting $r=K$ produces hard clustering solutions (see the block diagonal structure of the matrices on the second column), while increasing $r$ produces ``softer'' solutions. This suggests that the cp-rank of $\mat{Q}_*$ is (much) greater than $K$.}
	\label{fig:completely_positive_burer-monteiro}
\end{figure*}

\section{A fast and convex algorithm for NOMAD}

The Burer-Monteiro solver forgoes convexity in favor of speed. However, as discussed in the previous section, this conversion carries theoretical and practical difficulties that are not easily overcome.
In this section, we propose an algorithm for NOMAD that is fast and yet convex.

\subsection{Augmented Lagrangian formulation}
First, we redefine the variables in NOMAD by setting $\mat{P} = \mat{Q} - \mat{E}_n$, where $\mat{E}_n = \tfrac{1}{n} \vect{1} \transpose{\vect{1}}$. Then,
\begin{equation}
	\max_{\mat{P}}
	\traceone{\mat{D} \mat{P}}
	\quad\text{s.t.}\quad
	\mat{P} \vect{1} = \vect{0} ,
    \quad
	\traceone{\mat{P}} = K - 1 ,
    \quad
	\mat{P} \succeq \mat{0} ,
    \quad
	\mat{P} + \mat{E}_n \geq \mat{0} .
	%
% 	\tag{NOMAD}
	\label[problem]{eq:sdp_kmeans_minus_constant}
\end{equation}
As usual in the optimization literature, we handle this constraint with an augmented Lagrangian method. The augmented Lagrangian of \cref{eq:sdp_kmeans_minus_constant} with respect to the constraint $\mat{P} + \mat{E}_n \geq \mat{0}$ is
\begin{equation}
	g(\mat{P}, \mat{\Gamma}) =
    - \traceone{\mat{D} \mat{P}}
    + \traceone{\mat{\Gamma} ( \mat{P} + \mat{E}_n )}
    + \tfrac{\gamma}{2} \norm{\left[ \mat{P} + \mat{E}_n \right]_{-} }{F}^2 ,
	\label{eq:sdp_kmeans_minus_constant_lagrangian}
\end{equation}
where $\mat{\Gamma} \geq \mat{0}$ is the associated Lagrange multiplier, and $[\cdot]_{-} = \min(\cdot, 0)$ is the projection operator onto the negative orthant.
We can then pose \cref{eq:sdp_kmeans_minus_constant} as
\begin{equation}
	\min_{\mat{P}}
    \max_{\mat{\Gamma} \geq 0}
	g(\mat{P}, \mat{\Gamma})
    \quad\text{s.t.}\quad
	\mat{P} \vect{1} = \vect{0} ,
    \quad
	\traceone{\mat{P}} = K - 1 ,
    \quad
	\mat{P} \succeq \mat{0} .
    \label{eq:maxmin_sdp-km}
\end{equation}
We solve it using the method of multipliers, i.e.,
\begin{subequations}
\begin{align}
	\mat{P}_{t+1} &= \argmin_{\mat{P}}
	g(\mat{P}, \mat{\Gamma}_{t})
	\quad\text{s.t.}\quad
    	\mat{P} \vect{1} = \vect{0} ,
    \quad
	\traceone{\mat{P}} = K - 1 ,
    \quad
	\mat{P} \succeq \mat{0} ,
	\label[problem]{eq:cgm_sdp-kmeans_inner} \\
    \mat{\Gamma}_{t+1} &= \left[ \mat{\Gamma}_t + \tau ( \mat{P}_{t+1} + \mat{E}_n ) \right]_{-} .
    \label[problem]{eq:cgm_sdp-kmeans_outer}
\end{align}
\end{subequations}

\subsection{A conditional gradient method for SDPs with an orthogonality constraint}

In this section, we introduce a very efficient algorithm to solve
\begin{equation}
	\max_{\mat{Z}}
    f(\mat{Z})
    \quad\text{s.t.}\quad
    \mat{Z} \succeq 0 ,\
    \traceone{\mat{Z}} = s ,\
    \mat{Z} \vect{b} = \vect{0} .
    \label[problem]{eq:generic_sdp_ortho_intro}
\end{equation}
of which \cref{eq:cgm_sdp-kmeans_inner} is an instance.

To this end we modify an algorithm to efficiently solve the SDP \citep{Hazan2008sdp}
\begin{equation}
	\max_{\mat{Z}}
    f(\mat{Z})
    \quad\text{s.t.}\quad
    \mat{Z} \succeq 0 ,\ \traceone{\mat{Z}} = s ,
    \label[problem]{eq:generic_sdp}
\end{equation}
where function $f$ is differentiable and concave.
The iterative algorithm consists, at each iteration $t = 0 \dots$, of the following steps:
\begin{enumerate}
    \item Let $\vect{v}_{t}$ be the largest algebraic eigenvector of $\nabla f(\mat{Z}_t)$.
    \item $\mat{Z}_{t+1} = (1 - \alpha) \mat{Z}_{t} + \alpha s \vect{v}_{t} \transpose{\vect{v}_{t}}$ with $\alpha = 2 / (t + 2)$.
\end{enumerate}
This algorithm is an instance of the Frank-Wolfe/conditional-gradient algorithm \citep{FrankWolfe}. As such it provides a solution without performing any projections. First, $\mat{Z}_{t+1}$ is a non-negative linear combination of two positive semidefinite matrices, and is thus positive semidefinite itself. Second, the iterations maintain the invariant $\traceone{\mat{Z}_{t}} = s$ as $\traceone{\mat{Z}_{t+1}} = (1 - \alpha) \traceone{\mat{Z}_{t}} + \alpha s \traceone{\vect{v}_{t} \transpose{\vect{v}_{t}}} = (1 - \alpha) \traceone{\mat{Z}_{t}} + \alpha s$.

We now show how to extend this algorithm to handle an orthogonality constraint.
Let $\set{P}_s$ be the convex cone of positive semidefinite matrices with trace $s$ that are orthogonal to a given vector $\vect{b}$, i.e.,
\begin{equation}
	\set{P}_s = \left\{ \mat{Z} \succeq 0 ,\ \traceone{\mat{Z}} = s ,\ \mat{Z} \vect{b} = \vect{0} \right\} .
    \label{eq:Ps_set}
\end{equation}
Notice that setting $\vect{b} = \vect{1}$ yields the constraints of \cref{eq:cgm_sdp-kmeans_inner}.
We seek to solve
\begin{equation}
	\max_{\mat{Z}}
    f(\mat{Z})
	\quad\text{s.t.}\quad
    \mat{Z} \in \set{P}_s .
	%
% 	\tag{NOMAD}
	\label[problem]{eq:generic_sdp_ortho}
\end{equation}
Fortunately, we can push the constraint $\mat{Z} \vect{b} = \vect{0}$ into the eigenvector computation. We begin by noticing that the final solution is a weighted sum of the matrices $\vect{v}_{t} \transpose{\vect{v}_{t}}$. It then suffices to require that, for every $t$, $\vect{v}_{t} \transpose{\vect{v}_{t}} \vect{b} = \vect{0}$, which reduces to $\transpose{\vect{v}_{t}} \vect{b} = \vect{0}$.
This naturally yields a new iterative method, summarized in \cref{algo:cgm_generic_sdp_ortho}.
This algorithm has the same performance guarantee as \citeauthor{Hazan2008sdp}'s \citeyearpar{Hazan2008sdp}, given by the following proposition, which we prove in \cref{sec:cgm_proofs}.
\begin{proposition}
 	Let $\mat{X}, \mat{Z} \in \set{P}_s$ and $\mat{Y} = \mat{X} + \alpha (\mat{Z} - \mat{X})$ and $\alpha \in \Real$.
    The curvature constant of $f$ is
	\begin{equation}
	C_f \defeq
    \sup_{\mat{X}, \mat{Z}, \alpha}
    \tfrac{1}{\alpha^2}
        [ f(\mat{X}) - f(\mat{Y}) + \tracetwo{(\mat{Y} - \mat{X})}{\nabla f(\mat{X})} ] .
        \label{eq:curvature_constant}
	\end{equation}
	Let $\mat{Z}^{\star}$ be the solution to \cref{eq:generic_sdp_ortho}.
	The iterates $\mat{Z}_t$ of \cref{algo:cgm_generic_sdp_ortho} satisfy for all $t > 1$
    \begin{equation}
    	f(\mat{Z}^{\star}) - f(\mat{Z}_t) \leq  \tfrac{8 C_f}{t+2} .
    \end{equation}
	\label{theo:accuracy}
\end{proposition}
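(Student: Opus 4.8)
The plan is to adapt the standard Frank--Wolfe convergence analysis of \citet{Hazan2008sdp} to the cone $\set{P}_s$ of \cref{eq:Ps_set}. Two things must be verified: that the linear maximization oracle over $\set{P}_s$ is solved exactly by the (now $\vect{b}^\perp$-restricted) eigenvector step of \cref{algo:cgm_generic_sdp_ortho}, and that the resulting iterate recursion telescopes to the stated rate. Feasibility of the iterates is already argued in the text preceding \cref{eq:generic_sdp}: each $\vect{v}_t$ satisfies $\transpose{\vect{v}_t}\vect{b}=0$, so $\vect{v}_t\transpose{\vect{v}_t}\vect{b}=\vect{0}$, and since $\mat{Z}_{t+1}$ is a convex combination of $\mat{Z}_t$ and $s\,\vect{v}_t\transpose{\vect{v}_t}$ it remains in $\set{P}_s$; hence it suffices to control the optimality gap.

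First I would establish oracle correctness. The key geometric observation is that for any $\mat{Z}\in\set{P}_s$, positive semidefiniteness together with $\mat{Z}\vect{b}=\vect{0}$ forces the range of $\mat{Z}$ into $\vect{b}^\perp$. Consequently, maximizing the linearized objective $\tracetwo{\mat{Z}}{\nabla f(\mat{Z}_t)}$ over $\set{P}_s$ is equivalent to maximizing it over trace-$s$ positive semidefinite matrices supported on $\vect{b}^\perp$. By the variational characterization of eigenvalues this maximum equals $s\,\lambda_{\max}\!\left(\mat{\Pi}\,\nabla f(\mat{Z}_t)\,\mat{\Pi}\right)$, with $\mat{\Pi}$ the orthogonal projector onto $\vect{b}^\perp$, and it is attained at the rank-one matrix $s\,\vect{v}_t\transpose{\vect{v}_t}$, where $\vect{v}_t$ is precisely the largest algebraic eigenvector of $\nabla f(\mat{Z}_t)$ constrained to $\vect{b}^\perp$ that the algorithm computes. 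In particular $\tracetwo{s\,\vect{v}_t\transpose{\vect{v}_t}}{\nabla f(\mat{Z}_t)} \geq \tracetwo{\mat{Z}^{\star}}{\nabla f(\mat{Z}_t)}$, since $\mat{Z}^{\star}\in\set{P}_s$.

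Next I would set up the descent recursion. Writing the gap $h_t = f(\mat{Z}^{\star})-f(\mat{Z}_t)$ and applying the curvature bound \eqref{eq:curvature_constant} with $\mat{X}=\mat{Z}_t$, $\mat{Z}=s\,\vect{v}_t\transpose{\vect{v}_t}$, $\mat{Y}=\mat{Z}_{t+1}$, and $\alpha=2/(t+2)$ gives $f(\mat{Z}_{t+1}) \geq f(\mat{Z}_t) + \alpha\,\tracetwo{s\,\vect{v}_t\transpose{\vect{v}_t}-\mat{Z}_t}{\nabla f(\mat{Z}_t)} - \alpha^2 C_f$. Combining the oracle inequality of the previous paragraph with concavity of $f$, which yields $\tracetwo{\mat{Z}^{\star}-\mat{Z}_t}{\nabla f(\mat{Z}_t)} \geq h_t$, produces the contraction $h_{t+1} \leq (1-\alpha)\,h_t + \alpha^2 C_f$. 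A standard induction on $t$ with this step size then closes the bound at $h_t \leq 8C_f/(t+2)$, the base case being immediate from the recursion at the first iteration where $\alpha=1$ forces $h_1\leq C_f$.

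The main obstacle is the oracle-correctness step: one must verify that pushing the orthogonality constraint into the eigenvector computation is lossless, i.e., that restricting the eigenproblem to $\vect{b}^\perp$ does not shrink the attainable value of the linearized objective. This rests entirely on the range-containment fact that $\mat{Z}\succeq 0$ and $\mat{Z}\vect{b}=\vect{0}$ together imply $\operatorname{range}(\mat{Z})\subseteq\vect{b}^\perp$. Once that is in place, the remainder is the verbatim Frank--Wolfe template, with the curvature constant $C_f$ absorbing all second-order behavior of $f$, so the final rate inherits exactly \citeauthor{Hazan2008sdp}'s constant.
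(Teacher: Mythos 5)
Your proof is correct, and it reaches the crucial inequality by a genuinely different route from the paper's. Both arguments share the Frank--Wolfe skeleton: the curvature bound $f(\mat{Z}_{t+1}) \geq f(\mat{Z}_t) + \alpha_t \tracetwo{(\mat{H}_t - \mat{Z}_t)}{\nabla f(\mat{Z}_t)} - \alpha_t^2 C_f$ with $\mat{H}_t = s\,\vect{v}_t\transpose{\vect{v}_t}$, the resulting contraction $h_{t+1} \leq (1-\alpha_t)h_t + \alpha_t^2 C_f$, and the induction closing at $\tfrac{8C_f}{t+2}$. Where they differ is in how they certify the key step $\tracetwo{(\mat{H}_t - \mat{Z}_t)}{\nabla f(\mat{Z}_t)} \geq f(\mat{Z}^{\star}) - f(\mat{Z}_t)$. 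The paper devotes a separate lemma (\cref{thm:generic_sdp_ortho_dual}) to deriving the Lagrangian dual $w(\mat{Z}) = s\,\phi(\mat{Z}) + f(\mat{Z}) - \traceone{\mat{Z}\nabla f(\mat{Z})}$, with $\phi$ the $\vect{b}$-orthogonal maximal Rayleigh quotient; that derivation needs the argument that the multiplier for $\mat{Z}\vect{b} = \vect{0}$ has the form $c\,\vect{b}$ plus a limiting step ($d \to \infty$) to push the eigenvalue along $\vect{b}$ out of the maximum, after which weak duality $w(\mat{Z}_t) \geq f(\mat{Z}^{\star})$ delivers the inequality. You instead prove oracle exactness directly --- $\mat{Z} \succeq \mat{0}$ and $\mat{Z}\vect{b} = \vect{0}$ force $\operatorname{range}(\mat{Z}) \subseteq \vect{b}^{\perp}$, so the linear maximization over $\set{P}_s$ is attained at $\mat{H}_t$, giving $\tracetwo{\mat{H}_t}{\nabla f(\mat{Z}_t)} \geq \tracetwo{\mat{Z}^{\star}}{\nabla f(\mat{Z}_t)}$ --- and then invoke concavity, $\tracetwo{(\mat{Z}^{\star} - \mat{Z}_t)}{\nabla f(\mat{Z}_t)} \geq f(\mat{Z}^{\star}) - f(\mat{Z}_t)$. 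Your route is the \citet{Jaggi2013}-style template: more elementary and self-contained, and it isolates exactly why pushing the orthogonality constraint into the eigenvector computation is lossless, a point the paper only asserts informally in the main text. What the paper's duality route buys in exchange is an explicit computable quantity, $w(\mat{Z}_t) - f(\mat{Z}_t)$, that upper-bounds the suboptimality at every iterate and can serve as a stopping certificate; your argument yields the same inequality but no such certificate. Two harmless caveats in your write-up: the oracle value should be read as $s$ times the maximal Rayleigh quotient restricted to $\vect{b}^{\perp}$ rather than literally $s\,\lambda_{\text{max}}(\mat{\Pi}\nabla f(\mat{Z}_t)\mat{\Pi})$, since the projected matrix carries a spurious zero eigenvalue along $\vect{b}$ (relevant only if the restricted spectrum is entirely negative); and your base case applies the curvature bound at $\mat{Z}_0 = \mat{0} \notin \set{P}_s$ --- a glitch the paper's own proof shares, fixable by starting the induction from $\mat{Z}_1 = s\,\vect{v}_0\transpose{\vect{v}_0} \in \set{P}_s$.
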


\begin{algorithm2e}[t]
	\SetNoFillComment
	\SetKwInOut{Input}{input}
	\SetKwInOut{Output}{output}

	\begin{small}
    \Input{function $f$ to minimize, scale parameter $s$.}
	\Output{solution $\mat{Z}_{t+1} \in \set{P}_s$ to \cref{eq:generic_sdp}.}
    
	Initialize
    $\mat{Z}_0 = \mat{0}$\;
    \For{$t = 0, \dots, \infty$}{
    	Let $\vect{v}_{}$ be the largest algebraic eigenvector of $\nabla f(\mat{Z})$ such that $\transpose{\vect{v}} \vect{b} = 0$\;
    	$\alpha \gets 2 / (t + 2)$\;
    	$\mat{Z}_{t+1} \gets (1 - \alpha) \mat{Z}_t + \alpha s \vect{v} \transpose{\vect{v}}$\;
    	\lIf{converged}{
    	break
        }
    }
    \end{small}
	
	\caption{Conditional gradient algorithm for SDPs with an orthogonality constraint}
	\label{algo:cgm_generic_sdp_ortho}
\end{algorithm2e}

\subsection{A conditional gradient algorithm for NOMAD}

\cref{algo:cgm_sdp-km} summarizes the proposed method of multipliers, see iterations \labelcref{eq:cgm_sdp-kmeans_inner,eq:cgm_sdp-kmeans_outer}, to solve \cref{eq:sdp_kmeans_minus_constant}. The inner problem \labelcref{eq:cgm_sdp-kmeans_inner} is solved using \cref{algo:cgm_generic_sdp_ortho}. A few remarks are in order:
\begin{itemize}[nosep]
	\item When using the method of multipliers, it is often not necessary (nor desirable) to solve the inner problem to a high precision \citep{Goldstein2009}. In our implementation we set $N_{\text{inner}} = 10$.
    \item There is no need to need for a highly accurate eigenvector computation \citep{Hazan2008sdp}. We use the Lanczos algorithm and set its accuracy to $(t + 1)^{-1}$.
    \item \cref{algo:cgm_generic_sdp_ortho} solves a maximization problem and requires the eigenvector with the largest algebraic eigenvalue. To solve the minimization problem \labelcref{eq:cgm_sdp-kmeans_inner}, we simply compute the eigenvector with the smallest algebraic eigenvalue \citep{Jaggi2013}.
	\item As $\vect{b} = \vect{1}$, we can enforce the orthogonality constraint $\transpose{\vect{v}_t} \vect{1} = 0$ by computing the maximum eigenvalue of $\mat{A} = (\mat{I} - \tfrac{1}{n} \vect{1} \transpose{\vect{1}}) \nabla g(\mat{P}, \mat{\Gamma}) (\mat{I} - \tfrac{1}{n} \vect{1} \transpose{\vect{1}})$. This operation can be carried out very efficiently.
\end{itemize}

\begin{algorithm2e}[t]
	\SetNoFillComment
	\SetKwInOut{Input}{input}
	\SetKwInOut{Output}{output}

	\begin{small}
    \Input{matrix $\mat{D}$, scale parameter $k$.}
	\Output{solution $\mat{Q}$ to NOMAD.}
    
	Initialize $\mat{P}_0 = \mat{0}$;\quad
    $\mat{\Gamma} \gets \mat{0}$;\quad
    $\gamma = 1$\;
    \For{$t = 1, \dots, \infty$}{
		\For{$t_\text{inner} = 1, \dots, N_{\text{inner}}$}{
        	Let $
				\nabla g(\mat{P}, \mat{\Gamma}) =
    			- \mat{D}
    			+ \mat{\Gamma}
    			+ \gamma \left[ \mat{P} + \mat{E}_n \right]_{-}
				$\;
            Let $\mat{A} = (\mat{I} - \tfrac{1}{n} \vect{1} \transpose{\vect{1}}) \nabla g(\mat{P}, \mat{\Gamma}) (\mat{I} - \tfrac{1}{n} \vect{1} \transpose{\vect{1}})$\;
    		Let $\vect{v}_{}$ be the smallest algebraic eigenvector of $\mat{A}$, such that $\transpose{\vect{v}} \vect{1} = 0$\;
        	$\mat{H} \gets (K - 1) \vect{v} \transpose{\vect{v}}$\;
        	$\alpha \gets 2 / (t + t_\text{inner} + 2)$\;
			$\mat{P} \gets (1 - \alpha) \mat{P} + \alpha \mat{H}$\;
		}
        $\mat{\Gamma} \gets \left[ \mat{\Gamma} + \tau \left( \mat{P} + \mat{E}_n \right) \right]_{-}$\;
        \lIf{converged}{
        	break
        }
    }
	$\mat{Q} \gets \mat{P} + \mat{E}_n$
    \end{small}
	
	\caption{Conditional gradient algorithm for NOMAD}
	\label{algo:cgm_sdp-km}
\end{algorithm2e}

\paragraph{Complexity.}
The complexity of \cref{algo:cgm_generic_sdp_ortho} is similar to that of \citeauthor{Hazan2008sdp}'s \citeyearpar{Hazan2008sdp}, plus an additional factor to compute $\mat{A}$.
From \cref{theo:accuracy}, \cref{algo:cgm_generic_sdp_ortho} yields a solution with accuracy $\varepsilon$, i.e., $f(\mat{Z}_t) \geq f(\mat{Z}^{\star}) - \varepsilon$, in $\tfrac{4 C_f}{\varepsilon} - 1$ iterations. Computing $\nabla g(\mat{P}, \mat{\Gamma})$, $\mat{A}$, and $\mat{H}_t$ require $n^2$ operations.
Let $T_{\text{EIG}}$ be the number of iterations of the eigensolver, each iteration taking $O(n^2)$ operations. Additional operations require $O(n)$ time. Then, the overall complexity of \cref{algo:cgm_generic_sdp_ortho} is
\begin{equation}
	O \left( \tfrac{C_f}{\varepsilon} \left[ n + n^2 + n^2 T_{\text{EIG}} \right] \right) .
\end{equation}
For the Lanczos algorithm, and our accuracy setting of $(t + 1)^{-1}$, we have $T_{\text{EIG}} = O((t + 1) \log n)$.
In this case, the complexity per iteration is $O(n^2 \log n)$. As a comparison, standard SDP solvers have a complexity of $O(n^3)$ per iteration. These solvers also involve significant memory usage, while our algorithm has an optimal space complexity of $O(n^2)$.

\subsection{Experimental analysis}

Throughout the iterations of \cref{algo:cgm_sdp-km}, $\mat{P} \in \set{P}_{k-1}$, see \cref{eq:Ps_set}. Thus, we only need to keep track of the constraint $\mat{Q} = \mat{P} + \mat{E}_n \geq \mat{0}$ and of the value of the objective $\traceone{\mat{D} \mat{P}}$.

We illustrate with two typical examples the empirical convergence of these values in \cref{fig:cvx_vs_cgm_convergence}.
the convergence the objective value is clearly superlinear, while we observe a linear convergence for the nonnegativity constraint. Accelerating the latter rate is an interesting line of future research.

We can see in \cref{fig:cvx_vs_cgm_convergence} that standard solvers enforce the nonnegativity constraint more accurately. However, they do not exactly enforce $\mat{P} \in \set{P}_{k-1}$. There is a trade-off between what can be enforced up to which precision, making the solutions sometimes not exactly comparable.

\begin{figure}
	\centering
	\includegraphics[width=.5\linewidth]{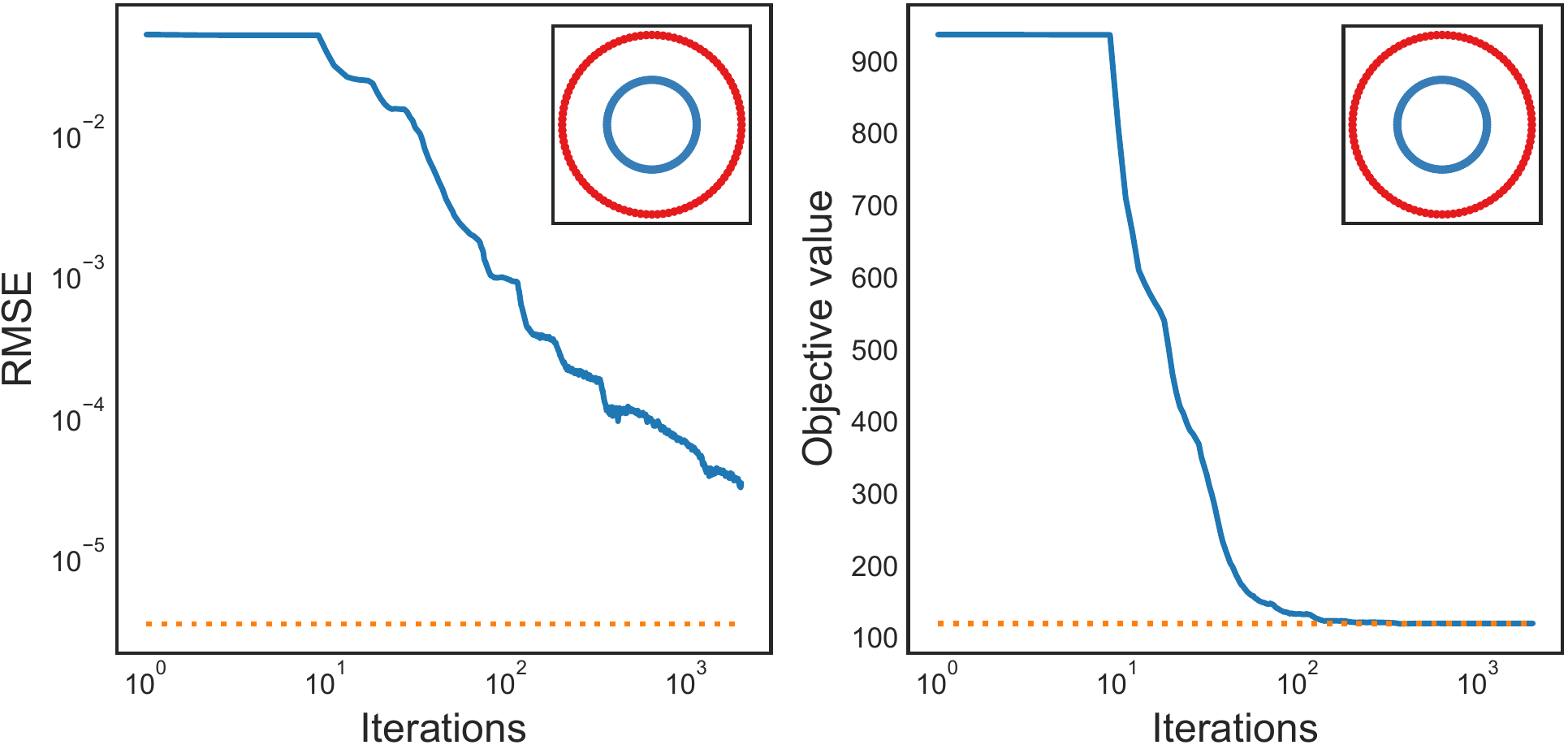}%
    \includegraphics[width=.5\linewidth]{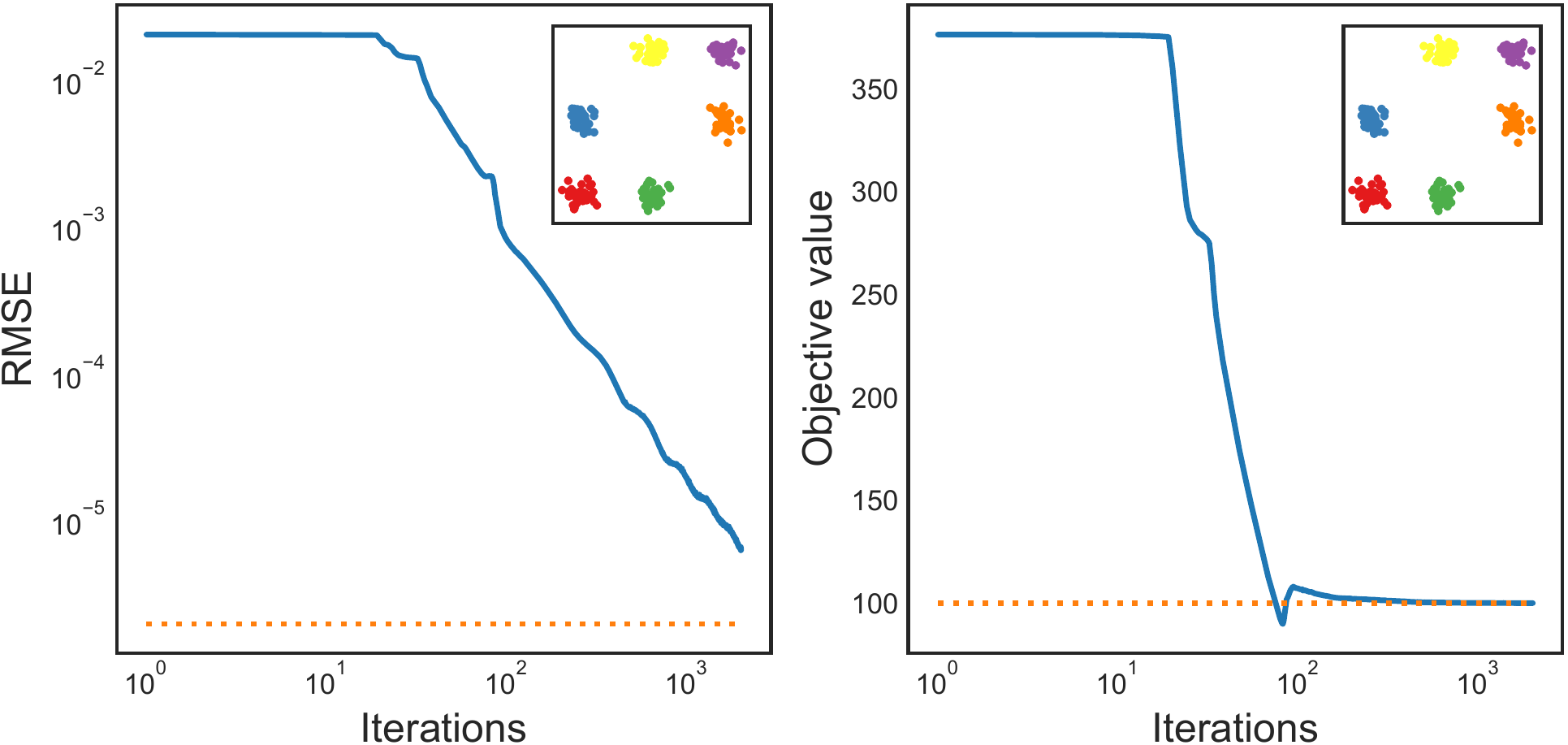}%
		
	\caption{Prototypical examples of the behavior of the proposed conditional gradient NOMAD solver as its iterations progress. On the left plots, we show the RMSE of $[\mat{Q}]_{-} = [\mat{P} + \mat{E}_n]_{-}$, with the average computed over its non-zero entries. After about 10 iterations, the RMSE drops linearly, as usual for the method of multipliers. On the right plots, we display the objective value $\traceone{\mat{D} \mat{P}}$, which usually converges in a few hundred iterations. In each case, as a reference, we show in orange the values returned by the standard SDP solver. The proposed algorithm enforces the nonnegativity constraint in NOMAD less accurately (although accurate enough for practical purposes), while exactly enforcing all the other constraints.}
	\label{fig:cvx_vs_cgm_convergence}
\end{figure}

We show the suitability of the proposed NOMAD solver in \cref{fig:cvx_vs_cgm}. In the vast majority of cases the solutions are the same. While the proposed method enforces the nonnegativity constraint less accurately than the standard solver, it enforces all the other constraints exactly. This is why in the teapot example, bottom left of \cref{fig:cvx_vs_cgm}, the solution of the proposed method looks less jagged than the one of the standard solver: the constraint $\mat{Q} \vect{1} = \vect{1}$ is more accurately enforced, resulting in a more ``circulant'' representation.

\begin{figure}
	\centering
    \begin{tabu} to \textwidth {@{\hspace{0pt}}X[c,m] X[c,m] @{\hspace{0pt}}}
		\includegraphics[width=\linewidth]{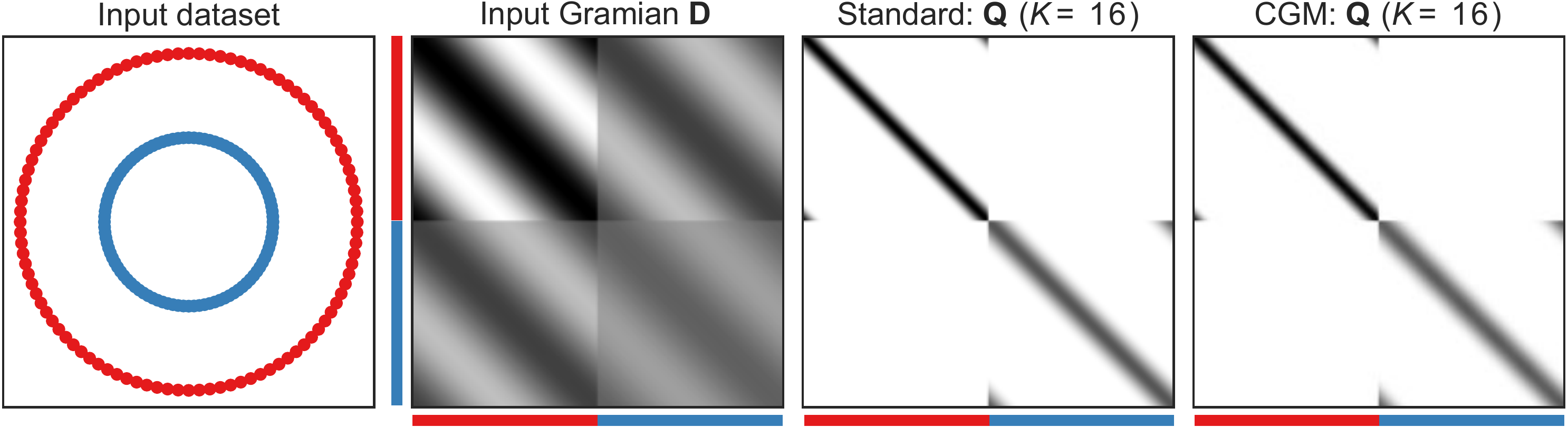} &
        \includegraphics[width=\linewidth]{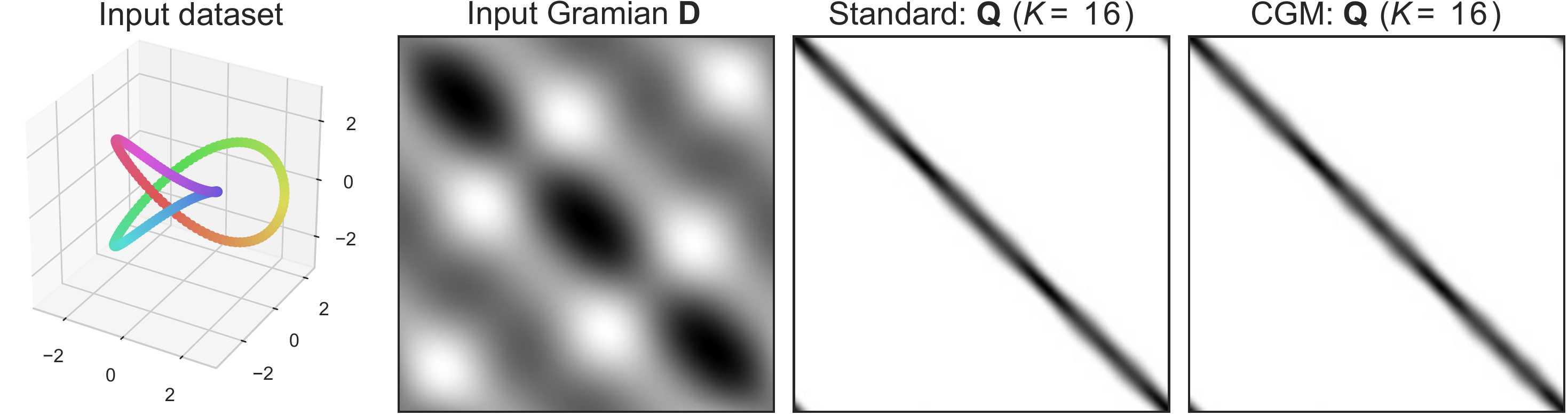} \\

		\addlinespace[1em]

    	\includegraphics[width=\linewidth]{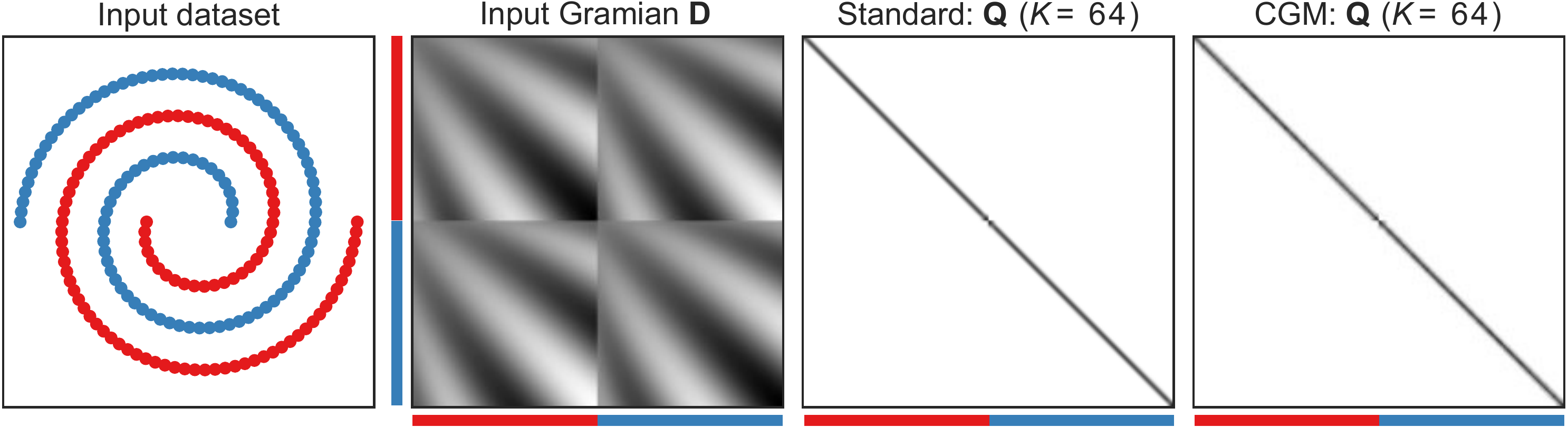} &
    	\includegraphics[width=\linewidth]{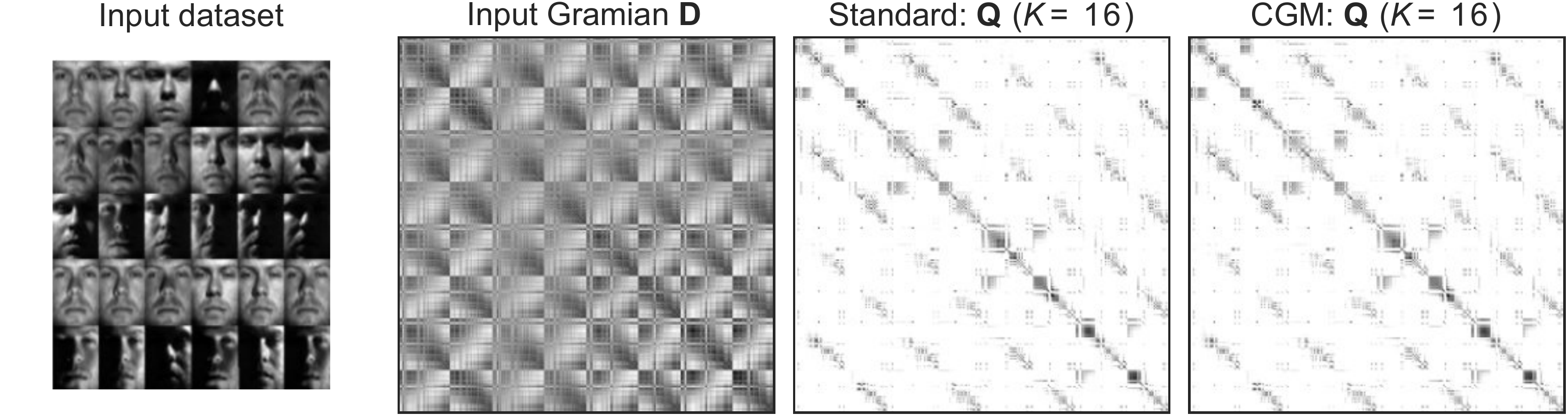} \\
        
        \addlinespace[1em]

    	\includegraphics[width=\linewidth]{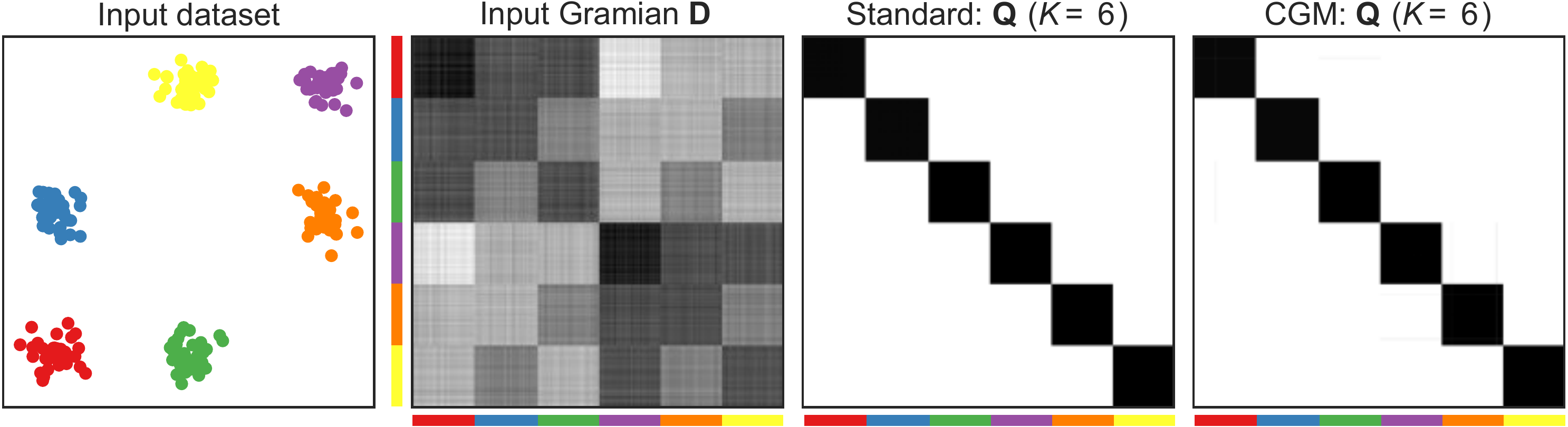} &
    	\includegraphics[width=\linewidth]{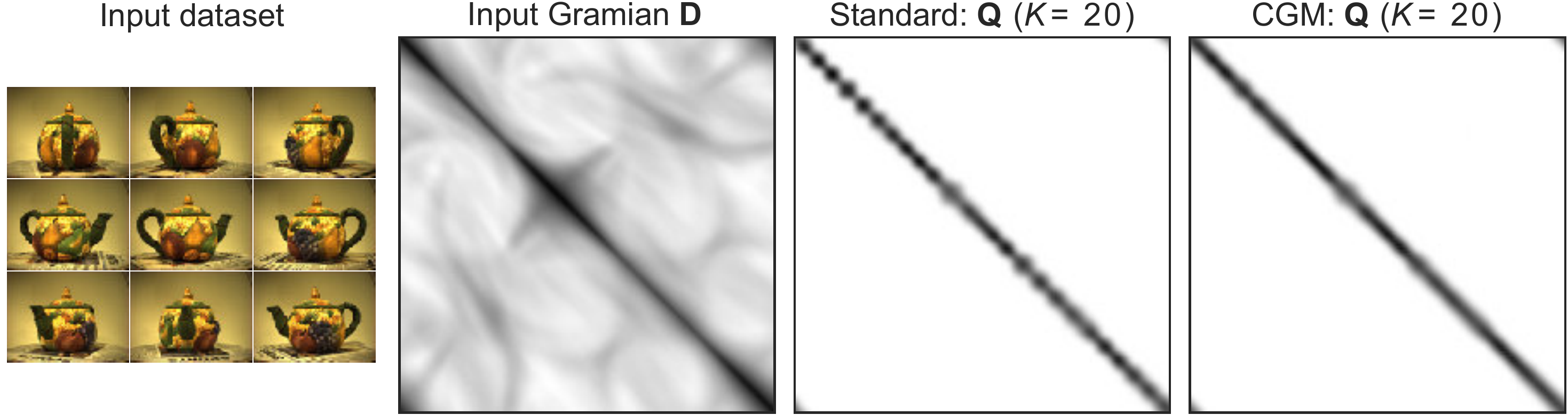} \\
	\end{tabu}
    
	\caption{Comparison of the standard SDP solver with the proposed conditional gradient solver (CGM) for NOMAD on different datasets. In most cases, the results are practically indistinguishable while being delivered much faster.}
	\label{fig:cvx_vs_cgm}
\end{figure}

In \cref{fig:mnist_timing}, we present the speed comparison of computing NOMAD with three different methods: two state-of-the-art SDP solvers, SCS~\citep{SCSsolver} and SDPNAL+~\citep{sdpnalplus}, the low-rank Burer-Monteiro solver (discussed in \cref{sec:BM}), and the proposed conditional gradient method.
The Burer-Monteiro method is the fastest. Keep in mind that the latter does not guarantee convergence to the global optimal solution; this is particularly true specially in its fastest setting, i.e., by keeping $r$ relatively small, see \cref{sec:BM}.
Among solvers that solve a convex problem, for very small problems (up to 250 points), standard SDP solvers are the fastest. For larger problems the proposed solver is significantly faster. It is important to point out that, in theory, the speed difference grows significantly larger. This is hard to show in practice as standard solvers either run out of memory very quickly (SCS) or are implemented to time out for big instances (SDPNAL+); the proposed solver has a much more efficient use of memory.

We highlight the extended computational capabilities of the proposed conditional gradient method with an example that cannot be handled by standard SDP solvers. We use as input the $9603 \times 9603$ Gramian formed by all (vectorized) images of the digit zero in MNIST. The proposed algorithm is able to compute a solution to NOMAD with ease for a problem size about 100 times larger than the upper size limit for standard solvers. In the 2D embedding of the solution (see \cref{sec:manifold} for details about its computation), shown in \cref{fig:mnist_n9603_k128_embedding}, we can clearly see that the images are organized by their intrinsic characteristics (elongation and rotation).

\begin{figure}
	\centering
	\includegraphics[width=.45\linewidth]{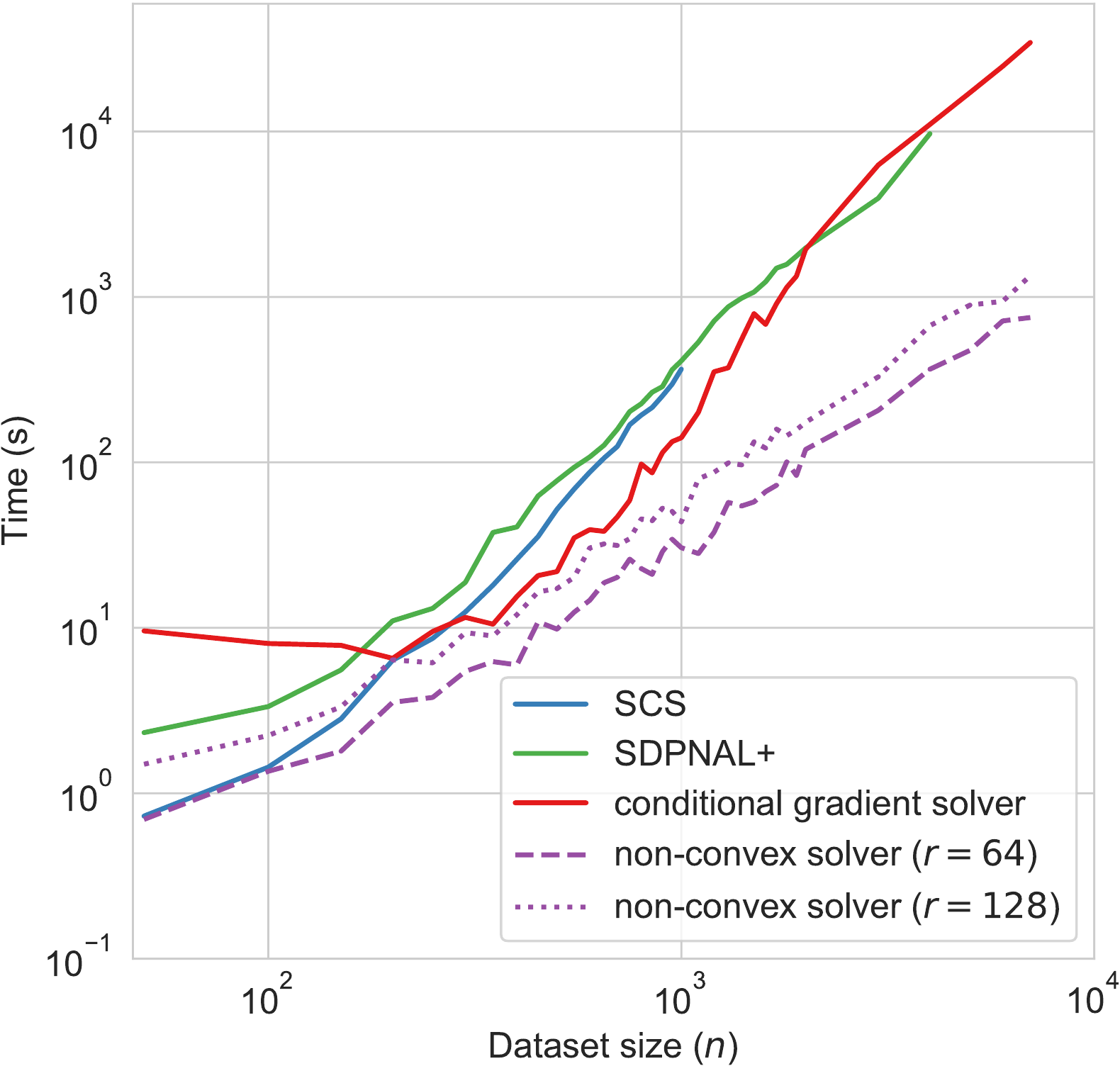}
		
	\caption{Running time comparison (smaller is better) of different NOMAD solvers for $K=16$ (SCS~\citep{SCSsolver} and SDPNAL+~\citep{sdpnalplus} are written in a highly optimized C/C++ code, while we use our non-optimized Python code for the others). The non-convex solver is much faster than the convex ones. Unfortunately, it may yield different results, see \cref{fig:completely_positive_burer-monteiro}, and may not converge to the global maximum.
    The conditional gradient algorithm proposed in this paper is much faster than SCS and SDPNAL+ (about three times faster for $n=10^3$) but guarantees converging to the global optimum.
    Additionally, the proposed algorithm handles large problems seamlessly: in our desktop with 128GB of RAM, SCS (running under CVXPY) runs out of memory with instances larger than $n=1200$) while SDPNAL+ times out before converging for instances larger than $n=4000$.
    }
	\label{fig:mnist_timing}
\end{figure}

\begin{figure}
	\centering
	\includegraphics[width=.6\linewidth]{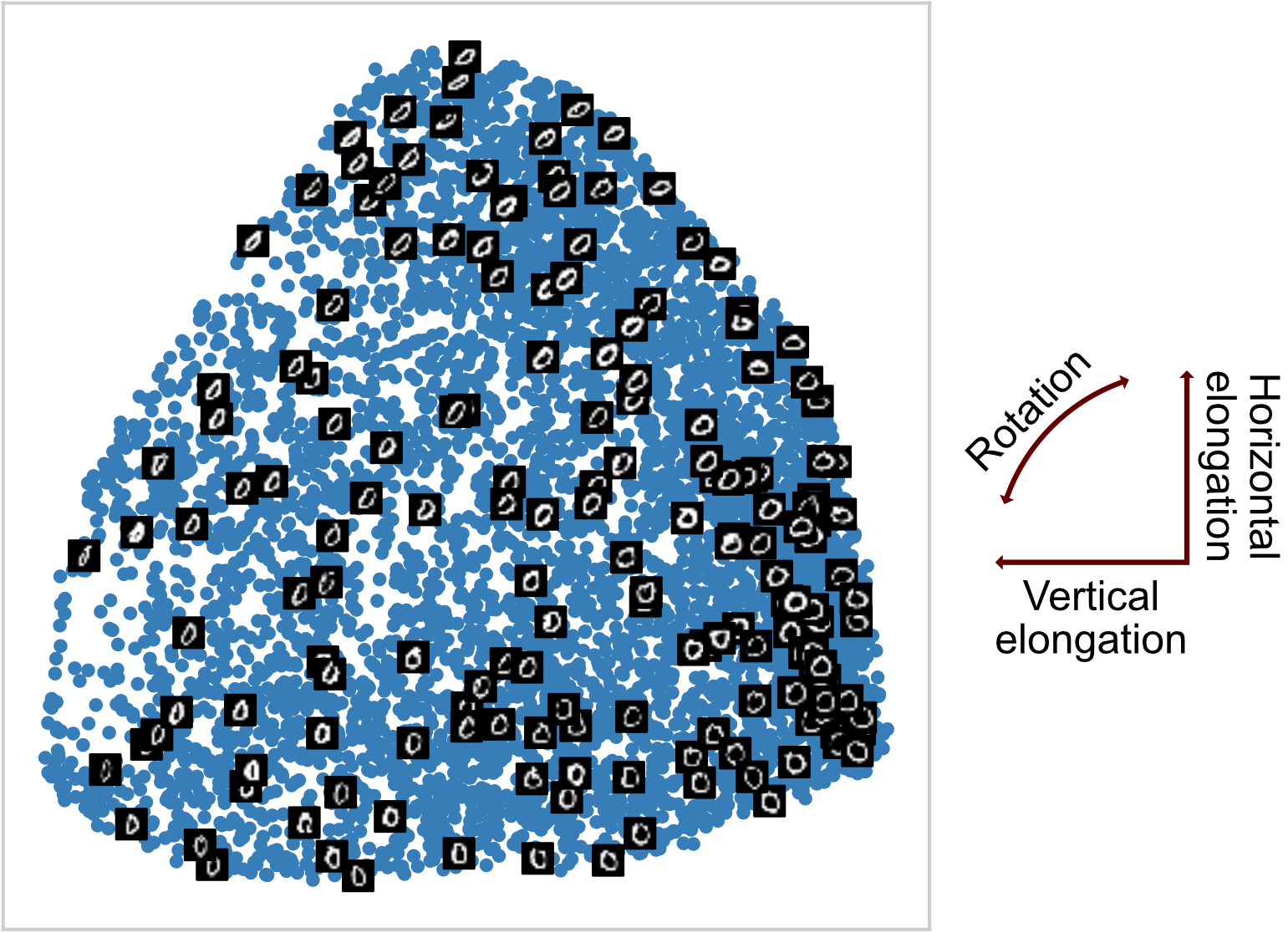}
		
	\caption{We show the 2D embedding of the digit 0 in MNIST, computed in the same fashion as in \cref{fig:embedding_real}. In this case, we use all 9603 images of the digit and obtain a $9603 \times 9603$ matrix. We compute the solution of NOMAD with $K=128$ using the proposed conditional gradient method (\cref{algo:cgm_sdp-km}). In contrast, traditional SDP-solvers can only handle dense matrices approximately 100 times smaller.
    As in \cref{fig:embedding_real}, the data gets organized according to different visual characteristics of the hand-written digit (e.g., orientation and elongation).
    }
	\label{fig:mnist_n9603_k128_embedding}
\end{figure}

\section{Conclusions}
\label{sec:conclusions}

In this work, we showed that NOMAD can learn multiple low-dimensional data manifolds in high-dimensional spaces. An SDP instance, it is convex and can be solved in polynomial-time. Unlike most manifold learning algorithms, the user does not need to select/use a kernel and no nearest-neighbors searches are involved.

We also studied the computational performance of NOMAD.
We first focused on a non-convex Burer-Monteiro-style algorithm and performed both theoretical and empirical analysis.
Finally, we presented a new algorithm for NOMAD based on the conditional gradient method. The proposed algorithm is convex and yet efficient. This algorithm allows us, for the first time, to analyze the behavior of NOMAD on large datasets.

\paragraph{Related and future work.}
It has not escaped our attention that NOMAD can be considered as an instance of kernel alignment \citep{Cristianini2002}.
In supervised setting, kernel alignment has been previously formulated as an SDP 
\citep[e.g.,][]{Lanckriet2004,Cortes2012_centeredalignment}. Even beyond the distinction between the supervised and unsupervised scenarios, this body of work differs significantly from NOMAD. Its goal is to optimally combine pre-computed kernel matrices, whereas NOMAD learns such a matrix from scratch.
Nonetheless, we find this connection with kernel learning very promising and plan to investigate it further in the future.

\section*{Acknowledgments}
We thank Afonso Bandeira, Alexander Genkin, Victor Minden, and Cengiz Pehlevan for helpful discussions.

\newpage
\bibliography{sdpkm,cgm}

\newpage
\appendix

\section{Relationship with $K$-means}
\label{sec:kmeans}

$K$-means seeks to cluster a dataset $\set{X} = \{ \vect{x}_i \}_{i=1}^{n}$ by computing
\begin{equation}
	\tag{$K$-means}
	\min_{\bm{\set{C}}_K}
	\sum_{k=1}^{K} \sum_{\vect{x}_i \in \set{C}_k}
	\norm{\vect{x}_i - \tfrac{1}{|\set{C}_k|} \sum_{\vect{x}_j \in \set{C}_k} \vect{x}_j }{2}^2 ,
	\label[problem]{eq:kmeans}
\end{equation}
where $\bm{\set{C}}_K = \{ \set{C}_k\}_{k=1}^{K}$ is a partition of $\set{X}$, i.e., $\set{C}_{k} \cap \set{C}_{k'} = \emptyset$ and  $\bigcup_k \set{C}_k = \set{X}$.
Albeit its popularity, it is known to be NP-Hard and, in practice, users employ an heuristic \citep[][originally developed in 1957]{Lloyd1982} to find a solution.
The objective function of K-means, henceforth denoted $J_K$, can be rewritten (dropping the  terms that are constant with respect to $\set{C}_k$) as
\begin{equation}
	J_K = 
	- \sum_{k=1}^{K}
	\tfrac{1}{|\set{C}_k|} \sum_{\vect{x}_i, \vect{x}_j \in \set{C}_k} \transpose{\vect{x}}_i \vect{x}_j .
\end{equation}

Let $\mat{X} \in \Real^{d \times n}$ be the matrix formed by horizontally concatenating the vectors in $\set{X}$.
Let $\vect{z}_k \in \{ 0, 1 \}^{n}$ be the indicator vector of set $\set{C}_k$.
Let $\mat{Y}$ be the $k \times n$ matrix with rows $\vect{Y}_{k:} = |\set{C}_k|^{-1/2} \vect{z}_k$. We have
\begin{subequations}
\begin{align}
	J_K 
	&= - \sum_{k=1}^{K} \tfrac{1}{|\set{C}_k|} \sum_{i, j} \transpose{\vect{x}}_i \vect{x}_j \cdot (\transpose{\vect{z}}_k \vect{z}_k)_{ij} \\
	&= - \sum_{i, j} (\transpose{\mat{X}} \mat{X})_{ij} (\transpose{\mat{Y}} \mat{Y})_{ij} \\
	&= - \traceone{\transpose{\mat{X}} \mat{X} \transpose{\mat{Y}} \mat{Y}} .
	\label{eq:trace_xtxyty}
\end{align}
\end{subequations}

By construction, the matrix $\mat{Q} = \transpose{\mat{Y}} \mat{Y}$ exhibits the following properties
\begin{align}
%	\mat{Q} \in \set{V}_{\mat{Q}}^{n \times n} ,\quad
	\mat{Q} \vect{1} &= \vect{1} , \\
	\traceone{\mat{Q}} &= K .
\end{align}

Let $\mat{D}$ be the Gramian matrix, i.e., $\mat{D} = \transpose{\mat{X}} \mat{X}$.
We can then re-cast $K$-means as the optimization problem
\begin{equation}
	\max_{\mat{Y} \in \set{V}_{\mat{Y}}^{k \times n}}
	\traceone{\mat{D} \mat{Q}}
	\quad\text{s.t.}\quad
	\begin{aligned}
		& \mat{Q} \vect{1} = \vect{1} , \\
		& \traceone{\mat{Q}} = K , \\
		& \mat{Q} = \transpose{\mat{Y}} \mat{Y} .
	\end{aligned}
	\label[problem]{eq:kmeans_qyyt}
\end{equation}
where $\set{V}_{\mat{Y}} = \{ 0 \} \cup \left\{ |\set{C}_k|^{-1/2} \right\}_{k=1}^{K}$.
Seeking to apply the desirable properties of SDP to $K$-means, we can pose \citep{Kulis2007,Peng2007_sdk-kmeans}
\begin{equation}
	\max_{\mat{Q} \in \Real^{n \times n}}
	\traceone{\mat{D} \mat{Q}}
	\quad\text{s.t.}\quad
	\begin{aligned}
		& \mat{Q} \vect{1} = \vect{1} , \\	
		& \traceone{\mat{Q}} = K , \\
		& \rank{\mat{Q}} = K , \\
		& \mat{Q} \succeq 0 ,
		\mat{Q} \geq \mat{0} .
	\end{aligned}
	\label[problem]{eq:sdp_kmeans_lowrank}	
\end{equation}
where mixed-integer program is relaxed into the real-valued nonnegative program, directly optimizing over $\mat{Q}$. 
SDP-KM is as a relaxation of this problem, simply obtained by removing the rank constraint.

\section{On the complete positivity of SDP-KM solutions on circulant matrices}
\label{sec:proofs}

% Let $\mat{P}$ be the cyclic permutation matrix, i.e.,
% $\mat{P} =
% 	\left[
% 	\begin{smallmatrix}
% 		\transpose{\vect{0}} & 1 \\
% 		\mat{I} & \vect{0} \\
% 	\end{smallmatrix} \right] $.
% For any circulant matrix $\mat{D}$, we have that $\transpose{\mat{P}} \mat{D} \mat{P} = \mat{D}$ (notice that this implies that $\transpose{\mat{P}^s} \mat{D} \mat{P}^s = \mat{D}$ for any $s \in \Nat$).

% \begin{proposition}
% 	Let $\mat{Q}_*$ be the solution to SDP-KM.
% 	If $\transpose{\mat{P}} \mat{D} \mat{P} = \mat{D}$, then $\transpose{\mat{P}} \mat{Q}_* \mat{P}$ is also the solution to SDP-KM.
% 	Since the problem is convex, $\transpose{\mat{P}} \mat{Q}_* \mat{P} = \mat{Q}_*$, and $\mat{Q}_*$ must be a circulant matrix.
% 	\label{prop:circulant}
% \end{proposition}
% \begin{proof}
% 	We have that for the objective function
% 	\begin{equation}
% 			\traceone{\mat{D} \transpose{\mat{P}} \mat{Q}_* \mat{P}} = \traceone{\mat{P} \mat{D} \transpose{\mat{P}} \mat{Q}_*} = \traceone{\mat{D} \mat{Q}_*}
% 	\end{equation}
% 	and for the constraints, using $\mat{P} \transpose{\mat{P}} = \mat{I}$,
% 	\begin{align}
% 		\traceone{\transpose{\mat{P}} \mat{Q}_* \mat{P}} &= \traceone{\mat{D} \mat{Q}_*} = K , \\
% 		\transpose{\mat{P}} \mat{Q}_* \mat{P} \vect{1} &= \transpose{\mat{P}} \mat{Q}_* \vect{1} = \transpose{\mat{P}} \vect{1} = \vect{1} .
% 	\end{align}
% 	Since $\transpose{\vect{z}} \mat{Q}_* \vect{z} \geq 0$, then $\transpose{\vect{z}} \transpose{\mat{P}} \mat{Q}_* \mat{P} \vect{z} \geq 0$.
% 	Finally, $\mat{Q}_* \geq \mat{0}$ implies that $\transpose{\mat{P}} \mat{Q}_* \mat{P} \geq \mat{0}$.
% \end{proof}

\begin{proposition}
	If the solution $\mat{Q}_*$ to SDP-KM is a circulant matrix,
	then it is CP for every $K \leq 3/2$ or $K \geq \tfrac{n}{2}$.
	\label{prop:circulantCP}%
\end{proposition}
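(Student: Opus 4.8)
The plan is to split along the two stated windows $K \geq \tfrac{n}{2}$ and $K \leq \tfrac{3}{2}$ and attack them by entirely different mechanisms, using throughout that $\mat{Q}_*$ is a symmetric circulant matrix with constant row sum $1$ (from $\mat{Q}_* \vect{1} = \vect{1}$) and constant diagonal $\traceone{\mat{Q}_*}/n = K/n$. For the \textbf{large-$K$ regime} I would argue by diagonal dominance. Since every row sums to $1$ and the common diagonal entry is $K/n$, the off-diagonal entries in each row sum to $1 - K/n$; as $\mat{Q}_* \geq \mat{0}$ this is also the sum of their absolute values. Diagonal dominance $K/n \geq 1 - K/n$ is then exactly $K \geq \tfrac{n}{2}$. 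A real symmetric, entrywise-nonnegative, diagonally dominant matrix is completely positive (a classical theorem of Kaykobad), so $\mat{Q}_*$ is CP in this regime with essentially no further work.

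For the \textbf{small-$K$ regime} diagonal dominance fails, so I would instead build an explicit nonnegative factorization in the Fourier domain. First I would invoke the LP reformulation \cref{eq:lp_kmeans}: since $\mat{D}$ concentrates its spectrum on the first harmonic (modes $1$ and $n-1$) and the nonnegativity constraints $\transpose{\vect{c}_\tau} \vect{q} \geq 0$ stay slack for a small budget, the optimal $\vect{q}$ places all of its remaining mass $K-1$ on those two modes. This gives $(\vect{q})_0 = 1$, $(\vect{q})_1 = (\vect{q})_{n-1} = t$ with $t = (K-1)/2$, and $(\vect{q})_p = 0$ otherwise. Feasibility is immediate to check: the resulting entries $\tfrac{1}{n}\bigl(1 + 2t\cos(2\pi\tau/n)\bigr)$ are nonnegative whenever $K \leq 2$, which covers $K \leq \tfrac{3}{2}$.

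The key step is then to exhibit a nonnegative circulant $\mat{B}$ with $\transpose{\mat{B}} \mat{B} = \mat{Q}_*$. Because $\mat{Q}_*$ is circulant, I take $\mat{B}$ circulant with eigenvalues $\beta_p = \sqrt{(\vect{q})_p}$, i.e.\ $\beta_0 = 1$ and $\beta_1 = \beta_{n-1} = \sqrt{t}$; these are real and symmetric in $p$, so $\mat{B}$ is a real symmetric circulant and $\transpose{\mat{B}} \mat{B} = \mat{B}^2$ has eigenvalues $\beta_p^2 = (\vect{q})_p$, giving $\transpose{\mat{B}} \mat{B} = \mat{Q}_*$ automatically. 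The only thing left to verify is $\mat{B} \geq \mat{0}$. Its generating sequence is $b_k = \tfrac{1}{n}\bigl(1 + 2\sqrt{t}\cos(2\pi k/n)\bigr)$, whose minimum over $k$ equals $\tfrac{1}{n}(1 - 2\sqrt{t})$; this is nonnegative precisely when $\sqrt{t} \leq \tfrac{1}{2}$, i.e.\ $t \leq \tfrac{1}{4}$, i.e.\ $K \leq \tfrac{3}{2}$. This pins down the threshold $3/2$ exactly, and since $\mat{B} \geq \mat{0}$ it certifies $\mat{Q}_* = \transpose{\mat{B}} \mat{B}$ as CP.

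The \textbf{main obstacle} is the small-$K$ case. The large-$K$ half reduces to a one-line diagonal-dominance count once the classical CP criterion is cited. The delicate part is justifying that the optimal $\vect{q}$ genuinely has support only on $\{0,1,n-1\}$, so that the clean three-term circulant square root exists, and then checking that this square root stays nonnegative up to exactly $K = \tfrac{3}{2}$. The reason the argument cannot be pushed into the intermediate window is structural: for larger $K$ additional Fourier modes switch on, and the scalar test $1 - 2\sqrt{t} \geq 0$ is replaced by a genuinely multi-mode nonnegativity condition on $b_k$ that no longer collapses to a single threshold.
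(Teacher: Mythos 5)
Your large-$K$ argument is precisely the paper's own: invoke Kaykobad's theorem after noting that a circulant feasible solution has constant diagonal $K/n$ and off-diagonal row sums $1-K/n$, so nonnegative diagonal dominance is exactly $K \geq \tfrac{n}{2}$. That half is correct and identical in substance.

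The small-$K$ half, however, does not prove the proposition as stated, because it quietly adds a hypothesis. The proposition assumes only that $\mat{Q}_*$ is circulant; the paper's proof of the $K \leq 3/2$ case uses nothing beyond that (it plugs $\mat{Q}_*\vect{1}=\vect{1}$ into a general CP criterion for doubly nonnegative matrices, Corollary 2.6 of So and Ge). Your argument instead assumes that the input Gramian $\mat{D}$ is the 2D-ring Gramian, with spectrum concentrated on modes $1$ and $n-1$, so that the LP \cref{eq:lp_kmeans} pushes the whole budget $K-1$ onto those two modes. That is a property of one particular dataset, not a consequence of circulance. For a general circulant $\mat{D}$, the optimal $\vect{q}$ may be supported on any index set closed under $p \mapsto n-p$, and then your certificate --- the circulant square root $\mat{B}$ with eigenvalues $\sqrt{(\vect{q})_p}$ --- can fail to be entrywise nonnegative even though $K \leq 3/2$. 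Concretely, take $n$ even, $n>4$, and $(\vect{q})_0 = 1$, $(\vect{q})_p = 1/n$ for all odd $p$, $(\vect{q})_p = 0$ otherwise. The resulting $\mat{Q}$ is circulant and feasible with $K = 3/2$: its entries are $\tfrac{3}{2n}$ on the diagonal, $\tfrac{1}{2n}$ on the $n/2$-diagonal, and $\tfrac{1}{n}$ elsewhere, with row sums $1$ and eigenvalues in $\{1, 1/n, 0\}$; it is moreover optimal for the circulant $\mat{D}$ whose eigenvalues are $1$ on odd modes and $0$ elsewhere. Yet the eigenvalue-wise square root has $n/2$-offset entry $\tfrac{1}{n}\bigl(1 - \tfrac{\sqrt{n}}{2}\bigr) < 0$, so your construction produces no CP certificate for a matrix that the proposition still covers. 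In short: your factorization is a nice, self-contained proof for the ring dataset itself (there I agree the solution is supported on $\{0,1,n-1\}$ for $K \leq 2$ and the square root is nonnegative exactly up to $K=3/2$), but proving the proposition in its stated generality requires a criterion of the So--Ge type rather than a mode-specific square root. A minor additional point: the minimum of $b_k = \tfrac{1}{n}\bigl(1 + 2\sqrt{t}\cos(2\pi k/n)\bigr)$ equals $\tfrac{1}{n}(1-2\sqrt{t})$ only when $n$ is even; for odd $n$ it is merely a lower bound, which is all you need for sufficiency.
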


\begin{proof}
	For $K \leq 3/2$, plugging the constraint $\mat{Q}_* \vect{1} = \vect{1}$ into Corollary 2.6 in \citep[][p.~7]{So2013} gives the desired result.
	
	Let us address $K \geq \tfrac{n}{2}$.
	\citet{Kaykobad1987} proved that every diagonally dominant matrix $\mat{A}$, i.e.,
	$|(\mat{A})_{ii}| \geq \sum_{j \neq i} |(\mat{A})_{ij}|$ for all $i$, is a CP matrix.
	We have to prove then that $\mat{Q}_* \geq \mat{0}$ is diagonally dominant.
	We have $\traceone{\mat{Q}_*} = K$ and, since $\mat{Q}_*$ is circulant, all $(\mat{Q}_*)_{ii}$ have the same value. Then, $(\mat{Q}_*)_{ii} = K/n$.
	From $\mat{Q}_* \vect{1} = \vect{1}$,
	$\sum_{j \neq i} (\mat{Q}_*)_{ij} = 1 - (\mat{Q}_*)_{ii} = 1 - K/n$.
	Hence, $\mat{Q}_*$ is diagonally dominant for $K \geq \tfrac{n}{2}$.
\end{proof}

\section{Additional results}
\label{sec:additional_multilayer_results}

We include additional results of the multi-layer NOMAD algorithm using different values of $k$ in each layer.

\begin{figure}
    \includegraphics[width=.618\linewidth]{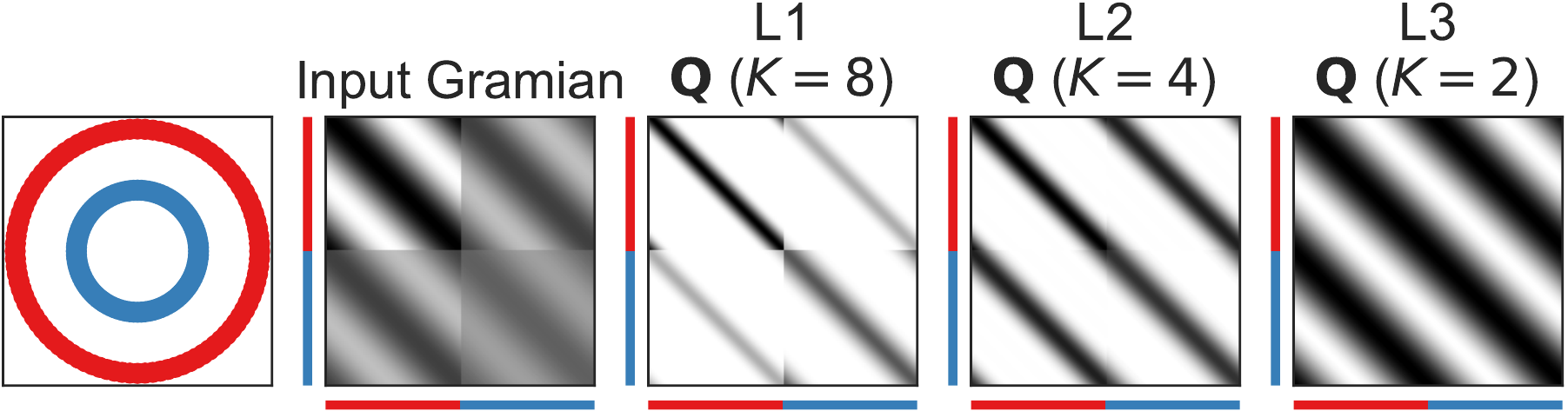}%
    \\[2pt]
    \includegraphics[width=.618\linewidth]{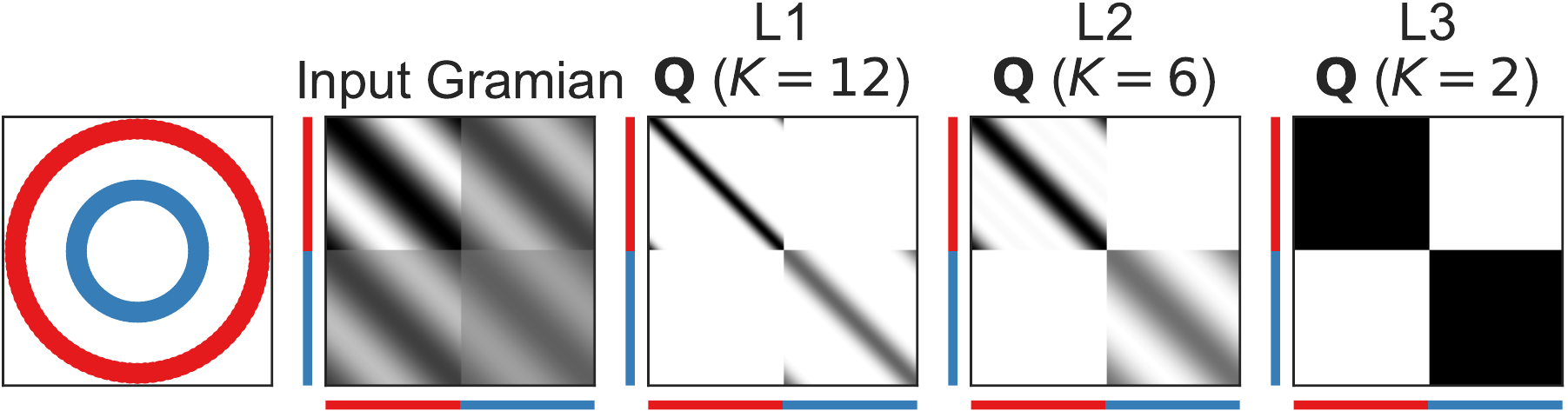}%
    \\[2pt]
	\includegraphics[width=.618\linewidth]{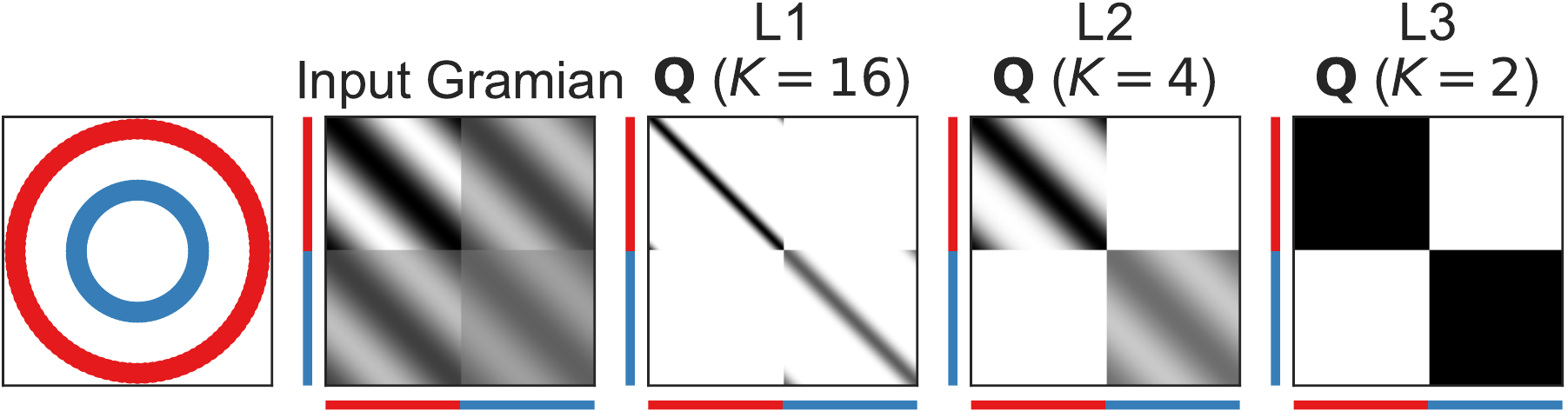}%
    \\[2pt]
	\includegraphics[width=.744\linewidth]{multilayer/circles-16-8-4-2}%
    \\[2pt]
    \includegraphics[width=\linewidth]{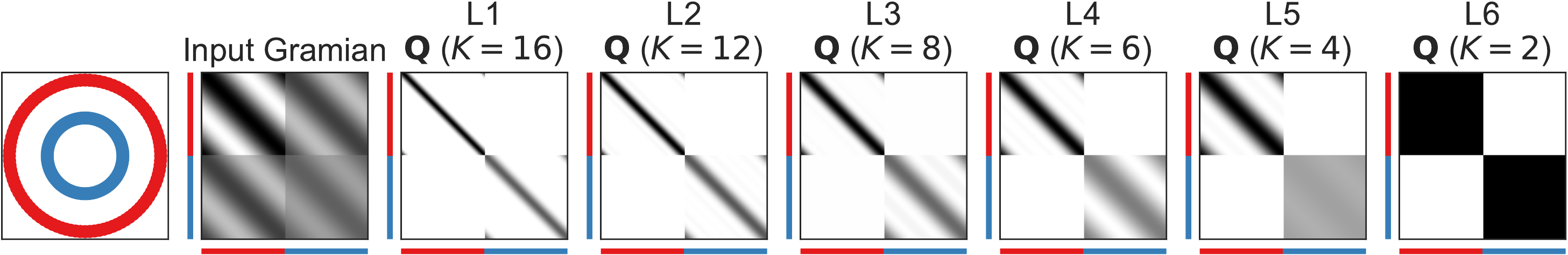}%
    \\[2pt]
	\includegraphics[width=.873\linewidth]{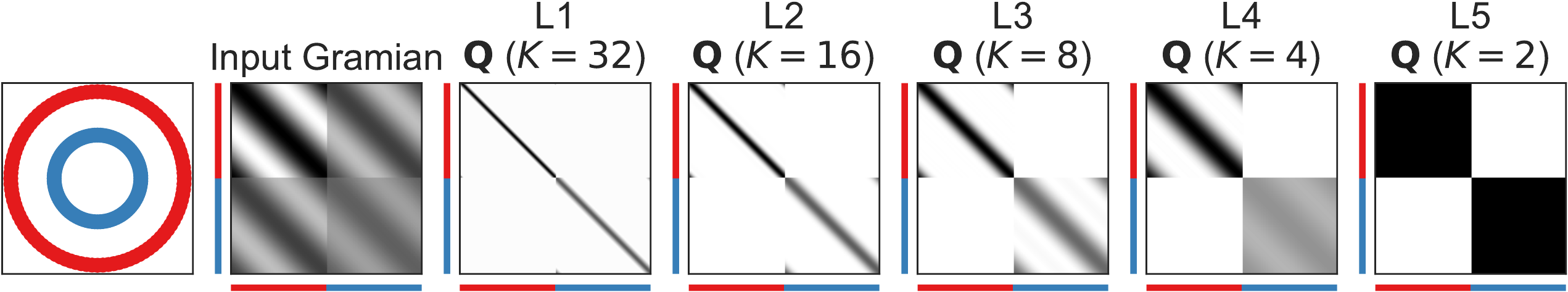}%
    \\[2pt]

	\caption{Additional results of multi-layer NOMAD.}
    \label{fig:additional_multilayer_rings}
\end{figure}

\begin{figure}
    \includegraphics[width=.618\linewidth]{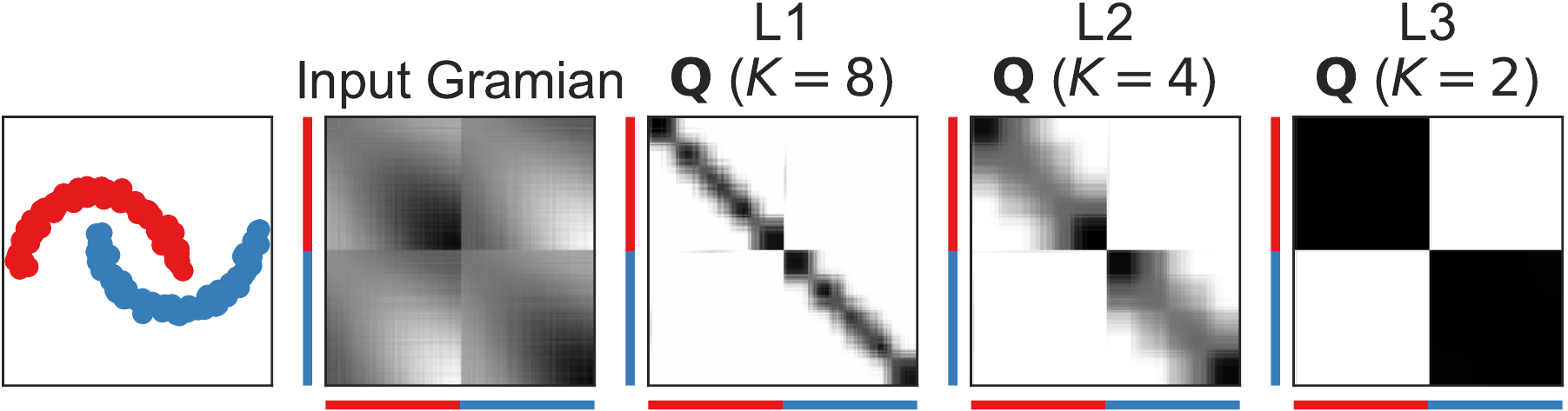}%
    \\[2pt]
    \includegraphics[width=.618\linewidth]{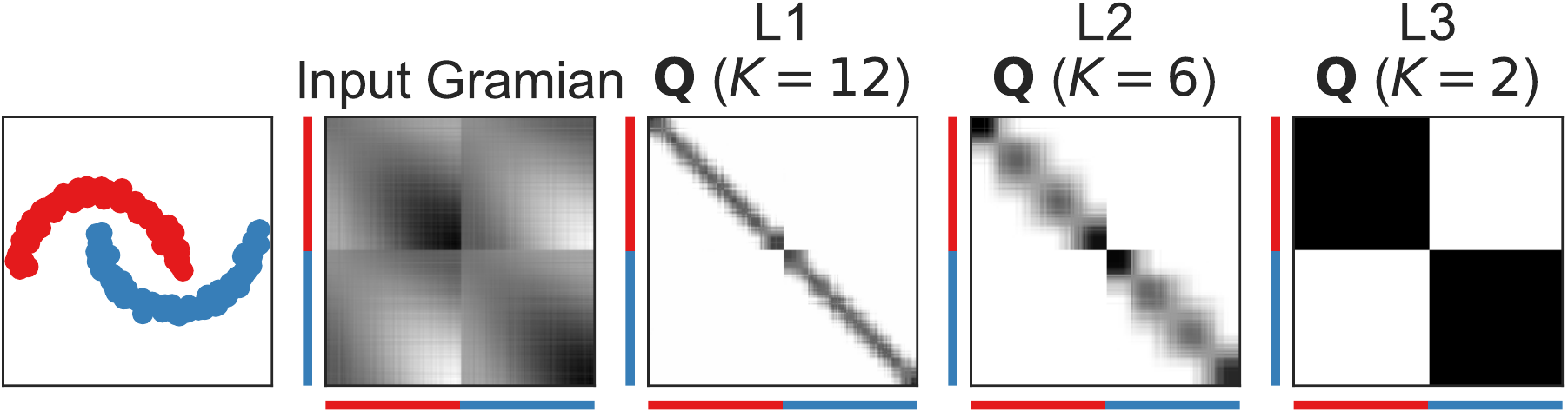}%
    \\[2pt]
	\includegraphics[width=.618\linewidth]{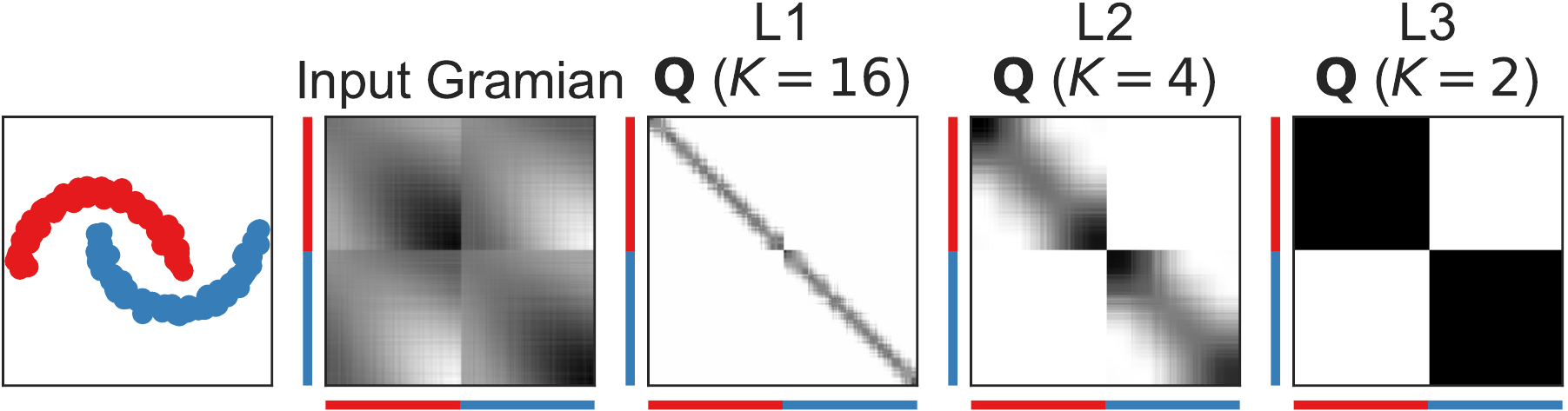}%
    \\[2pt]
    \includegraphics[width=\linewidth]{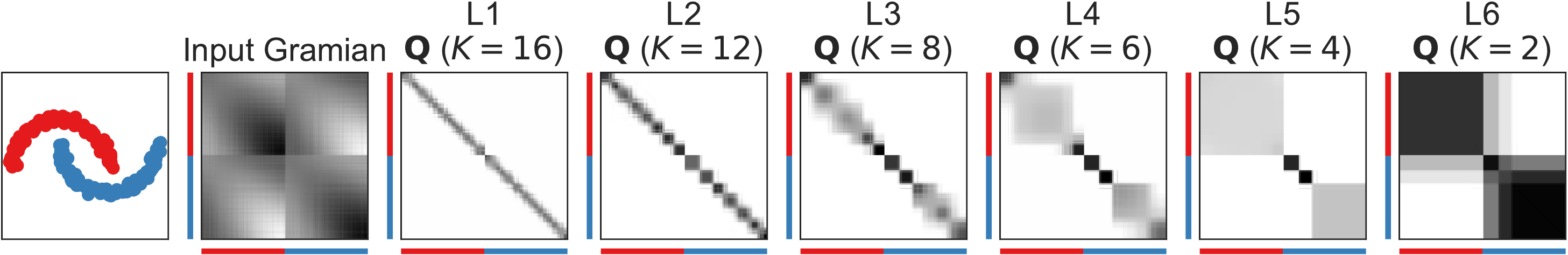}%
    \\[2pt]
	\includegraphics[width=.873\linewidth]{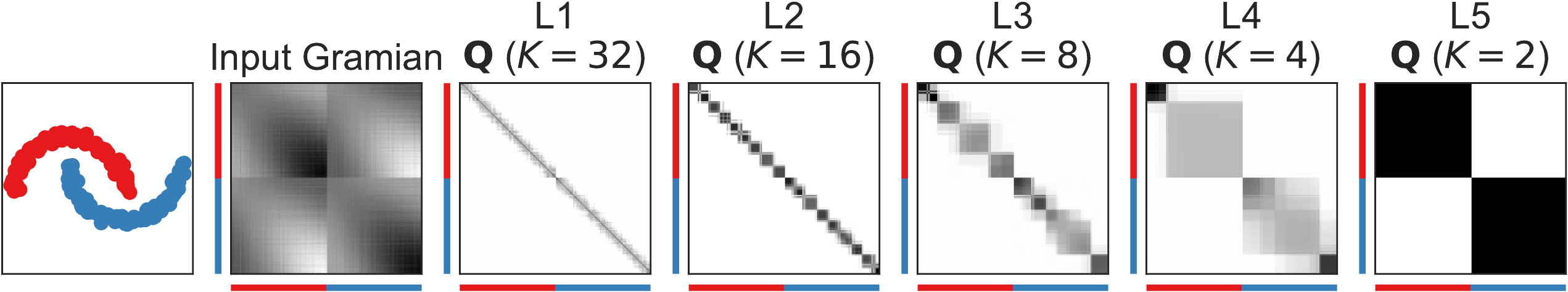}%

	\caption{Additional results of multi-layer NOMAD.}
    \label{fig:additional_multilayer_moons}
\end{figure}

\begin{figure}
    \includegraphics[width=.618\linewidth]{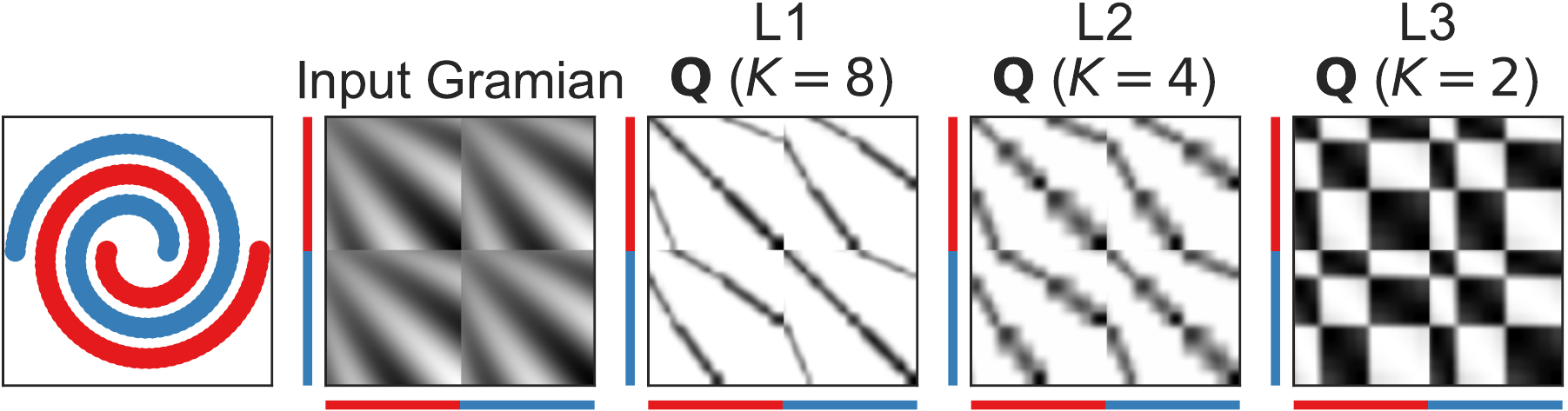}%
    \\[2pt]
    \includegraphics[width=.618\linewidth]{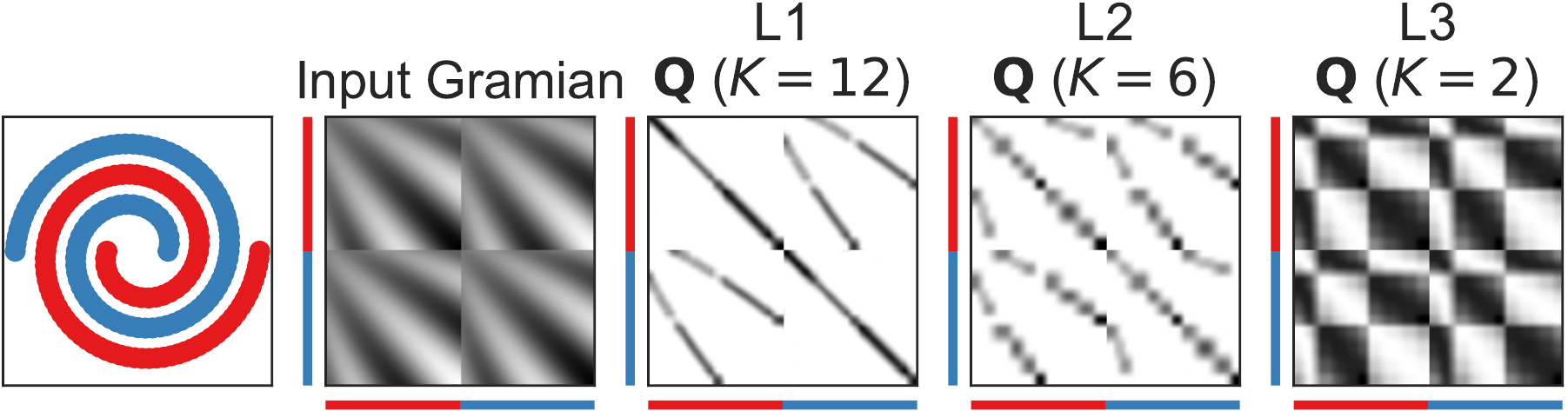}%
    \\[2pt]
	\includegraphics[width=.618\linewidth]{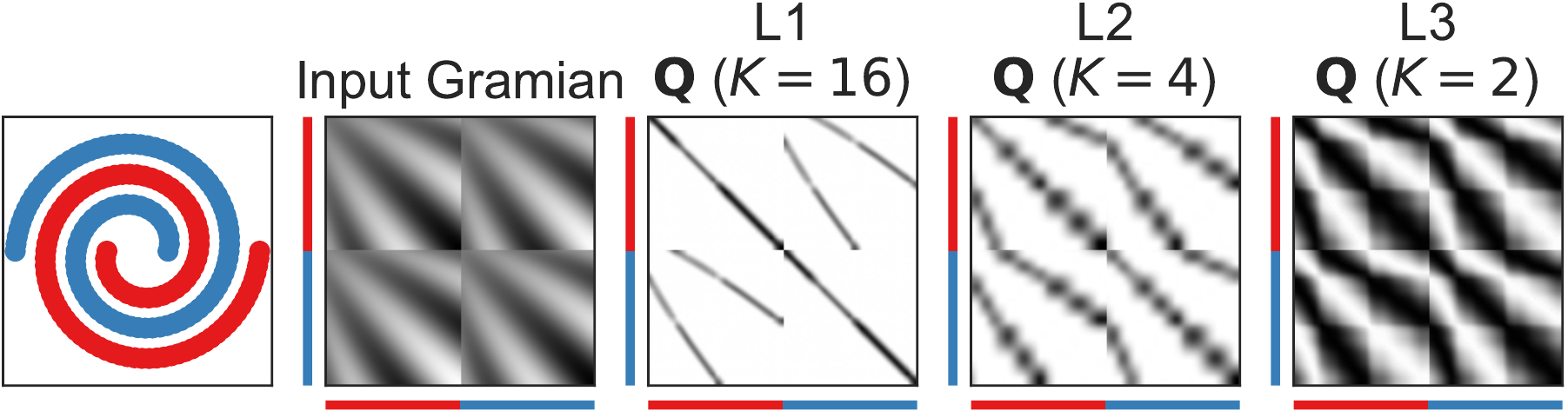}%
    \\[2pt]
    \includegraphics[width=\linewidth]{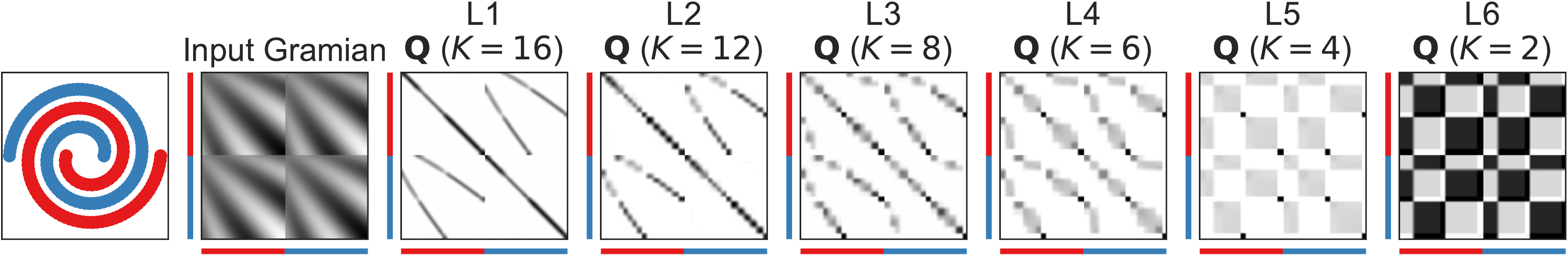}%
    \\[2pt]
	\includegraphics[width=.873\linewidth]{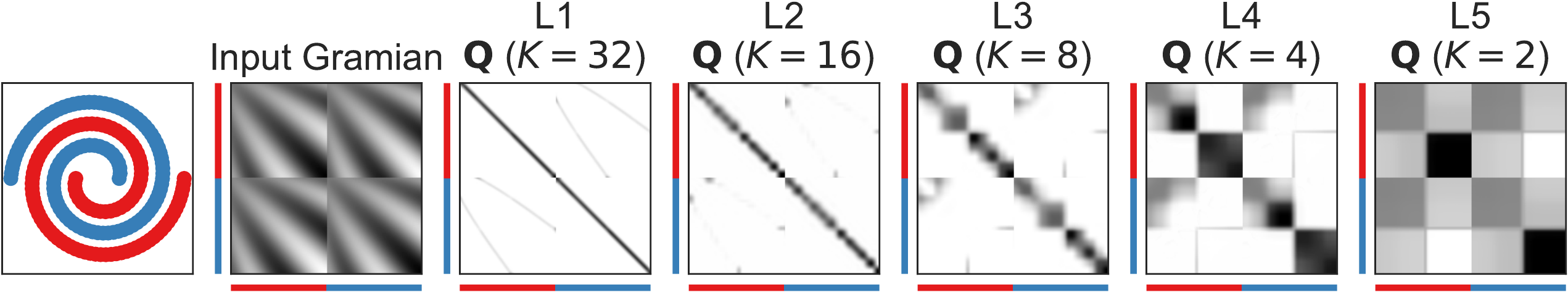}%
    \\[2pt]
	\includegraphics[width=.744\linewidth]{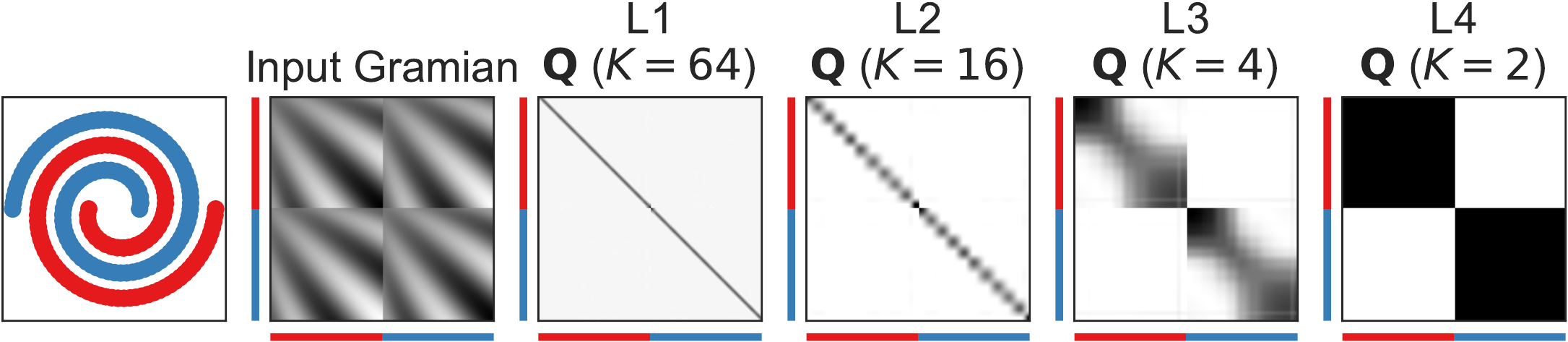}%
    \\[2pt]
	\includegraphics[width=.744\linewidth]{multilayer/double_swiss_roll-64-16-4-2}%
    \\[2pt]
	\includegraphics[width=\linewidth]{multilayer/double_swiss_roll-64-32-16-8-4-2}%

	\caption{Additional results of multi-layer NOMAD.}
    \label{fig:additional_multilayer_double_swiss_roll}
\end{figure}

\section{Symmetric NMF}
\label{sec:snmf}

In this section, we present the algorithm used to compute the symmetric NMF  of a matrix $\mat{A} \in \Real^{n \times n}$, defined as
\begin{equation}
	\min_{\mat{Y} \in \Real^{n \times r}}
	\norm{\mat{A} - \mat{Y} \transpose{\mat{Y}}}{F}^2
	\quad\text{s.t.}\quad
	\mat{Y} \geq \mat{0}.
	\tag{SNMF}
	\label[problem]{eq:snmf}
\end{equation}
We use the alternating direction method of multipliers (ADMM) to solve it.
In short, ADMM solves convex optimization problems by breaking them into smaller subproblems, which are individually easier to handle. It has also been extended to handle non-convex problems, e.g., to solve several flavors of NMF~\citep{Fevotte2011,xu2012nmf,Tepper2014consensus,tepper2016compressed}.

\cref{eq:snmf} can be equivalently re-formulated as
\begin{equation}
	\min_{\mat{Y} \in \Real^{n \times r}}
	\norm{\mat{A} - \mat{Y} \transpose{\mat{X}}}{F}^2
	\quad\text{s.t.}\quad
	\mat{Y} = \mat{X},\
	\mat{Y} \geq \mat{0},\ \mat{X} \geq \mat{0} ,
\end{equation}
and we consider its augmented Lagrangian,
\begin{equation}
	\mathscr{L} \left( \mat{X}, \mat{Y}, \mat{\Gamma} \right) =
	\tfrac{1}{2} \norm{\mat{A} - \mat{Y} \transpose{\mat{X}}}{F}^2
	+ \tfrac{\sigma}{2} \norm{\mat{Y} - \mat{X}}{F}^2
	- \traceone{ \transpose{\mat{\Gamma}} \left( \mat{Y} - \mat{X} \right)}
\end{equation}
where $\mat{\Gamma}$ is a Lagrange multiplier, $\sigma$ is a penalty parameter.

The ADMM algorithm works in a coordinate descent fashion, successively minimizing $\mathscr{L}$ with respect to $\mat{X}, \mat{Y}$, one at a time while fixing the other at its most recent value and then updating the multiplier $\mat{\Gamma}$.
For the problem at hand, these steps are
\begin{subequations}
\begin{align}
	\mat{X}^{(t+1)} &= \argmin_{\mat{X} \geq \mat{0}} \mathcal{L} \left( \mat{X}, \mat{X}^{(t)}, \mat{\Gamma}^{(t)} \right) ,\\
	\mat{Y}^{(t+1)} &= \argmin_{\mat{Y} \geq \mat{0}} \mathcal{L} \left( \mat{X}^{(t+1)}, \mat{Y}, \mat{\Gamma}^{(t)} \right) ,\\
	\mat{\Gamma}^{(t+1)} &= \mat{\Gamma}^{(t)} - \eta \sigma \left( \mat{X}^{(t+1)} - \mat{Y}^{(t+1)} \right) .
\end{align}
\end{subequations}
In our experiments, we fix $\eta$ and $\sigma$ to 1. We initialize the algorithm with a random matrix.

\section{Non-convex SDP solver}
\label{sec:burer-monteiro}

We follow the algorithm proposed in \cite{Kulis2007,Hou2015} to solve \cref{eq:sdp_kmeans_lowrank_fast}. Our approach has a small but fundamental difference: instead of setting $r = K$, we allow for $r \geq K$.
We define the augmented Lagrangian of \cref{eq:sdp_kmeans_lowrank_fast} as
\begin{equation}
	\begin{aligned}
		\mathcal{L} \left( \mat{Y}, \vect{\mu}, \lambda \right) =
		& -\traceone{\mat{D} \transpose{\mat{Y}} \mat{Y}}
		+ \tfrac{\sigma}{2} \norm{\transpose{\mat{Y}} \mat{Y} \vect{1} - \vect{1}}{2}^2
		- \transpose{\vect{\mu}} (\transpose{\mat{Y}} \mat{Y} \vect{1} - \vect{1}) \\
		& + \tfrac{\varphi}{2} (\traceone{\transpose{\mat{Y}} \mat{Y}} - K)^2
		- \vect{\lambda} (\traceone{\transpose{\mat{Y}} \mat{Y}} - K) ,
	\end{aligned}
\end{equation}
where $\vect{\mu}, \vect{\lambda}$ are Lagrange multipliers, $\sigma, \varphi$ are penalty parameters.
We obtain $\mat{Y}$ by running the steps
\begin{subequations}
\begin{align}
	\mat{Y}^{(t+1)} &= \argmin_{\mat{Y} \geq \mat{0}} \mathcal{L} \left( \mat{Y}, \vect{\mu}^{(t)}, \lambda^{(t)} \right) ,\\
	\vect{\mu}^{(t+1)} &= \vect{\mu}^{(t)} - \eta \sigma \left( \transpose{\mat{Y}} \mat{Y} \vect{1} - \vect{1} \right) ,\\
	\lambda^{(t+1)} &= \lambda^{(t)} - \eta \varphi \left( \traceone{\transpose{\mat{Y}} \mat{Y}} - K \right) .
\end{align}
\end{subequations}
This is a non-standard approach since the minimization over $\mat{Y}$ (the gradient $\partial \mathcal{L} / \partial \mat{Y}$ is given in \cite{Hou2015}) is a non-convex problem.
Although there are no guarantees about the convergence of the procedure, theoretical assurances for related problems have been presented in \cite{boumal2016non}.
To perform the minimization with respect to $\mat{Y}$, we use the L-BFGS-B algorithm \citep{Byrd1995} with bound constraints ($(\mat{Y})_{ij} \in [0, 1]$). Finally, the initialization to the overall iterative algorithm is done with symmetric nonnegative matrix factorization, see \cref{sec:snmf}.
In our experiments, we fix $\eta$, $\varphi$, and $\sigma$ to 1 and prenormalize $\mat{D}$ (dividing by its Frobenius norm).

\section{Proofs for the conditional gradient algorithm}
\label{sec:cgm_proofs}

This section closely follows the works of \citet{Hazan2008sdp} and \citet{Clarkson2010fw}.

\begin{lemma}
	The dual objective of
    \begin{equation}
		\max_{\mat{Z}}
    	f(\mat{Z})
		\quad\text{s.t.}\quad
    	\mat{Z} \vect{b} = \vect{0} ,
    	\quad
		\traceone{\mat{Z}} = k - 1 ,
   		\quad
		\mat{Z} \succeq \mat{0}
	\end{equation}
    is
    \begin{equation}
    	\min_{\mat{Z}} w(\mat{Z}) ,
    \end{equation}    
    where
    \begin{align}
    	w(\mat{Z}) &= s \, \phi(\mat{Z}) + f(\mat{Z}) - \traceone{\mat{Z} \nabla f(\mat{Z})} ,
        \label{eq:generic_sdp_ortho_dual_w} \\
    	\phi(\mat{Z}) &=
        \max_{\substack{
            \norm{\vect{v}}{2} = 1\\
            \transpose{\vect{v}} \vect{b} = 0
        }}
        \transpose{\vect{v}} \nabla f(\mat{Z}) \vect{v} .
        \label{eq:generic_sdp_ortho_dual_phi}
    \end{align}
    \label[lemma]{thm:generic_sdp_ortho_dual}
\end{lemma}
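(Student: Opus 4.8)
The plan is to derive this as the Wolfe (linearization) dual, mirroring the arguments of \citet{Hazan2008sdp} and \citet{Clarkson2010fw} but carrying the orthogonality constraint $\mat{Z}\vect{b}=\vect{0}$ through to the eigenvector subproblem. Since $f$ is concave and differentiable, for any feasible $\mat{Z}$ and any $\mat{Y}\in\set{P}_s$ the tangent hyperplane lies above $f$, i.e. $f(\mat{Y}) \leq f(\mat{Z}) + \traceone{(\mat{Y}-\mat{Z})\nabla f(\mat{Z})}$. Taking the supremum of the right-hand side over $\mat{Y}\in\set{P}_s$ and writing $\mat{Z}^\star$ for the primal maximizer gives $f(\mat{Z}^\star) \leq f(\mat{Z}) - \traceone{\mat{Z}\nabla f(\mat{Z})} + \sup_{\mat{Y}\in\set{P}_s}\traceone{\mat{Y}\nabla f(\mat{Z})}$. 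It therefore suffices to evaluate the support function $\sup_{\mat{Y}\in\set{P}_s}\traceone{\mat{Y}\mat{G}}$ of $\set{P}_s$ for $\mat{G}=\nabla f(\mat{Z})$ and to show it equals $s\,\phi(\mat{Z})$; everything else is bookkeeping.

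The heart of the argument, and the step I expect to be the main obstacle, is this support-function computation. I would observe that $\set{P}_s$ is a compact convex set, being the intersection of the PSD cone with the affine slice $\traceone{\mat{Z}}=s$ and the linear subspace $\{\mat{Z}\vect{b}=\vect{0}\}$, so a linear functional over it attains its maximum at an extreme point. The key claim is that the extreme points of $\set{P}_s$ are exactly the scaled rank-one projectors $s\,\vect{v}\transpose{\vect{v}}$ with $\norm{\vect{v}}{2}=1$ and $\transpose{\vect{v}}\vect{b}=0$. This is where the orthogonality constraint enters: because $\mat{Z}\succeq 0$, the condition $\mat{Z}\vect{b}=\vect{0}$ is equivalent to requiring the entire range of $\mat{Z}$ to lie in $\vect{b}^{\perp}$, so $\set{P}_s$ is, up to the factor $s$, the spectrahedron of density matrices supported on $\vect{b}^{\perp}$, whose extreme points are the pure states $\vect{v}\transpose{\vect{v}}$ with $\vect{v}\in\vect{b}^{\perp}$. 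Evaluating $\traceone{\mat{Y}\mat{G}}$ at such an extreme point gives $s\,\transpose{\vect{v}}\mat{G}\vect{v}$, and maximizing over the admissible $\vect{v}$ reproduces exactly $s\,\phi(\mat{Z})$ as defined in \cref{eq:generic_sdp_ortho_dual_phi}.

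Substituting this back yields $f(\mat{Z}^\star) \leq s\,\phi(\mat{Z}) + f(\mat{Z}) - \traceone{\mat{Z}\nabla f(\mat{Z})} = w(\mat{Z})$ for every feasible $\mat{Z}$, which is weak duality and already gives $\min_{\mat{Z}} w(\mat{Z}) \geq f(\mat{Z}^\star)$. To close the gap I would evaluate $w$ at $\mat{Z}^\star$ itself and invoke the first-order optimality condition for maximizing a concave function over the convex set $\set{P}_s$, namely $\traceone{(\mat{Y}-\mat{Z}^\star)\nabla f(\mat{Z}^\star)} \leq 0$ for all $\mat{Y}\in\set{P}_s$. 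This condition says precisely that the support function is attained at $\mat{Z}^\star$, i.e. $s\,\phi(\mat{Z}^\star) = \traceone{\mat{Z}^\star\nabla f(\mat{Z}^\star)}$, whence $w(\mat{Z}^\star)=f(\mat{Z}^\star)$ and the dual bound is tight. The only routine care needed is to confirm that $\set{P}_s$ is nonempty and bounded so the suprema are attained; granting that, the identity $\min_{\mat{Z}} w(\mat{Z}) = \max f$ follows, establishing the claimed dual.
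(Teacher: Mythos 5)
Your proof is correct, but it takes a genuinely different route from the paper's. The paper derives the dual by Lagrangian machinery: it introduces multipliers $\mat{\Psi} \succeq \mat{0}$, $\psi \in \Real$, $\vect{y} \in \Real^n$ for the three constraints, uses stationarity to write $\mat{\Psi} = -\nabla f(\mat{Z}) + \psi \mat{I} + \vect{y} \transpose{\vect{b}}$, argues that $\mat{\Psi} \succeq \mat{0}$ forces $\vect{y} = c \vect{b}$ (really this rests on symmetry of $\mat{\Psi}$, since $\vect{y}\transpose{\vect{b}}$ must be symmetric), and then extracts $\psi \geq \phi(\mat{Z})$ from the semidefiniteness of $\mat{\Psi}$ via a limit argument ($d \rightarrow +\infty$) that shifts the eigenvalue along $\vect{b}$ out of the maximum, before setting $\psi = \phi(\mat{Z})$ ``without loss of generality.'' You instead construct the Wolfe dual directly from the gradient inequality for concave $f$ and reduce the whole lemma to computing the support function of $\set{P}_s$, which you evaluate by identifying the extreme points of $\set{P}_s$ as the scaled rank-one projectors $s\, \vect{v} \transpose{\vect{v}}$ with $\vect{v}$ unit and $\transpose{\vect{v}}\vect{b} = 0$ (using that for $\mat{Z} \succeq \mat{0}$ the condition $\mat{Z}\vect{b} = \vect{0}$ confines the range of $\mat{Z}$ to the orthogonal complement of $\vect{b}$); tightness then follows from the first-order optimality condition at $\mat{Z}^{\star}$. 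Your route buys several things: it sidesteps the informal steps in the paper's derivation, it directly yields exactly the two facts the convergence proof of \cref{theo:accuracy} consumes (weak duality $w(\mat{Z}) \geq f(\mat{Z}^{\star})$ and $w(\mat{Z}^{\star}) = f(\mat{Z}^{\star})$), and the extreme-point computation simultaneously justifies why the rank-one update in \cref{algo:cgm_generic_sdp_ortho} solves the linearized subproblem over $\set{P}_s$. One small repair: you assert weak duality only ``for every feasible $\mat{Z}$,'' whereas the lemma's minimum is unconstrained; since the concavity inequality $f(\mat{Y}) \leq f(\mat{Z}) + \traceone{(\mat{Y} - \mat{Z}) \nabla f(\mat{Z})}$ holds for arbitrary $\mat{Z}$ in the domain of $f$, not just feasible ones, your argument extends verbatim to give $w(\mat{Z}) \geq f(\mat{Z}^{\star})$ for all $\mat{Z}$, which is what the statement requires.
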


\proof{
	The Lagrangian relaxation of \cref{eq:generic_sdp_ortho} is
    \begin{equation}
    \begin{aligned}
    	\ell = -\max_{\mat{\Psi}, \psi, \vect{y}}
        \min_{\mat{Z}}
        &\ -f(\mat{Z}) - \traceone{\mat{\Psi} \mat{Z}} \\
        &+ \psi (\traceone{\mat{Z}} - s) + \transpose{\vect{y}} \mat{Z} \vect{b} ,
        \label{eq:generic_sdp_ortho_lagrangian}
    \end{aligned}
    \end{equation}
    where $\mat{\Psi} \succeq \mat{0}$, $\psi \in \Real$, and $\vect{y} \in \Real^n$ are Lagrange multipliers.
    Differentiating and equating to zero we get
    $\mat{\Psi} = -\nabla f(\mat{Z}) + \psi \mat{I} + \vect{y} \transpose{\vect{b}}$.
    Note that $\mat{\Psi} \succeq \mat{0}$ implies that $\vect{y} = c \vect{b}$ for some $c \in \Real$, yielding
    \begin{equation}
    	\mat{\Psi} = -\nabla f(\mat{Z}) + \psi \mat{I} + c \vect{b} \transpose{\vect{b}} \succeq \mat{0} .
        \label{eq:generic_sdp_ortho_U}
    \end{equation}    
    Plugging \cref{eq:generic_sdp_ortho_U} and $\vect{y} = c \vect{b}$ in \cref{eq:generic_sdp_ortho_lagrangian} we get 
    \begin{align}
    	\phantom{\ell}
        &\begin{aligned}
    		\mathllap{\ell}
        	=
			\max_{\psi}
          \min_{\mat{Z}}
          \ &f(\mat{Z}) - \traceone{\left( \nabla f(\mat{Z}) + \psi \mat{I} + c \vect{b} \transpose{\vect{b}} \right) \mat{Z}} \\
          &+ \psi (\traceone{\mat{Z}} - s) + c \transpose{\vect{b}} \mat{Z} \vect{b} \\
        \end{aligned}
        \\
        &= \max_{\psi \in \Real}
        \min_{\mat{Z}}
        f(\mat{Z}) - \traceone{\nabla f(\mat{Z}) \ \mat{Z}} + \psi s \\
        &= \max_{\substack{
        	\mat{Z} \\
            \psi \in \Real
        }}
        f(\mat{Z}) - \traceone{\nabla f(\mat{Z}) \ \mat{Z}} - \psi s .
    \end{align}
    %where the last equality amounts to noticing that under the constraint in \cref{eq:generic_sdp_ortho_U}, $\mat{Z}$ is uniquely determined.
    
    From \cref{eq:generic_sdp_ortho_U}, $\psi \mat{I} \succeq \nabla f(\mat{Z}) - c \vect{b} \transpose{\vect{b}}$, implying
    \begin{subequations}
    \begin{align}
    	 \psi
         &\geq \lambda_{\text{max}} \left\{ \nabla f(\mat{Z}) - c \vect{b} \transpose{\vect{b}} \right\} \\
         &\geq \lambda_{\text{max}} \left\{ \nabla f(\mat{Z}) - d \vect{b} \transpose{\vect{b}} \right\}
         \quad \forall d > c \\
         &\geq \phi(\mat{Z}).
    \end{align}    
    \end{subequations}
    The last inequality comes from taking $d \rightarrow +\infty$, thus shifting the eigenvalue associated with $\vect{b}$ (should there be one) away from the maximum.
    Without loss of generality, we set $\psi = \phi(\mat{Z})$, finally obtaining $\ell = \min_{\mat{Z}} w(\mat{Z})$.
}

\begin{proposition}
Let $\mat{X}, \mat{Z} \in \set{P}_s$ and $\mat{Y} = \mat{X} + \alpha (\mat{Z} - \mat{X})$ and $\alpha \in \Real$.
    The curvature constant of $f$ is
	\begin{equation}
	C_f \defeq
    \sup_{\mat{X}, \mat{Z}, \alpha}
    \tfrac{1}{\alpha^2}
        [ f(\mat{X}) - f(\mat{Y}) + \tracetwo{(\mat{Y} - \mat{X})}{\nabla f(\mat{X})} ] .
	\end{equation}    
	Let $\mat{Z}^{\star}$ be the solution to 
    \begin{equation}
		\max_{\mat{Z}}
    	f(\mat{Z})
		\quad\text{s.t.}\quad
    	\mat{Z} \vect{b} = \vect{0} ,
    	\quad
		\traceone{\mat{Z}} = k - 1 ,
   		\quad
		\mat{Z} \succeq \mat{0} .
	\end{equation}
	The iterates $\mat{Z}_t$ of \cref{algo:cgm_generic_sdp_ortho} satisfy for all $t > 1$
    \begin{equation}
    	f(\mat{Z}^{\star}) - f(\mat{Z}_t) \leq  \tfrac{8 C_f}{t+2} .
    \end{equation}
\end{proposition}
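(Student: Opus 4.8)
The plan is to adapt the classical convergence analysis of the Frank--Wolfe/conditional-gradient method (following \citet{Hazan2008sdp} and \citet{Clarkson2010fw}) to the feasible set $\set{P}_s$. The three ingredients are: the linear-maximization oracle over $\set{P}_s$, a Frank--Wolfe duality gap that upper-bounds the primal suboptimality, and the curvature constant $C_f$ of \cref{eq:curvature_constant}, which controls the second-order error incurred when $f$ is replaced by its linearization along a step.

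First, I would identify the direction produced by \cref{algo:cgm_generic_sdp_ortho} as the linear-maximization oracle over $\set{P}_s$. The extreme rays of $\set{P}_s$ are the rank-one matrices $s \vect{v} \transpose{\vect{v}}$ with $\norm{\vect{v}}{2} = 1$ and $\transpose{\vect{v}} \vect{b} = 0$, so $\max_{\mat{S} \in \set{P}_s} \tracetwo{\mat{S}}{\nabla f(\mat{Z}_t)} = s\,\phi(\mat{Z}_t) = \tracetwo{s \vect{v}_t \transpose{\vect{v}_t}}{\nabla f(\mat{Z}_t)}$, with $\phi$ as in \cref{thm:generic_sdp_ortho_dual} and $\vect{v}_t$ the orthogonality-constrained top eigenvector chosen by the algorithm. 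This lets me define the gap $g(\mat{Z}_t) = \tracetwo{s \vect{v}_t \transpose{\vect{v}_t} - \mat{Z}_t}{\nabla f(\mat{Z}_t)} = w(\mat{Z}_t) - f(\mat{Z}_t)$, and weak duality from \cref{thm:generic_sdp_ortho_dual} together with concavity of $f$ yields $h_t \defeq f(\mat{Z}^{\star}) - f(\mat{Z}_t) \leq g(\mat{Z}_t)$.

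Next I would establish the per-iteration improvement. Applying the definition of $C_f$ with $\mat{X} = \mat{Z}_t$, $\mat{Z} = s \vect{v}_t \transpose{\vect{v}_t}$, and $\mat{Y} = \mat{Z}_{t+1} = \mat{Z}_t + \alpha(s\vect{v}_t\transpose{\vect{v}_t} - \mat{Z}_t)$ gives $f(\mat{Z}_{t+1}) \geq f(\mat{Z}_t) + \alpha\, g(\mat{Z}_t) - C_f \alpha^2 \geq f(\mat{Z}_t) + \alpha\, h_t - C_f \alpha^2$, and subtracting from $f(\mat{Z}^{\star})$ produces the recursion $h_{t+1} \leq (1 - \alpha) h_t + C_f \alpha^2$. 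Substituting $\alpha = 2/(t+2)$, the remaining work is a routine induction: assuming $h_t \leq 8 C_f / (t+2)$, the recursion gives $h_{t+1} \leq C_f(8t+4)/(t+2)^2$, and the elementary inequality $(8t+4)(t+3) \leq 8(t+2)^2$ (equivalently $4t+20 \geq 0$) upgrades this to $h_{t+1} \leq 8 C_f/(t+3)$, closing the induction.

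The hard part will be the bookkeeping around the initialization. Since \cref{algo:cgm_generic_sdp_ortho} starts at $\mat{Z}_0 = \mat{0} \notin \set{P}_s$, the curvature bound cannot be invoked with $\mat{X} = \mat{Z}_0$, so the per-iteration argument only becomes valid once the first step (with $\alpha_0 = 1$) lands $\mat{Z}_1 = s\vect{v}_0\transpose{\vect{v}_0}$ inside $\set{P}_s$. This is exactly why the claim is restricted to $t > 1$, and the generous constant $8$ (rather than the sharp $4$ available for an interior start) is what absorbs this offset; I would therefore anchor the base case at $t = 2$, bounding $h_1$ directly from its feasibility and verifying $h_2 \leq 2C_f$ from the recursion before invoking the inductive step above. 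A secondary point meriting care is confirming that folding the constraint $\transpose{\vect{v}} \vect{b} = 0$ into the eigenvector computation does not compromise optimality of the oracle, which follows from the same extreme-ray characterization of $\set{P}_s$ used in the first step.
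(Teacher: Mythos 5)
Your main line is the same as the paper's: weak duality from \cref{thm:generic_sdp_ortho_dual} gives $h_t \defeq f(\mat{Z}^{\star}) - f(\mat{Z}_t) \leq w(\mat{Z}_t) - f(\mat{Z}_t)$, this difference equals the Frank--Wolfe gap $\traceone{(s \vect{v}_t \transpose{\vect{v}_t} - \mat{Z}_t)\, \nabla f(\mat{Z}_t)}$, the curvature constant converts it into the recursion $h_{t+1} \leq (1-\alpha_t) h_t + C_f \alpha_t^2$, and your induction with $\alpha_t = 2/(t+2)$ is arithmetically correct; the paper runs exactly this induction after normalizing $h$ by $4 C_f$.

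The genuine gap is in the base case, the very step you flag as ``the hard part''. Your proposed repair --- ``bounding $h_1$ directly from its feasibility'' --- cannot work: $C_f$ constrains $f$ only along chords of $\set{P}_s$, whereas $\mat{Z}_1 = s \vect{v}_0 \transpose{\vect{v}_0}$ is selected by $\nabla f(\mat{0})$, a quantity evaluated off the affine hull of $\set{P}_s$ that $C_f$ does not see, and concavity does not link the two at any fixed ratio. Concretely, take $n = 3$, $\vect{b} = \vect{e}_3$, and $f(\mat{Z}) = \mu (\mat{Z})_{11} - \mu\, m\bigl( \tfrac{s}{2} - (\mat{Z})_{11} - 3 (\mat{Z})_{22} \bigr)$ with $m$ a smooth convex surrogate of $\max\{0,\cdot\}$: on $\set{P}_s$ the argument of $m$ is at most $-s/2$, so $f$ is linear there up to an arbitrarily small error, and the curvature the proof actually invokes (chords of $\set{P}_s$, $\alpha = \alpha_t \in (0,1]$) is arbitrarily small; yet $\nabla f(\mat{0})$ has diagonal entries approximately $(2\mu, 3\mu, 0)$, so the first iterate is the worst vertex $s \vect{e}_2 \transpose{\vect{e}_2}$, giving $h_1 \approx \mu s$ and then $h_2 \approx \mu s/3$, which violates $h_2 \leq 2 C_f$; no anchoring of the induction at $t=2$ can survive this. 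For the same reason your side claim is a misattribution: the constant $8$ has nothing to do with the infeasible start (no constant multiple of $C_f$ can absorb an $h_1$ that is not bounded by any multiple of $C_f$); it is slack the paper inherits by relaxing $\alpha_t^2/4$ to $\alpha_t^2/2$ in its recursion --- Hazan's room for inexact eigenvectors --- and with exact eigenvectors plus a legitimate base case your induction closes with the constant $4$. What actually closes the proof, and what the paper does tacitly when it asserts that the first iteration (with $\alpha = 1$) yields normalized error at most $1/4$, is to apply the curvature bound along the segment from $\mat{Z}_0 = \mat{0}$ to $\mat{Z}_1$, i.e., to read the supremum in \cref{eq:curvature_constant} as ranging over $\operatorname{conv}(\set{P}_s \cup \{ \mat{0} \})$ (finite for the smooth augmented Lagrangian that \cref{algo:cgm_sdp-km} optimizes); with that reading the $\alpha = 1$ step gives $h_1 \leq C_f$ independently of the start, and the remainder of your argument goes through verbatim.
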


\proof{
    Let
    \begin{equation}
    	h(\mat{Z}) = \tfrac{1}{4 C_f} [ f(\mat{Z}^{\star}) - f(\mat{Z}) ] .
    \end{equation}
    Proving that $h(\mat{Z}_{t}) \leq \tfrac{2}{t + 2}$ yields the desired result.
	From \cref{thm:generic_sdp_ortho_dual}, we have
    \begin{equation}
    	w(\mat{Z}) \geq w(\mat{Z}^{\star}) \geq f(\mat{Z}^{\star}) \geq f(\mat{Z}) .
        \label{eq:weak_duality}
    \end{equation}
	By \cref{eq:generic_sdp_ortho_dual_w,eq:generic_sdp_ortho_dual_phi}, we have
    \begin{align}
    	\transpose{\vect{v}}_t \nabla f(\mat{Z}_t) \vect{v}_t
        &= \phi(\mat{Z}) \\
        &= w(\mat{Z}_t) - f(\mat{Z}_t) + \traceone{\mat{Z} \ \nabla f(\mat{Z}_t)} .
    \end{align}
    Therefore,
    \begin{equation}
    	\traceone{(\mat{H}_t - \mat{Z}_t) \ \nabla f(\mat{Z}_t)} = w(\mat{Z}_t) - f(\mat{Z}_t) .
        \label{eq:w_minus_f}
    \end{equation}
    Now, using \cref{eq:curvature_constant}
    \begin{subequations}
    \begin{align}
    	f(\mat{Z}_{t+1})
        &= f(\mat{Z}_{t} + \alpha_t ( \mat{H}_t - \mat{Z}_{t} ) ) \\
        &\geq f(\mat{Z}_{t}) + \alpha_t \traceone{( \mat{H}_t - \mat{Z}_{t}) \ \nabla f(\mat{Z}_t) } + \alpha^2 C_f .
        \label{eq:f_inequality1}
    \end{align}
    \end{subequations}    
    Putting \cref{eq:f_inequality1} together with \cref{eq:weak_duality,eq:w_minus_f},
    \begin{subequations}
    \begin{align}
    	f(\mat{Z}_{t+1})
        &\geq f(\mat{Z}_{t}) + \alpha_t ( w(\mat{Z}_t) - f(\mat{Z}_t) ) - \alpha_t^2 C_f \\
        &\geq f(\mat{Z}_{t}) + \alpha_t ( f(\mat{Z}^{\star}) - f(\mat{Z}_t) ) - \alpha_t^2 C_f \\
        %
%         &= f(\mat{Z}_{t}) + 4 \alpha_t C_f g(\mat{Z}_{t}) - \alpha_t^2 C_f \\
        &= f(\mat{Z}_{t}) + 4 \alpha_t C_f h(\mat{Z}_{t}) - \alpha_t^2 C_f .
        \label{eq:f_inequality}
    \end{align}
    \end{subequations}
    From the definition of $h(\mat{Z}_{t})$, this implies
    \begin{equation}
    	h(\mat{Z}_{t+1}) \leq h(\mat{Z}_{t}) - \alpha_t h(\mat{Z}_{t}) + \tfrac{\alpha_t^2}{4} .
    \end{equation}
%     Let $\epsilon_t = \tfrac{C_f}{t^2} \geq C_f h(\mat{Z}_{t})$; by this choice,
%     \begin{equation}
%     	h(\mat{Z}_{t+1}) \geq h(\mat{Z}_{t}) - \alpha_t h(\mat{Z}_{t}) + \tfrac{\alpha_t^2}{2}.
%     \end{equation}
    Finally, we prove inductively that $h(\mat{Z}_{t}) \leq \tfrac{2}{t + 2}$. In the first iteration, $\alpha_1 = 1$ and $h(\mat{Z}_{2}) \leq \tfrac{1}{4}$. By taking $\alpha = \tfrac{2}{t + 2}$, we have
    \begin{subequations}
    \begin{align}
    	h(\mat{Z}_{t+1})
        &\leq h(\mat{Z}_{t}) - \alpha_t h(\mat{Z}_{t}) + \tfrac{\alpha_t^2}{4} \\
        &\leq (1 - \alpha_t) h(\mat{Z}_{t}) + \tfrac{\alpha_t^2}{2} \\
        &= (1 - \tfrac{2}{t + 2}) \tfrac{2}{t + 2} + \tfrac{2}{(t + 2)^2} \leq \tfrac{2}{t + 3} .
    \end{align}
    \end{subequations}
}

\end{document}